\documentclass[a4paper,12pt]{article} 

\setlength{\textwidth}{18.2cm}
\setlength{\oddsidemargin}{-1.0cm}
\setlength{\evensidemargin}{-1.0cm}
\setlength{\textheight}{22.6cm}
\setlength{\footskip}{0.5cm}
\setlength{\topmargin}{-1.0cm}

\usepackage{times}
\usepackage{soul}
\usepackage{url}
\usepackage[hidelinks]{hyperref}
\usepackage[utf8]{inputenc}
\usepackage[small]{caption}
\usepackage{graphicx}
\usepackage{amsthm}
\usepackage{booktabs}
\usepackage{algorithm}
\usepackage{algorithmic}
\usepackage{subfig}
\usepackage{microtype}
\usepackage{xspace}
\usepackage{amsmath}

\usepackage[fixamsmath,disallowspaces]{mathtools}

\urlstyle{same}

\usepackage{microtype}

\usepackage{comment}
\usepackage{macros}
\usepackage{multicol}
\usepackage{multirow}

\newtheorem{theorem}{Theorem}
\newtheorem{proposition}[theorem]{Proposition}
\newtheorem{corollary}[theorem]{Corollary}
\newtheorem{example}[theorem]{Example}

\usepackage{enumerate}
\usepackage{xcolor}

\usepackage{tikz}
\usetikzlibrary{arrows}
\usepackage{todonotes}

\usepackage{array}
\newcolumntype{H}{>{\setbox0=\hbox\bgroup}c<{\egroup}@{}}

\newcommand{\SB}{\{}%
\newcommand{\SM}{\mid}%
\newcommand{\SE}{\}}%

\newcommand{\complexityClassFont}[1]{\ensuremath{\mathrm{#1}}}

\newcommand{\numberDotP}{\complexityClassFont{\#{\cdot}\Ptime}\xspace}
\newcommand{\numberDotCoNP}{\complexityClassFont{\#\cdot\co\NP}\xspace}

\usepackage{thmtools}
\usepackage{thm-restate}
\usepackage{cleveref}

\usepackage{apptools}
\usepackage{xpatch}
\makeatletter
\xpatchcmd{\thmt@restatable}
{\csname #2\@xa\endcsname\ifx\@nx#1\@nx\else[{#1}]\fi}
{\IfAppendix{\csname #2\@xa\endcsname}{\csname #2\@xa\endcsname\ifx\@nx#1\@nx\else[{#1}]\fi}}
{}{} 
\makeatother
\usepackage{authblk}

\usepackage{pgf,tikz}
\usetikzlibrary{fit,positioning,arrows,shapes, decorations,automata,%
	petri,%
	topaths,calc}
\tikzstyle{dashedarrow} = [-stealth',dashed]
\tikzstyle{tdnode} = [draw,rounded corners,top color=vertexTopColor,bottom color=vertexBottomColor,minimum size=1.5em]
\tikzstyle{stdnode} = [tdnode, font=\scriptsize]
\tikzstyle{stdnodecompact} = [stdnode, inner sep = 1.5pt, outer sep = 0.1pt]
\tikzstyle{stdnodetable} = [stdnode, inner sep = 1.5pt, outer sep = 0]
\tikzstyle{stdnodenum} = [minimum size=1.5em, font=\scriptsize]
\tikzstyle{tdedge} = [-,draw,thick]
\tikzstyle{tdlabel} = [draw=none, rectangle, fill=none, inner sep=0pt, font=\scriptsize]

\newcommand{\futuresketch}[1]{}
\newcommand{\poly}[0]{poly}

\DeclareMathOperator{\var}{\mathsf{var}}
\DeclareMathOperator{\matr}{\mathsf{matrix}}
\DeclareMathOperator{\tower}{\ensuremath{\mathsf{exp}}}

\DeclareMathOperator{\children}{\mathsf{chldn}}

\newcommand{\eqdef}{\ensuremath{\,\mathrel{\mathop:}=}}

\newcommand{\ptlong}{\ensuremath{\mathcal{X}}}
\newcommand{\ptshort}{\ensuremath{\mathcal{X}}}

\usepackage{xcolor}
\colorlet{vertexTopColor}{white}

\colorlet{vertexBottomColor}{black!10}

\newcommand{\cw}[1]{\mathsf{cw}(#1)}

\newcommand{\sinc}[1]{\mathcal{G}^d_i{(#1)}}

\allowdisplaybreaks

\DeclareMathOperator{\adef}{def}

\newcommand{\conf}{\mathsf{conf}}

\newcommand{\sigmai}{\mathsf{semiSt}}
\newcommand{\colors}{\mathsf{cols}}
\newcommand{\child}{\mathsf{child}}


\newcommand{\lefttriangle}{\hfill$\triangleleft$}
\AtEndEnvironment{claimproof}{\filledsquare}
\AtEndEnvironment{example}{\lefttriangle}

\usepackage{enumerate,enumitem}

\newcommand{\postrebuttal}[1]{\textcolor{black}{#1}}

\newcommand{\co}{\complexityClassFont{co}}

\DeclareMathOperator{\scw}{\mathsf{dcw}}
\DeclareMathOperator{\BigO}{\mathcal{O}}
\newcommand{\Card}[1]{\left|#1\right|}
\newcommand{\CCard}[1]{\|#1\|}
\newcommand{\ta}[1]{\ensuremath{2^{#1}}}

\newcommand{\col}{\ensuremath{\text{col}}}
\renewcommand{\phi}{\varphi}
\newcommand{\RNum}[1]{\uppercase\expandafter{\romannumeral #1\relax}}

\makeatletter
\providecommand*{\cupdot}{%
	\mathbin{%
		\mathpalette\@cupdot{}%
	}%
}
\newcommand*{\@cupdot}[2]{%
	\ooalign{%
		$\m@th#1\cup$\cr
		\hidewidth$\m@th#1\cdot$\hidewidth
	}%
}
\makeatother
\DeclareMathOperator{\dcup}{\cupdot}

\def\hy{\hbox{-}\nobreak\hskip0pt}

\begin{document} 

\title{Structure-Aware Encodings of Argumentation Properties for Clique-width\footnote{Authors are stated in reverse alphabetical order.}}

\author[1]{Yasir Mahmood\thanks{\texttt{yasir.mahmood@uni-paderborn.de}}}
\author[2]{Markus Hecher\thanks{\texttt{hecher@mit.edu}}}
\author[3]{Johanna Groven\thanks{\texttt{johanna.groven@liu.se}}}
\author[3]{Johannes K. Fichte\thanks{\texttt{johannes.fichte@liu.se}}}

\affil[1]{Data Science Group, Heinz Nixdorf Institute, Paderborn University, Germany}
\affil[2]{CNRS, UMR 8188, Centre de Recherche en Informatique de Lens (CRIL), University of Artois, France}
\affil[3]{Department of Computer and Information Science, Link\"oping University, Sweden}

\date{\vspace{-5ex}}  

\maketitle

\begin{abstract}
	Structural measures of graphs, such as treewidth, are central tools
	in 
	computational complexity resulting in efficient algorithms when
	exploiting the parameter.
	It is even known that modern SAT solvers work efficiently on
	instances of small treewidth.
	Since these solvers
	are widely applied, 
	research interests in compact 
	encodings into (Q)SAT for solving and to understand encoding
	limitations.
	%
	%
	%
	%
	Even more general is the graph parameter clique-width, which unlike
	treewidth can be small for dense graphs. 
	Although
	algorithms 
	are available for clique-width, little is known about 
	encodings.

	We initiate the quest to understand 
	encoding capabilities with clique-width by considering 
	\emph{abstract argumentation}, which is a robust 
	framework 
	for reasoning with conflicting
	arguments.  It is based on directed graphs and asks for
	computationally challenging properties, making it a natural
	candidate to study computational properties.
	We design novel reductions from argumentation problems to (Q)SAT.
	Our reductions linearly preserve the clique-width, resulting in
	directed decomposition-guided (DDG) reductions.
	We establish novel results for all argumentation semantics,
	including counting.
	Notably, the overhead caused by our DDG reductions cannot be
	significantly improved under reasonable assumptions. 
\end{abstract}

\section{Introduction}
Many problems in combinatorics, symbolic AI, and knowledge
representation and reasoning are computationally very
hard~\cite{EiterG93,Truszczynski11,Dvorak12a}.
%
In the literature, various structural restrictions have been
identified under which problems become
tractable~\cite{Niedermeier06,DowneyFellows13,CyganEtAl15}.
In theory, we are interested whether an efficient algorithm
exists~\cite{Courcelle90a,
	BorieParkerTovey92,FischerMakowskyRavve08,Courcelle18}.
In practice, we seek effective practical
algorithms~\cite{LampisMengelMitsou18,
	JarvisaloLehtonenNiskanen25}.
%
%
From both perspectives, we want to
understand the structural combinatorial core
of the problem. 
Structure preserving reductions to other problems 
support this understanding by its algorithmic and logic-based
definability character.
%
%
%
%
Important structural properties of input instances are for example
hierarchical graph decompositions, which are quite interesting for
algorithmic purposes and for solving problems in polynomial-time in
the input size and exponential in a parameter defined on the
decomposition, e.g., treewidth~\cite{Freuder85,Dechter99}.
%
%
%
%

Even more general than treewidth is the graph parameter
\emph{clique-width}~\cite{CourcelleHRHG}, which can even be small for
dense graphs where treewidth is large, measuring the distance from
co-graphs~\cite{CourcelleOlariu00}.  Dynamic programming algorithms
for deciding acceptance under preferred semantics are known for
clique-width~\cite{DvorakSzeiderWoltran10}, but little is known
about 
structure guided reductions to other problems.
This is particularly
interesting 
when reasoning 
relies on tools such as SAT solving 
where 
research increasingly asks for 
efficient encodings 
and 
theoretical limitations~\cite{HeuleLynceSzeider23} 
%
under the light that logic-based characterizations are known that
allow to solve SAT faster.

%
%
%
%

\begin{proposition}[\cite{FischerMakowskyRavve08}]\label{prop:numsatruntime}
	For Boolean formulas of directed incidence clique-width~$k$ and size~$n$,
	counting SAT can be solved in 
	time~$2^{\BigO({k})}\cdot\poly(n)$. 
\end{proposition}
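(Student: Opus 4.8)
\medskip
\noindent\textbf{Proof plan.}
The plan is to run a bottom-up dynamic-programming algorithm over a $k$-expression (parse tree) of the directed incidence graph $\incidence{F}$ of the input CNF formula $F$, which we assume to be supplied with the instance (or computed up front by a known clique-width approximation, at the price of a worse dependence on $k$). The guiding principle is the defining invariant of clique-width: at every node of the parse tree, all vertices currently carrying the same label are \emph{twins} with respect to all subsequent operations, in particular all subsequent arc insertions; hence, to count models, it suffices to remember for each of the $k$ labels only a constant-size summary of what has happened to the vertices in that class.

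For a node $t$ of the parse tree, let $G_t$ be the labeled directed graph built so far (its vertices a subset of the clause- and variable-vertices of $\incidence{F}$). Call a \emph{signature} a map assigning to each label $\ell \in \{1,\dots,k\}$ three bits that record whether the class of label $\ell$ in $G_t$ presently contains (i) a variable set to true, (ii) a variable set to false, and (iii) a clause not yet satisfied by the partial assignment under consideration; there are at most $8^{k}$ signatures. The table stored at $t$ is $T_t$ mapping each signature $\sigma$ to the number of truth assignments to the variable-vertices of $G_t$ that \emph{realize} $\sigma$ (a partial assignment realizes exactly one signature, so the tables partition the assignment space); thus $\Card{T_t}\le 8^{k}$.

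The transitions are routine: an introduce-leaf for a variable in label $\ell$ yields two entries (one per truth value) with count $1$; an introduce-leaf for a clause in label $\ell$ yields one entry with count $1$ whose $\ell$-component carries the ``unsatisfied'' bit (a fresh clause has no literals). For a disjoint union $G_{t_1}\oplus G_{t_2}$ we form, for every pair $(\sigma_1,\sigma_2)$, the signature obtained by OR-ing the per-label bits and add $T_{t_1}(\sigma_1)\cdot T_{t_2}(\sigma_2)$ to its entry, a convolution costing $8^{k}\cdot 8^{k}=2^{\BigO(k)}$. A relabel $\rho_{i\to j}$ merges the bits of $i$ into those of $j$ and empties $i$. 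An arc insertion $\eta_{i,j}$ adds, for every label-$i$ vertex and every label-$j$ vertex, an arc that, under the directed incidence encoding, corresponds to a new literal of fixed polarity in the clause among the two; so the ``unsatisfied'' bits of labels $i$ and $j$ are updated as a function of the ``true''/``false'' bits of $i$ and $j$ and the fixed encoding convention (e.g.\ an unsatisfied clause in label $j$ becomes satisfied iff label $i$ contains a variable of the matching truth value), simultaneously for all clauses of a class by twin-ness, while the counts are unchanged. Each step runs in time $2^{\BigO(k)}$, and $\#\mathrm{SAT}(F)=\sum_{\sigma}T_{\mathrm{root}}(\sigma)$, the sum ranging over signatures with no ``unsatisfied'' bit set. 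Correctness follows by induction on the parse tree, and since a $k$-expression can be taken to have $\poly(n)$ nodes, the total running time is $2^{\BigO(k)}\cdot\poly(n)$.

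The step I expect to be the main obstacle is establishing the soundness of the $\eta_{i,j}$ transition, i.e.\ that the three-bit-per-label summary loses nothing needed downstream: one must argue that recording only ``some clause of label $j$ is still unsatisfied'' --- not which clauses, nor how many literals each already has --- is enough, which rests exactly on the twin property, so that unsatisfied clauses sharing a label are satisfied (or left alone) as a block and no clause is ever ``unsatisfied again''. One must also verify that the variables governing satisfaction at an $\eta$-node are precisely those already introduced in its subtree (true, since vertices appear at leaves and are only relabeled higher), and that the case analysis over the admissible arc orientations of the incidence encoding is exhaustive. A separate, minor point is the normalization needed to keep the parse tree of polynomial size; a coarser alternative is to note that $\#\mathrm{SAT}$ is an MSO$_1$-counting problem --- counting variable-sets $X$ with ``every clause has a literal satisfied by $X$'' --- and to invoke the Courcelle--Makowsky--Rotics meta-theorem, which, however, gives only a non-elementary dependence on $k$ rather than the stated $2^{\BigO(k)}$.
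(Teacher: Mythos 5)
The paper never proves Proposition~\ref{prop:numsatruntime}: it is imported verbatim from \cite{FischerMakowskyRavve08}, so there is no in-paper argument to compare against. Your reconstruction is, in essence, the dynamic programming underlying that cited result, and the step you single out as the main obstacle does go through: since labels only ever merge (under $\oplus$, $\rho$, $\eta$ no two equally labeled vertices are ever separated), all currently unsatisfied clauses of a class have identical futures, and any $\eta_{i,j}$ that actually inserts arcs must join a class consisting only of clauses to a class consisting only of variables (a mixed pair would create a forbidden variable--variable or clause--clause arc), with the orientation fixing the polarity; hence the clause class's ``unsatisfied'' bit is cleared for all its clauses simultaneously, as a function of the partner class's ``has a true variable''/``has a false variable'' bits only, so the $8^{k}$-signature tables are maintained exactly at every node and the root readout gives the model count in time $2^{\BigO(k)}\cdot\poly(n)$. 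The one caveat worth stating explicitly is the one you already flag: the bound presupposes that a (normalized, polynomial-size) $k$-expression is available with the input; computing one from scratch is only known approximately, via $f(k)$-expressions \cite{OumS06,Kante07} with exponential $f$, which would strictly degrade the exponent --- the paper glosses this in exactly the same way, so your hedge matches the level of rigor at which the proposition is stated.
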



%
The proposition immediately yields the natural research question:
\emph{Can we encode problems into (Q)SAT while preserving the
	clique-width?}
Indeed, such encodings would allow to easily reuse
Proposition~\ref{prop:numsatruntime} for solving.
%
In this paper, we address 
this question for 
abstract argumentation, a natural knowledge representation
framework widely used for reasoning with conflicting
arguments~\cite{Dung95a,Rahwan07a}.  There, relationships between arguments
are specified in directed graphs, so-called argumentation frameworks
(AFs), and conditions are placed on sets (extensions) of arguments
that allow AFs to be evaluated.
The computational complexity of argumentation is
well-studied~\cite{DvorakSzeiderWoltran10,DvorakPichlerWoltran12,Dvorak12a,CharwatEtAl15,FichteHecherMeier24}.
Since AFs are already given as direct graphs and the
complexity is 
often even beyond NP, it makes 
it a perfect candidate 
to study clique-width aware encoding.

%

\smallskip
\noindent \emph{Our main contributions} are as follows:\\[-1.25em]
\begin{enumerate}
	\item We design reductions from argumentation problems to
	satisfiability of (quantified) Boolean
	formulas. 
	Our reductions are \emph{directed decomposition-guided (DDG)},
	employ k-expressions, and linearly preserve clique-width.
	\item For \emph{all} common \emph{argumentation semantics}, we
	establish favorable upper bounds (tractability) for \emph{extensions
		existence}, \emph{argument acceptance}, and \emph{counting}.
	This also works for the maximization-based (second-level) semantics and demonstrates the flexibility of our approach. For these, we rely on an auxiliary result that establishes how one can convert mixed normal-form QBF matrices
	into DNF and CNF (for inner-most $\forall$ and $\exists$), respectively.
	
	\item We show that the overhead caused by our DDG reductions cannot be
	significantly improved under reasonable assumptions providing
	\emph{structurally optimal reductions}.
	Indeed, this already holds for skeptical reasoning and then
	immediately carries over to the counting problem.
\end{enumerate}

\newcommand{\cons}[1]{\ensuremath{\mathsf{exist}_{#1}}}
\newcommand{\cred}[1]{\ensuremath{\mathsf{c}_{#1}}}
\newcommand{\skept}[1]{\ensuremath{\mathsf{s}_{#1}}}
\newcommand{\cnt}[1]{\#_{#1}}
\begin{table}[t]
	\centering
	\resizebox{.47\textwidth}{!}{
		\begin{tabular}{lcc}
			\toprule
			& \multicolumn{2}{c}{$\skept{\sigma}$($\cred{\sigma}^\star$)/$\cnt{\sigma}$} \\[0.25em]
			$\sigma\in$ &       $\{\stab, \adm, \comp\}$ & $\{\pref, \sigmai, \stag\}$\\
			\midrule
			CW-Aw.   &      $\BigO(k)$                                &   $\BigO(k)$                  \\
			CW-LB (ETH)    & $\Omega(k)$                                     & $\Omega(k)$                    \\
			Rt (UB/LB) & $2^{\theta(k)}\cdot \poly(n)$                                     &    $2^{2^{\theta(k)}}\cdot \poly(n)$                 \\
			Ref. (UB) & Thm~\ref{thm:stab:correct}--\ref{thm:comp:cwaw} & Thm~\ref{thm:pref:correct}--\ref{thm:counting-ub}\\  
			%
			%
			Ref. (LB) & Thm.~\ref{thm:lb}/Corr.~\ref{cor:stabcomp} & Prop.~\ref{thm:slb}\\
			\bottomrule
	\end{tabular}}
	\caption{%
		Overview of our results, where $k = \scw(\mathcal{G}^d_i(F))$,
		and $n = \Card{A}$ for given AF~$F=(A,R)$. 
		%
		%
		%
		We use $\cred{\sigma}$ for credulous acceptance,
		$\skept{\sigma}$ for skeptical acceptance,
		and 
		$\cnt{\sigma}$ is the extension counting problem.
		%
		%
		%
		%
		%
		%
		%
		%
		%
		``CW-Aw.'' refers to the clique-width increase caused by DDG reductions.
		``CW-LB (ETH)'' refers to clique-width lower bounds of DDG reductions under ETH,
		``Rt (UB/LB)'' are runtime upper and ETH lower bounds.
		%
		%
		$^\star$: Results for $\cred\stab$ also apply to $\cons{\stab}$ whereas $\cons{\sigma}$ is trivial for all other semantics. Finally, $\cred{\pref}$ can be solved faster via $\cred{\adm}$ and therefore shares the same bounds.
		%
		%
	}%
	\label{tab:results}
\end{table}


\paragraph{Related Works}
Abstract argumentation is widely studied in 
knowledge representation and
reasoning~\cite{Dung95a,Rahwan07a,AmgoudPrade09a,RagoCocarascuToni18a}.
%
%
%
The computational complexity depends on the considered semantics and
common decision tasks range between polynomial-time and the second
level of the polynomial hierarchy. For example, deciding whether a
given argument belongs to some extension (credulous acceptance) is
\NP-complete for stable semantics and $\SigmaP$-complete for the
semi-stable
semantics~\cite{DunneBench-Capon02a,DvorakWoltran10,Dvorak12a}.
Counting complexity in abstract argumentation is well-studied~
\cite{BaroniDunneGiacomin10,FichteHecherMeier24} with complexity
reaching $\numberDotP$ (admissible, complete, stable) and
$\numberDotCoNP$ (preferred, semi-stable, stage) for extension
counting and $\#\cdot\NP$ (admissible, complete, stable) and
$\#\cdot\SigmaP$ (preferred, semi-stable, stage) for projected
counting.
For treewidth, many results in abstract
argumentation~\cite{DvorakPichlerWoltran12}, including 
and decomposition-guided
reductions~\cite{Hecher20,FichteHecherMahmood21}, are known.
%
%
%
For clique-width, \cite{DvorakSzeiderWoltran10} established dynamic
programming algorithms and tractability results for acceptability
under preferred semantics.
SAT encodings are commonly used for solving argumentation
problems~\cite{NiskanenJarvisalo20} and QBF encodings enable tight
computational
bounds~\cite{LampisMengelMitsou18,FichteHecherPfandler20}.
In propositional satisfiability, tractability results for directed
incidence clique-width~\cite{FischerMakowskyRavve08}, modular
treewidth~\cite{PaulusmaSlivovskySzeider16}, and symmetric incidence
clique-width~\cite{SlivovskySzeider13} and hardness results for
undirected clique-width~\cite{FischerMakowskyRavve08} exist.
Tractability results for validity of QBFs are also known for incidence
treewidth and directed clique-width~\cite{CapelliMengel19}.
%

\section{Preliminaries}
%
We assume that the reader is familiar with standard terminology in
Boolean logic~\cite{DBLP:series/faia/336}, computational complexity~\cite{Papadimitriou94}, and parameterized
complexity~\cite{CyganEtAl15}.
%
%
For an integer~$k$, let $[k]\eqdef \{1,\ldots, k\}$ and $A \dcup B$
be the union over disjoint sets~$A,B$.
%
%
%
%


\paragraph{Satisfiability}
A literal is a (Boolean) variable~$x$ or its negation~$\neg x$.
A \emph{clause} or \emph{cube (also known as term)} is a finite set of literals, interpreted
as the disjunction or conjunction of these literals, respectively.
%
%
%
A \emph{CNF formula} or \emph{DNF formula} is a finite set of clauses,
interpreted as the conjunction or disjunction of its clauses or cubes.
Sometimes we say formula to refer to a CNF formula.
We use the usual convention that an empty conjunction 
corresponds to $\top$ and an empty disjunction 
to~$\bot$.
%
%
%
Let $\varphi$ be a formula.
%
%
For a clause~$c \in \varphi$, we let $\var(c)$ consist of all variables that
occur in~$c$ and $\textstyle\var(\varphi)\eqdef\bigcup_{c \in \varphi} \var(c)$.
An \emph{assignment} is a mapping
$\alpha: \var(\varphi) \rightarrow \{0,1\}$, $\alpha$ is total if it maps
all variables in~$\varphi$.
For $x\in \var(\varphi),$ we define $\alpha(\neg x) \eqdef 1 - \alpha(x)$.
The CNF formula~$\varphi$ \emph{under the
	assignment~$\alpha \in \ta{\var(\varphi)}$} is the formula~$\varphi_{|\alpha}$
obtained from~$\varphi$ by removing all clauses~$c$ containing a literal set
to~$1$ by $\alpha$ and removing from the remaining clauses all
literals set to~$0$ by $\alpha$. An assignment~$\alpha$ is
\emph{satisfying} if $\varphi_{|\alpha}=\emptyset$ and $\varphi$ is
\emph{satisfiable} if there is a satisfying assignment~$\alpha$. %
%
%
%
%
%
%
%
\SAT asks to decide satisfiability of~$\varphi$ and 
\cSAT asks for its number of total satisfying assignments.
%
%
%

\paragraph{Quantified Boolean Formulas}
Let $\ell$ be a positive integer, which we call \emph{(quantifier) rank} later, and $\top$ and $\bot$ be the constant always evaluating to $1$ and $0$, respectively.
%
%
A \emph{quantified Boolean formula}~$\phi$ (in prenex normal form), \emph{qBf} for short, is an expression of the form $\phi=Q_1 X_1.Q_2 X_2.\cdots Q_\ell X_\ell. F(X_1,\dots,X_\ell)$, where for $1\leq i\leq \ell$, we have $Q_i\in\{\forall,\exists\}$ and $Q_i \neq Q_{i+1}$, the $X_i$ are disjoint, non-empty sets of Boolean variables, and $F$ is a Boolean formula. We let $\matr(\phi)\dfn F$.
%
We evaluate $\phi$ by $\exists x.\phi\equiv \phi[{x\mapsto 1}]\lor\phi[{x\mapsto 0}]$ and $\forall x.\phi\equiv \phi[{x\mapsto 1}]\land\phi[{x\mapsto 0}]$ for a variable~$x$.
%
%
%
%
%
%
%
%
W.l.o.g.\ we assume that $\matr(\phi)=\psi_{\text{CNF}} \wedge \psi_{\text{DNF}}$,
where $\psi_{\text{CNF}}$ is in CNF 
and $\psi_{\text{DNF}}$ is in DNF. 
The evaluation of DNF formulas is defined in the usual way.
%
%
Then, depending on $Q_\ell$, either $\psi_{\text{CNF}}$ or $\psi_{\text{CNF}}$ is optional, more precisely,  $\psi_{\text{CNF}}$ might be $\top$, if $Q_\ell=\forall$, and $\psi_{\text{DNF}}$ is allowed to be $\top$, otherwise.
The problem $\ell\hy\QSAT$ asks, given a closed qBf $\phi=\exists X_1.\phi'$ of
rank~$\ell$, whether $\phi\equiv1$ holds.
%
The problem $\#\ell\hy\QSAT$ asks, given a closed qBf $\exists X_1. \phi$ of rank~$\ell$, to count assignments~$\alpha$ to~$X_1$ such that $\phi[\alpha]\equiv1$.
For brevity, we sometimes omit~$\ell$.


\paragraph{Computational Complexity}
For integer $i\geq 0$, $\exp(i,p)$ means a tower of exponentials, i.e.,
$\exp(i - 1,2^p)$ if $i > 0$ and $p$ if $i=0$.
We assume that $\poly(n)$ is any polynomial for given positive integer
$n$.
The Exponential Time Hypothesis (ETH)~\cite{ImpagliazzoPaturiZane01}
is a widely accepted standard hypothesis in the fields of exact and
parameterized algorithms. 
ETH states that there is some real~$s > 0$ such that we cannot decide
satisfiability of a given 3\hy CNF formula~$\varphi$ in
time~$2^{s\cdot\Card{\varphi}}\cdot\CCard{\varphi}^{\mathcal{O}(1)}$~\cite[Ch.14]{CyganEtAl15},
where $\Card{\varphi}$ refers to the number of variables and $\CCard{\varphi}$ to
the \emph{size} of~$\varphi$, which is number of variables and 
clauses in~$\varphi$.

\paragraph{Abstract Argumentation}
We use the argumentation terminology by Dung~\cite{Dung95a} and
consider non-empty, finite sets~$A$ of arguments.
An \emph{argumentation framework~(AF)}
is a directed graph~$F=(A, R)$ where $A$ is a
set of elements, called \emph{arguments}, and $R \subseteq A\times A$,
a set of pairs of arguments representing direct attacks of arguments.
%
%
%
%
An argument~$s \in A$ is called \emph{defended by $S$} 
if for every $(s', s) \in R$, there exists $s'' \in S$ such that
$(s'', s') \in R$.  The family~$\adef_F(S)$ is defined by
$\adef_F(S) \eqdef\{ s \mid s \in A, s \text{ is defended by } S
\text{ in } F \}$.
In argumentation, we are interested in computing so-called
\emph{extensions}, which are subsets~$S \subseteq A$ of the arguments
that meet certain properties according to certain semantics.
%
%
We say $S \subseteq A$ is \emph{conflict-free} 
if $(S\times S) \cap R = \emptyset$; $S$ is
\emph{admissible} 
if (i) $S$ is \emph{conflict-free}, 
and (ii) every $s \in S$ is \emph{defended by~$S$}. 
Assume an admissible set~$S$.  Then, (iiia) $S$ is \emph{complete}
if $\adef_F(S) = S$; (iiib) $S$ is~\emph{preferred}, 
if there is no $S' \supset S$ that is \emph{admissible}; 
(iiic) $S$ is \emph{semi-stable} 
if there is no admissible set $S' \subseteq A$ 
with~$S^+_R\subsetneq (S')^+_R$ where
$S^+_R:=S\cup\SB a\SM (b,a)\in R, b \in S \SE$;
%
%
%
%
%
%
(iiid) $S$ is \emph{stable} 
if 
every $s \in A \setminus S$ is \emph{attacked} by some $s' \in S$.  A
conflict-free set~$S$ is \emph{stage} 
if there is no conflict-free set~$S'\subseteq A$ 
with~$S^+_R\subsetneq (S')^+_R$.
%
%
%
%
%
We denote semantics by acronyms $\adm, \comp, \pref, \sigmai, \stab,$
and $\stag$, respectively.  For a
semantics~$\sigma \in \{\adm, \comp, \pref, \sigmai, \stab, \stag\}$,
$\sigma(F)$ is the set of \emph{all extensions} of
semantics~$\sigma$ in~$F$.
%
%
%
%
%
%
Given an AF~$F=(A,R)$. Problem~$\cons{\sigma}$ asks
whether~$\sigma(F) \neq \emptyset$;
$\cnt{\sigma}$ asks for $\Card{\sigma(F)}$; 
additionally given argument $a \in A$, \emph{credulous
acceptance} $\cred{\sigma}$ asks whether
$a \in \bigcup_{e \in \sigma(F)}e$; and
\emph{skeptical acceptance} $\skept{\sigma}$ asks whether
$a \in \bigcap_{e \in \sigma(F)}e$.

\begin{figure}[t]
\begin{center}
\resizebox{0.2\textwidth}{!}{
\begin{tikzpicture}[
baseline=(current bounding box.center),
every node/.style={rounded corners, draw, inner ysep = 3pt},
execute at end node={\strut}]
%
\node (a) at (1,0) [] {\textbf{z}oo};
\node (b) at (3,1.5) {\textbf{o}utback};
\node (c) at (5,0) {\textbf{u}npredictable};
\node (d) at (5,1.5) {\textbf{r}anger};
\foreach \f/\t in { a/b, c/b, c/d, d/c}{
\draw [-stealth', thick] (\f) to (\t);
}
\end{tikzpicture}
}%
\end{center}
\caption{An example framework for deciding between
going to see kangaroos in a zoo or in the outback.
}
\label{fig:runningex}
\end{figure}
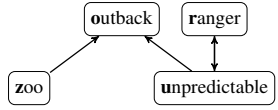

\begin{example}\label{ex:running}
Consider an AF $F$ with 4 arguments as depicted in
Figure~\ref{fig:runningex} arguing about watching kangaroos.
Watching kangaroos in a ``zoo'' gives you all the excitement without
needing to go the ``outback''. 
To observe them naturally, you need to see them in the ``outback''.
%
%
However, kangaroos are ``unpredictable'' and can be dangerous.
%
Regardless, you can go for a tour with a ``ranger'' making it
safe to observe kangaroos in the wild, so it is not that dangerous.
\end{example}

\paragraph{Incidence Graphs and Clique-width}
%
%
We follow standard terminology for graphs  and directed (multi) graphs~\cite{BondyMurty08}.
%
%
The \emph{directed incidence graph}~$\mathcal{G}^d_i(\varphi)$ of CNF (DNF) formula~$\varphi$~\cite{OrdyniakPS13}  
is a
bipartite graph with the variables and clauses (terms) of~$\varphi$ as vertices and
a 
directed edge between those, indicating whether variables occur
positively or negatively in a clause (term), respectively.
%
%
%
%
%
The \emph{incidence graph}~$\mathcal{G}_i(\varphi)$ of~$\varphi$ omits 
edge directions from~$\mathcal{G}^d_i(\varphi)$.
%
%
%
%
%
%
%
%
%
%
We use standard definitions for
clique-width~\cite{CourcelleHRHG,CourcelleMSOLAHO}.
%
%
%
%
%
%
%
Intuitively, treewidth~\cite{Bodlaender06} measures the distance of a
graph from being a tree and clique-width 
measures the distance of a graph to a co-graph.
Clique-width is bounded by treewidth, see~\cite{CorneilOTRBCWAT}.
A graph 
is a \emph{co-graph}, if it can be constructed as
follows:
(i)~a graph with one vertex is a co-graph;
for two co-graphs~$G_1=(V_1,E_1)$ and $G_2=(V_2,E_2)$, (iia)~the
disjoint union $G_1 \oplus G_2 \eqdef (V_1 \dcup V_2, E_1 \dcup E_2)$
is a co-graph; and
%
%
(iib)~the disjoint sum
$G_1 {\times} G_2 \eqdef (V_1 \dcup V_2,\allowbreak E_1 \dcup E_2 \dcup \SB \{u,v\}
\SM u \in V_1, v\in V_2 \SE)$ is a co-graph.
%
%
%

This lifts to $k$-graphs where $k$ is a positive integer
with $k$ labels, called \emph{colors}.
The labeling of a graph $G = (V, E)$~is a
function~$\lambda: V \rightarrow [k]$ and a \emph{$k$-graph} is a graph whose vertices are labeled by integers
from~$[k]$. 
%
%
An \emph{initial $k$-graph} consists of exactly one vertex~$v$
colored by $c \in [k]$, denoted by $c(v)$, e.g., $1(v)$ is a shorthand
for $G$ that is a vertex~$v$ of color~$1$.
Now, we can construct a graph~$G$ from initial $k$-graphs by
repeatedly applying the following three \emph{operations}.
(i)~\emph{Disjoint union}, denoted by $\oplus$;
(ii)~\emph{Relabeling}: changing all colors~$c$ to $c'$, denoted by
$\rho_{c\rightarrow c'}$;
(iii)~\emph{Edge introduce}: 
connecting all vertices colored by $c$ with all vertices colored by
$c'$, denoted by $\eta_{c,c'}$ or $\eta_{c',c}$; already existing edges
are not doubled.
A construction of a $k$-graph~$G$ using these operations can be
represented by a \emph{$k$-expression}, which is an algebraic term composed of $c(v)$, $\oplus$,
$\rho_{c\rightarrow c'}$, and $\eta_{c,c'}$ where $c, c' \in [k]$ and
$v$ is a vertex. 
To construct directed graphs, $\eta_{c,c'}$ is interpreted as the operation to introduce directed edges from vertices labeled $c$ to those  labeled $c'$.
%
%
%
%
%

We describe a $k$-expression by a parse-tree~$T=(V_T, E_T)$ and use
\emph{parse-tree of width~$k$} synonymously.
We refer by $\children(b)$ to the \emph{set of children} of a
node~$b$ in $V_T$.
We define $\colors: V_T \rightarrow 2^{[k]}$ that yields
the \emph{set of colors} in the graph $G=(V,E)$ constructed \emph{up to operation~$b$}.
Additionally, $\col: (V \times V_T) \rightarrow {[k]}$ 
gives the \emph{color of a vertex in $G$ up to operation $b$}.
The \emph{last operation} is $rt$ for \emph{root}.
%
%
%

The \emph{clique-width~$\cw{G}$} of an \emph{undirected graph} is the smallest~$k$ such
that $G$ is definable by a $k$-expression.
%
%
%
%
The \emph{directed clique-width~$\scw(G)$} of a \emph{directed graph} is the smallest~$k$ such that $G$ is definable by a $k$-expression.
The \emph{directed incidence clique-width} of a formula $\varphi$ is~$\scw(\mathcal{G}^d_i(\varphi))$.

\paragraph{Graphs and Clique-width}
Let $G=(V,E)$ be a (di)graph.
A graph~$G'=(V',E')$ is a sub-graph of~$G$ if $V'\subseteq V$.
For a set $V'\subseteq V$, the \emph{induced
	sub-graph}~$G[V']=(V',E')$ consists of vertices in~$V'$ and all
edges from the original graph on those vertices, i.e.,
~$E' \eqdef \SB (v_1,v_2) \SM (v_1,v_2) \in E, v_1,v_2 \in V'\SE$ of
edges.
An \emph{independent set} (also called stable set or co-clique) of~$G$
is a set~$I \subseteq V$ in which no two vertices are adjacent. A set
is independent if and only if it is a clique in the graph's
complement. An independent set is a \emph{maximal independent set
	(MIS)}, if there is no superset that is an independent set.
A \emph{co-graph} can be seen as a graph where any maximal clique in
any of its induced subgraphs intersects any maximal independent set in
that subgraph in exactly one vertex.

\paragraph{Directed Incidence Clique-width for CNF/DNF}
Let $\varphi = \{c_1,\dots,c_m\}$ be a formula over variables $X$. The directed incidence graph $\mathcal{G}^d_i(\varphi)$ of $\varphi$ is a bipartite graph with vertex set $X\cup \varphi$ and edge set $ \{ (c,x) | c\in \varphi \text{ and } x \in c\} \cup \{ (x,c) | c\in \varphi \text{ and } \neg x \in c\} $. 
The (undirected) incidence graph $\mathcal{G}_i(\varphi)$ of $\varphi$  omits the edge direction from  $\mathcal{G}^d_i(\varphi)$.
Observe that the orientation of edges can also be encoded by labeling them with the signs $\{+,-\}$ such that an edge between a variable $x$ and a clause $c$ is labeled $+$ if $x \in c$ and  $-$ if $\neg x \in  C$. 
This results in the \emph{signed incidence graph}~\cite{FischerMakowskyRavve08} that encodes the same information as the directed incidence graph~\cite{OrdyniakPS13}.
The directed incidence graph for DNF is defined similarly after replacing clauses  by terms.

\newcommand{\bag}[1]{#1}
\definecolor{myone}{HTML}{f58231}
\definecolor{mytwo}{HTML}{4363d8}
\definecolor{mythree}{HTML}{00e600}

\newcommand{\cu}[1]{\textcolor{myone}{#1}}
\newcommand{\cd}[1]{\textcolor{mytwo}{#1}}
\newcommand{\ct}[1]{\textcolor{mythree}{#1}}

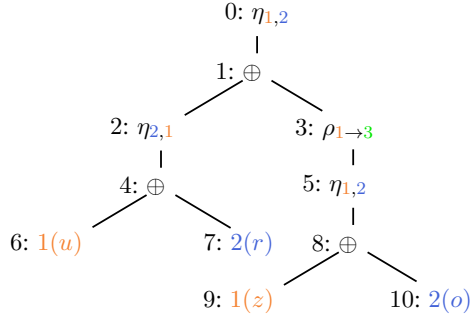
\begin{figure}[t]
\vspace{-1em}
\begin{center}
\resizebox{0.33\textwidth}{!}{
\begin{tikzpicture}[
every node/.style = {font=\small},
level distance=0.9cm,
sibling distance=3cm,
edge from parent/.style={draw,thick},
every path/.style={thick},
label/.style={draw=none},
circ/.style={circle,draw,inner sep=2pt},
align = center
]

\node {$\bag{0}$: $\eta_{\cu{1},\cd{2}}$}
child {node {\hspace{-1.5em}$\bag{1}$: $\oplus$}
child {node {\hspace{-1.5em}$\bag{2}$: $\eta_{\cd{2},\cu{1}}$}
child {node {\hspace{-1.5em}$\bag{4}$: $\oplus$}
child {node {\hspace{-1.5em}$\bag{6}$: $\cu{1(u)}$}}
child {node {\hspace{-1.5em}$\bag{7}$: $\cd{2(r)}$}}}
}
child {node[label] {\hspace{-1.5em}$\bag{3}$: $\rho_{\cu{1}\rightarrow \ct{3}}$}
child {node {\hspace{-1.5em}$\bag{5}$: $\eta_{\cu{1},\cd{2}}$}
child {node {\hspace{-1.5em}$\bag{8}$: $\oplus$}
child {node {\hspace{-1.5em}$\bag{9}$: $\cu{1(z)}$}}
child {node {\hspace{-1.5em}$\bag{10}$: $\cd{2(o)}$}}
}}
}
};

\end{tikzpicture}
}
\end{center}
\caption{A $k$-expression of $F$ (directed graph)
from Example~\ref{ex:running}, illustrated as parse tree.
We color the labels in the figure to distinguish between operation numbers and colors. 
%
}
%
\label{fig:excw}
\end{figure}

\begin{example}[Cont.]\label{ex:argcw}
Consider our argumentation framework~$F$ from
Example~\ref{ex:running}.
The clique-width of~$F$ is $\leq 3$, since only three colors
are needed to draw this AF. We illustrate this in
Figure~\ref{fig:excw} where 
concrete operations are labeled by numbers $\{0,\dots,10\}$ for ease and $0$ refers to the root. 
%
\end{example}

\begin{proposition}[\cite{FischerMakowskyRavve08}]\label{prop:cwbounds}
For any Boolean formula $\varphi$,
$\cw{\mathcal{G}_i(\varphi)}\leq 2 \cdot \scw(\mathcal{G}^d_i(\varphi))$.
\end{proposition}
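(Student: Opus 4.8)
The plan is to take a $k$\hy expression witnessing $k=\scw(\mathcal{G}^d_i(\varphi))$ and turn it, by an essentially syntactic rewriting, into a $k$\hy expression for $\mathcal{G}_i(\varphi)$; this already gives the claim with room to spare, since the stated factor $2$ is not needed. Concretely, fix a parse tree $\xi$ of width $k$ for the directed incidence graph $\mathcal{G}^d_i(\varphi)$, and define a new parse tree $\xi'$ by leaving every initial $k$\hy graph $c(v)$, every disjoint union $\oplus$, and every relabeling $\rho_{c\rightarrow c'}$ unchanged, and replacing every directed edge\hy introduce $\eta_{c,c'}$ (which, in the directed reading, adds the arcs from all vertices colored $c$ to all vertices colored $c'$) by the undirected edge\hy introduce $\eta_{c,c'}$ (which adds undirected edges between all vertices colored $c$ and all colored $c'$). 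Since $\xi'$ uses the same label set $[k]$, it is a $k$\hy expression.

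Next I would prove, by induction over the parse tree, the invariant: for every node $b$, the undirected graph built by $\xi'$ up to $b$ has the same vertex set and the same coloring (i.e.\ the same $\col(\cdot,b)$) as the digraph built by $\xi$ up to $b$, and its edge set is exactly the symmetrization of the arc set of the latter, meaning $\{u,v\}$ is an edge of the former iff $(u,v)$ or $(v,u)$ is an arc of the latter. The base case $c(v)$ is immediate (one vertex, no edges). For $\oplus$ and for $\rho_{c\rightarrow c'}$ the step is routine: neither operation adds or removes edges in a direction\hy dependent way, and both commute with symmetrization and with the coloring bookkeeping. The only substantive case is $\eta_{c,c'}$: in $\xi$ it adds the arc set $\{(u,v)\mid \col(u,b)=c,\ \col(v,b)=c'\}$, whose symmetrization is precisely the edge set $\{\{u,v\}\mid \col(u,b)=c,\ \col(v,b)=c'\}$ added in $\xi'$; combining with the induction hypothesis on the child closes the step. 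Evaluating the invariant at the root, and using that $\mathcal{G}_i(\varphi)$ is by definition obtained from $\mathcal{G}^d_i(\varphi)$ by omitting edge directions — which coincides with symmetrizing the arc relation (anti\hy parallel arcs, arising only when a clause contains a variable with both signs, collapse to a single edge in both) — we get that $\xi'$ is a $k$\hy expression for $\mathcal{G}_i(\varphi)$. Hence $\cw{\mathcal{G}_i(\varphi)}\le k=\scw(\mathcal{G}^d_i(\varphi))\le 2\cdot\scw(\mathcal{G}^d_i(\varphi))$.

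I do not expect a genuine obstacle here; the statement is robust. The two points that require care are (i) carrying the coloring along in the inductive invariant rather than just the edge set, since the $\eta$\hy operations are color\hy driven and the whole rewriting is otherwise transparent, and (ii) spelling out that ``omit edge directions'' and ``symmetrize the arc relation'' really agree for incidence graphs, which is immediate from the definition of $\mathcal{G}^d_i$. If one prefers to mirror the cited source~\cite{FischerMakowskyRavve08} more closely, the same bound can instead be routed through the signed incidence graph, but the direct reinterpretation above is shorter and explains why even $\cw{\mathcal{G}_i(\varphi)}\le\scw(\mathcal{G}^d_i(\varphi))$ holds.
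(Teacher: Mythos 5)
Your argument is correct. Note first that the paper itself gives no proof of Proposition~\ref{prop:cwbounds}: it is imported verbatim from \cite{FischerMakowskyRavve08}, so there is no in-paper proof to compare against. On its own merits, your symmetrization of the $k$-expression is sound under the definitions used here: initial graphs, $\oplus$, and $\rho_{c\rightarrow c'}$ are direction-agnostic, and the arcs added by a directed $\eta_{c,c'}$ symmetrize exactly to the complete bipartite edge set that the undirected $\eta_{c,c'}$ introduces (with the convention that existing edges are not doubled absorbing both the $\eta_{c,c'}$/$\eta_{c',c}$ overlap and the anti-parallel arcs coming from tautological clauses, which you correctly flag). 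This yields the stronger bound $\cw{\mathcal{G}_i(\varphi)}\le\scw(\mathcal{G}^d_i(\varphi))$, of which the stated inequality is a trivial weakening. The factor $2$ in the proposition is an artifact of the route taken in the cited source, which works with the \emph{signed} incidence graph and a notion of signed clique-width whose translation to ordinary clique-width is less direct; your reinterpretation bypasses that detour entirely, at the cost of proving a statement about the specific $k$-expression formalism of this paper rather than reproducing the cited result in its original formulation. The one presentational caveat is that you should not claim to be proving the proposition ``as cited''; you are proving a (stronger) statement that implies it under this paper's definitions, which is exactly what is needed here.
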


There is also a QSAT version of Proposition~\ref{prop:numsatruntime}.

\begin{proposition}[cf.~\cite{CapelliMengel19}]\label{prop:mengel}
For QBFs of directed incidence clique-width~$k$, quantifier
depth~$\ell$, and size~$n$,
counting QSAT (on free variables) can be solved in
time~$\tower({\ell+1},\BigO(k))\cdot\poly(n))$. 
\end{proposition}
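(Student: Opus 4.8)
This is, up to reformulation, the main tractability result of~\cite{CapelliMengel19}, and the plan is to recover it by peeling quantifier blocks off from the inside, controlling how the size and the directed incidence clique-width grow, and finishing with a single application of Proposition~\ref{prop:numsatruntime}. I would argue by induction on the quantifier rank~$\ell$. For the base case $\ell=1$ the input is $\exists X_1.\matr(\phi)$ with $\matr(\phi)=\psi_{\text{CNF}}\wedge\psi_{\text{DNF}}$, and the task is to count the assignments of~$X_1$ (and of the free variables) that satisfy~$\matr(\phi)$. Since the directed incidence graph is defined for terms exactly as for clauses, the argument behind Proposition~\ref{prop:numsatruntime} goes through for such a \emph{mixed} $\text{CNF}\wedge\text{DNF}$ matrix verbatim, counting these assignments in time $2^{\BigO(k)}\cdot\poly(n)$, which is below the bound claimed for $\ell=1$.

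For the inductive step, let $Q_\ell X_\ell$ be the innermost block. First I would bring $\matr(\phi)$ into the normal form matching~$Q_\ell$: DNF if $Q_\ell=\exists$, CNF if $Q_\ell=\forall$ (from a mixed $\text{CNF}\wedge\text{DNF}$ matrix this means converting one half and distributing). With the matrix in the matching shape, eliminating the innermost block is purely syntactic: $\exists X_\ell$ distributes over the disjunction of terms and simply deletes the literals over~$X_\ell$ inside each term, and dually $\forall X_\ell$ distributes over the conjunction of clauses and deletes the $X_\ell$-literals inside each clause (with the usual degenerate cases in which the formula collapses to $\top$ or $\bot$). The directed incidence graph of the resulting formula is obtained from the old one by deleting vertices --- the variables of~$X_\ell$ together with any now-redundant terms or clauses --- so its directed clique-width does not increase, directed clique-width being monotone under taking induced subgraphs. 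We are left with a qBf of rank~$\ell-1$, to which the induction hypothesis applies. The only costly operation is the normal-form conversion forced whenever we reach an $\exists$-block whose matrix is currently in CNF, or a $\forall$-block whose matrix is in DNF: a $\text{CNF}\leftrightarrow\text{DNF}$ translation blows up the size exponentially and, crucially, also the directed incidence clique-width, from its current value~$j$ to~$2^{\BigO(j)}$. Hence every block removal adds at most one further level of exponentiation in~$k$; together with the exponential from the base-step model count this gives at most $\tower(\ell+1,\BigO(k))\cdot\poly(n)$.

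The step I expect to be the main obstacle is exactly this last point: bounding the \emph{clique-width} --- not just the size --- of the $\text{CNF}\leftrightarrow\text{DNF}$ conversion. I would establish it by using that the dynamic program behind Proposition~\ref{prop:numsatruntime} is in fact a \emph{compilation}: from the $k$-expression one builds a \emph{structured} (deterministic) DNNF of size $2^{\BigO(k)}\cdot\poly(n)$ that respects a branch / $v$-tree decomposition read off from the expression, and on such structured circuits existential projection (for $\exists$) is essentially for free, while negation (needed for $\forall$ via $\forall X.\theta\equiv\neg\exists X.\neg\theta$) costs a single determinization step, i.e.\ one exponential in the current size. The normal-form statement above --- and hence the tower bound --- then follows from this recurrence. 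A secondary technicality is that the variables of a block may be scattered over the decomposition, so a smoothing/localization step is needed before a block can be eliminated, but this affects only polynomial factors, leaving the overall running time at $\tower(\ell+1,\BigO(k))\cdot\poly(n)$.
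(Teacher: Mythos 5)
There is nothing in the paper to compare your attempt against: Proposition~\ref{prop:mengel} is imported, not proved --- the ``cf.~\cite{CapelliMengel19}'' signals that the authors take the bound from Capelli and Mengel and only \emph{use} it (together with Lemma~\ref{thm:dnf} to massage their mixed matrices into the required shape). So the only meaningful check is whether your argument reconstructs the cited proof, and in its second half it essentially does: compiling the matrix along the $k$-expression into a structured (deterministic) DNNF of size $2^{\BigO(k)}\cdot\poly(n)$, eliminating $\exists$-blocks by cheap projection, paying one exponential per $\forall$-block via negation/determinization, and counting on the final deterministic circuit is exactly the knowledge-compilation strategy of \cite{CapelliMengel19}, and the bookkeeping does land within $\tower(\ell+1,\BigO(k))\cdot\poly(n)$.

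The genuine weak point is the first half of your write-up, which you should drop rather than patch. The claim that a syntactic CNF$\leftrightarrow$DNF conversion increases the \emph{directed incidence clique-width} only from $j$ to $2^{\BigO(j)}$ is asserted, not proved, and it is not obviously true: distributing a CNF into a DNF produces a term set whose incidence structure bears no controlled relation to the original $k$-expression, and your later circuit-level recurrence does not ``follow'' back into a statement about clique-width of explicit normal forms --- a structured DNNF is not a DNF, and you never need it to be one. The induction should live entirely at the circuit level (compile once, then alternately project and complement, count at the end), with the quantifier-elimination steps justified on structured circuits, which is where the cited proof does the work. Two smaller gaps to flag: the base-case assertion that Proposition~\ref{prop:numsatruntime} applies ``verbatim'' to mixed CNF${}\wedge{}$DNF matrices is plausible but unargued (note that the paper itself introduces Lemma~\ref{thm:dnf} precisely because mixed matrices need care), and the single-exponential cost of negating a structured DNNF, on which your whole recurrence rests, is stated without proof or citation --- it is a theorem of the compilation literature, not a triviality, and should be invoked explicitly.
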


\noindent For fixed $k$ and an undirected graph $G$, 
one can find an $f(k)$-expression 
of clique-width $k$ in polynomial time~\cite{OumS06}; similarly for 
directed
graphs~\cite{Kante07}.

\section{K-Expression-Aware Encodings}

We proceed towards reducing from argumentation problems to
satisfiabilty of (quantified) Boolean formulas by a \emph{directed
	decomposition-guided (DDG)} reduction that employs $k$-expressions.
To this aim, let $F=(A,R)$ be an abstract argumentation framework from
which we construct a QBF.
There, we use a variable~$e_a$ for every argument $a\in A$ to store
whether $a$ is in the extension or not.
We also use auxiliary variables to cover the state of attacks.
In our encoding, we employ the structure and need to take care in
particular of the following:
(i)~\emph{initial color of arguments}, which is where we can decide whether
the (single) color is defeated by being in the extension;
(ii)~\emph{merge of colors}, which might occur during disjoint union
or relabeling; and
(iii)~\emph{edge introduce}, which occurs when drawing edges from
color~$c'$ to $c$.


\subsection{Basic Semantics}\label{sec:first}

We start by presenting reductions for semantics whose classical
complexity is located on the first level of the polynomial
hierarchy,~i.e., stable, admissible, complete extensions.
%
	For space reasons, we 
	defer proof details 
	to a self-archived extended version. 
	%
	%
	In the following, we assume that we have a $k$-expression that defines
	the considered argumentation framework (directed graph), while
	requiring only $k$ different colors.
	We guide our encoding along the $k$-expression. 
	%
	%
	%

\paragraph{\underline{Stable Extensions}}
Here, we require conflict-freeness and that all arguments, which are not in
the extension, are attacked by the extension.  We encode this via a
set $E$ of \emph{extension variables}.
In the initial operation~$c(a)$ for argument~$a$ of color $c$,
the variable~$e_a$ encodes whether argument~$a$ is in the extension.
Then, variable~$e^b_c$ encodes whether $c$ contains an extension variable in
operation~$b$, which we refer to by \emph{extension color}.
%
%
	To use the fact that colors have extension members, we guide the
	information along the $k$-expression. 
	Finally, we ensure conflict-freeness of extension arguments in the
	(directed) edge introduction operations. 
	Now,
	we construct our encoding formally.
	%
	
	\begin{flalign}
		\label{stab:leaf}&{e}_{c}^b \leftrightarrow e_a	&& \hspace{-4em}\text{initial $b$, create $a$ of color $c$}\\
		\label{stab:union}&{e}_{c}^b \leftrightarrow \hspace{-2em}\bigvee_{\substack{b'\in\children(b): c\in \colors(b')}}\hspace{-3em} {e}_{c}^{b'}	&& \hspace{-4em}\text{disjoint union $b$, every $c\in \colors(b)$}\\
		&{e}_{c}^b \leftrightarrow \hspace{-1.75em}\bigvee_{\text{if $b$ relabeling\ }c'\mapsto c}\hspace{-2.25em} {e}_{c'}^{b'} \vee \hspace{.5em}\bigvee_{\text{if\ }c\in \colors(b')}\hspace{-1.5em}e_c^{b'} 	&& \hspace{-0em}\text{relabeling $b$, every $c{\,\in\,} \colors(b),$}\raisetag{1.35em}\notag\\[-1.6em]
		&&&b'\in\children(b)\label{stab:relabel}\\
		&{e}_{c}^b \leftrightarrow {e}_{c}^{b'}, \quad \hspace{-2.5em}\bigvee_{\substack{\text{if $b$ edge introduce } (c',c)}} \hspace{-3.25em}\neg e_c^b \vee\neg e_{c'}^b 	&& \text{edge intr.\ $b$, every $c{\,\in\,} \colors(b),$}\raisetag{1.25em}\notag\\[-1.6em]
		&&& b'\in\children(b)\label{stab:edge}
\end{flalign}

\postrebuttal{Note that the child $b'$ in Eq.~(\ref{stab:relabel}) is unique. However, $c$ could be a fresh color only introduced in $b$ (so $b'$ does not consider $c$). 
In this case, the disjunction would be empty (a case that is also covered here).}
Here, equations~(\ref{stab:leaf})--(\ref{stab:edge}) suffice to model conflict-freeness,
yielding formula $\varphi_{\#\conf}=\mathcal{R}_{\CConf\rightarrow\CBSAT}(F,\ptlong)$.
As stable extensions also attack non-extension arguments,
we require for every color $c$ that is used 
in an operation~$b$ an auxiliary variable $d_{c}^b$ that indicates whether 
\emph{every non-extension argument} of color $c$ up to operation~$b$ is \emph{attacked by the extension}. 
This is achieved via a set~$D$ of \emph{defeated} variables,
guided along the $k$-expression. 

\begin{flalign}
	\label{stab:defeat-leaf}&d_{c}^b \leftrightarrow e_c^b	&& \hspace{-4em}\text{initial $b$, every $c\in \colors(b)$}\\
	\label{stab:defeat-union}&d_{c}^b \leftrightarrow \hspace{-2em}\bigwedge_{\substack{b'\in\children(b): c\in \colors(b')}} \hspace{-3em} d_{c}^{b'}	&& \hspace{-4em}\text{disjoint union $b$, every $c\in \colors(b)$}\\
	&d_c^b \leftrightarrow \hspace{-2.5em}\bigwedge_{\text{if $b$ relabeling\ }c'\mapsto c} \hspace{-2em}d_{c'}^{b'} \wedge \bigwedge_{\text{if\ }c\in \colors(b')}\hspace{-1.65em} d_{c}^{b'}
	&& \hspace{-.0em}\text{relabeling $b$, every $c\in \colors(b)$,}\notag\\[-1.6em]
	&&&b'\in\children(b)\label{stab:defeat-relabel}\\
	&d_{c}^b \leftrightarrow d_{c}^{b'} \vee \hspace{-3em}\bigvee_{\text{if $b$ edge introduce\ }(c',c)} \hspace{-2.75em}e_{c'}^b 	&& \hspace{-1.8em}\text{edge introduce $b$, every $c\in \colors(b)$,}\notag\\[-1.6em] 
&&&\hspace{-1.8em}b'\in\children(b)\label{stab:defeat-edge}
\end{flalign}

\futuresketch{
To mark color sets that have to be defeated, which are those containing at least one argument not in the extension, we guide this information along the expressions.}

\noindent In the root $rt$, we ensure that all colors are defeated. 
\begin{flalign}
\label{stab:defeat-root}{d}_c^{rt} 	&& \text{for root colors\ } c\in \colors(rt)
\end{flalign}


\paragraph{Intuition}
In the initial $k$-graphs $b=c(a)$, we remember whether $c$ is an extension
color, this information is 
guided along the $k$-expression via other formulas.  The
\emph{disjunctive} encoding, in
Formulas~(\ref{stab:leaf})--(\ref{stab:edge}), guarantees that $c$ is
an extension color if it is an extension color in some operation $b$. Then,
Formula~(\ref{stab:leaf}) allows to retrieve arguments from extension
colors and Formula~(\ref{stab:edge}) requires conflict-freeness of
those arguments.  Moreover, given an extension color, the initial graphs
uniquely determine arguments in an extension.  Finally,
Formulas~(\ref{stab:defeat-leaf})--(\ref{stab:defeat-edge}) encode
whether each argument is either in the extension, or attacked by
it. This is again encoded via colors, where a \emph{conjunctive} encoding is
used to enforce that each argument of a color has been considered.

\newcommand{\cvu}[1]{\textcolor{myone}{#1}}
\newcommand{\cvd}[1]{\textcolor{mytwo}{#1}}
\newcommand{\cvt}[1]{\textcolor{mythree}{#1}}

\begin{example}[cont.] \label{ex:stable} Consider $F$ from
Example~\ref{ex:running} and the operations corresponding to the
parse tree from Figure~\ref{fig:excw}. 
%
%
%
%
%
%
For operation~$\bag{9}$, which creates the initial graph~$\cu{1(z)}$, we add
${e}_{{\cvu{1}}}^{\bag{9}} \leftrightarrow e_z$ by
Formula~(\ref{stab:leaf}) 
and ${d}_{{\cvu{1}}}^{\bag{9}} \leftrightarrow {e}_{{\cvu{1}}}^{\bag{9}}$ by
Formula~(\ref{stab:defeat-leaf}).
%
%
%
%
%
%
%
For operation~$\bag{8}$ \emph{(disjoint union)}, 
we add the formulas
${e}_{{\cvu{1}}}^{\bag{8}} \leftrightarrow {e}_{{\cvu{1}}}^{\bag{9}}$,
${e}_{{\cvd{2}}}^{\bag{8}} \leftrightarrow {e}_{{\cvd{2}}}^{\bag{10}}$ by
Formulas~(\ref{stab:union}) and
${d}_{{\cvu{1}}}^{\bag{8}} \leftrightarrow {d}_{{\cvu{1}}}^{\bag{9}}$,
${d}_{{\cvd{2}}}^{\bag{8}} \leftrightarrow {d}_{{\cvd{2}}}^{\bag{10}}$
by Formulas~(\ref{stab:defeat-union}).
%
%
%
%
For operation~$\bag{5}$ \emph{(edge-introduce)}, 
we obtain ${e}_{\cvu{1}}^{\bag{5}} \leftrightarrow {e}_{\cvu{1}}^{\bag{8}}$,
${e}_{\cvd{2}}^{\bag{5}} \leftrightarrow {e}_{\cvd{2}}^{\bag{8}}$, and
$\neg e_{\cvd{2}}^{\bag{5}} \vee\neg e_{\cvu{1}}^{\bag{5}}$ by
Formulas~(\ref{stab:edge})
and
${d}_{{\cvu{1}}}^{\bag{8}} \leftrightarrow {d}_{{\cvu{1}}}^{\bag{9}}$,
${d}_{{\cvd{2}}}^{\bag{8}} \leftrightarrow {d}_{{\cvd{2}}}^{\bag{10}}$ by
Formulas~(\ref{stab:defeat-edge}).
%
%
%
%
%
For operation~$\bag{3}$ \emph{(relabeling)},
we obtain, ${e}_{\cvd{2}}^{\bag{3}} \leftrightarrow {e}_{\cvd{2}}^{\bag{5}}$,
${e}_{\cvt{3}}^{\bag{3}} \leftrightarrow {e}_{\cvu{1}}^{\bag{5}}$ by
Formulas~(\ref{stab:relabel})
and
${d}_{\cvd{2}}^{\bag{3}} \leftrightarrow {e}_{\cvd{2}}^{\bag{5}}$,
${d}_{\cvt{3}}^{\bag{3}} \leftrightarrow {e}_{\cvu{1}}^{\bag{5}}$ by
Formulas~(\ref{stab:defeat-relabel}).
Rest formulas are analogous.
\end{example}


From now on, we denote our reductions by \emph{directed decomposing
guided (DDG) reduction~$\mathcal{R}_{\#\sigma\rightarrow\CBSAT}$} and
use this notion also for other semantics~$\sigma$.  Moreover, we
denote the resulting (\#)SAT-instance by \emph{$\varphi_{\#\sigma}$}.
%
%
%
First, we prove the correctness of our reduction for stable semantics.
\begin{restatable}[$\star$
,Correctness]{theorem}{stabcorrect}\label{thm:stab:correct}
Let~$F=(A,R)$ be an AF and~$\ptlong$ be a $k$-expression of~$F$. 
Then, the 
DDG reduction~$\mathcal{R}_{\CStab\rightarrow\CBSAT}$ is correct, that is, $\CStab$ on $F$ coincides with $\CBSAT$ on $\mathcal{R}_{\CStab\rightarrow\CBSAT}(F,\ptshort)$.
\end{restatable}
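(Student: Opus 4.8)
The plan is to show that the satisfying assignments of $\varphi_{\#\stab}\eqdef\mathcal{R}_{\CStab\rightarrow\CBSAT}(F,\ptshort)$ are in bijection with the stable extensions of $F$. First I would observe that, written over the parse tree $T=(V_T,E_T)$ of $\ptshort$, every auxiliary variable $e_c^b$ and $d_c^b$ occurs on the left-hand side of \emph{exactly one} of the biconditionals in Equations~(\ref{stab:leaf})--(\ref{stab:relabel}) and~(\ref{stab:defeat-leaf})--(\ref{stab:defeat-relabel}) (which one depends on the operation type of $b$), and that the right-hand side of this biconditional refers only to variables attached to nodes strictly below $b$ or, at a leaf $b=c(a)$, to the single extension variable $e_a$. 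Consequently, processing $T$ bottom-up, any assignment $S\subseteq A$ to $\{e_a\mid a\in A\}$ (read as $e_a\mapsto 1$ iff $a\in S$) extends in a unique way to a total assignment $\widehat\alpha_S$ satisfying all these biconditionals; in particular the biconditional parts of Equations~(\ref{stab:edge}) and~(\ref{stab:defeat-edge}) are forced as well. Hence $\varphi_{\#\stab}$ can be put into CNF without fresh variables, $S\mapsto\widehat\alpha_S$ is injective, and every satisfying assignment of $\varphi_{\#\stab}$ equals $\widehat\alpha_S$ for $S\eqdef\{a\mid\alpha(e_a)=1\}$. It therefore suffices to prove that $\widehat\alpha_S\models\varphi_{\#\stab}$ iff $S\in\stab(F)$; this settles correctness for the existence problem, and, since the auxiliary variables are functionally determined, for counting as well, so that $\Card{\stab(F)}$ equals the number of satisfying assignments of $\varphi_{\#\stab}$.

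The technical core is a bottom-up structural induction on the nodes $b\in V_T$. For the graph $G_b$ built up to operation $b$ (with vertex colors $\col(\cdot,b)$) I would prove, for every $c\in\colors(b)$: (i) $\widehat\alpha_S(e_c^b)=1$ iff some vertex $a$ of $G_b$ with $\col(a,b)=c$ lies in $S$; and (ii) $\widehat\alpha_S(d_c^b)=1$ iff every vertex $a$ of $G_b$ with $\col(a,b)=c$ is either in $S$ or attacked in $G_b$ by some vertex in $S$. The base case $b=c(a)$ is immediate from Equations~(\ref{stab:leaf}) and~(\ref{stab:defeat-leaf}). For a disjoint union $b$ of children $b_1,b_2$, no edges are added and the $c$-colored vertices of $G_b$ are the disjoint union of those in $G_{b_1}$ and $G_{b_2}$, so (i) is a disjunction and (ii) a conjunction over the children carrying $c$ — exactly Equations~(\ref{stab:union}) and~(\ref{stab:defeat-union}), the case that $c$ occurs in only one child being subsumed. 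For a relabeling $b=\rho_{c'\to c''}(b')$, again no edges are added, the $c''$-colored vertices of $G_b$ are those colored $c'$ or $c''$ in $G_{b'}$, and every other color is unchanged, matching Equations~(\ref{stab:relabel}) and~(\ref{stab:defeat-relabel}); the flagged corner case of a fresh target color $c''\notin\colors(b')$ makes the corresponding sub-disjunction or sub-conjunction empty and is subsumed. For an edge introduce $b=\eta_{c',c}(b')$, colors and membership are unchanged (hence $e_c^b\leftrightarrow e_c^{b'}$ in Equation~(\ref{stab:edge})), while the only new edges go from every $c'$-colored vertex to every $c$-colored vertex; so a $c$-colored vertex newly becomes ``in $S$ or attacked by $S$'' exactly when some $c'$-colored vertex is in $S$, i.e., when $e_{c'}^{b}$ holds, which is Equation~(\ref{stab:defeat-edge}); the $d$-value of every color other than $c$ is preserved.

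Finally I would combine this invariant with three structural facts: every argument of $F$ is created by exactly one leaf, every attack of $F$ is introduced by exactly one $\eta$-node, and $G_{rt}=F$. Beyond the (already forced) biconditionals, a satisfying $\widehat\alpha_S$ must satisfy precisely the conflict-freeness clauses $\neg e_c^b\vee\neg e_{c'}^b$ ranging over all edge-introduce nodes $b=\eta_{c',c}(b')$, and the root requirements $d_c^{rt}$ of Equation~(\ref{stab:defeat-root}) ranging over all $c\in\colors(rt)$. By invariant~(i), the first family holds iff no attack of $F$ has both endpoints in $S$, i.e., $S$ is conflict-free; by invariant~(ii) and since the root colors partition $A$, the second family holds iff every argument of $A$ is in $S$ or attacked by $S$, i.e., $A\setminus S$ is attacked by $S$. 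Together these are exactly the defining conditions of a stable extension, so $\widehat\alpha_S\models\varphi_{\#\stab}$ iff $S\in\stab(F)$, which completes the argument.

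I expect the main obstacle to be the simultaneous induction of the second paragraph: one has to keep the membership invariant~(i) and the defeat invariant~(ii) aligned through all four operation types at once, being careful about the direction of edges in $\eta_{c',c}$ (which color becomes newly defeated), about color merging under $\oplus$ and $\rho$, and about the degenerate cases the construction already flags (a fresh relabeling target, a color present in only one child of a union, and, if the clique-width convention permits $\eta_{c,c}$, self-attacking arguments — which then force $e_c^b$ to be false, consistent with such arguments never lying in a conflict-free set). Everything else is routine bookkeeping over the parse tree.
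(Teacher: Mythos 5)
Your proposal is correct and rests on the same basic plan as the paper's proof: a bijection between stable extensions and satisfying assignments, obtained by propagating the extension variables bottom-up along the parse tree so that the auxiliary variables are functionally determined. The difference is in how the verification is organized. The paper constructs the assignment by explicit rules mirroring Equations~(\ref{stab:leaf})--(\ref{stab:defeat-edge}) and then checks the conflict-freeness clauses and the root units $d_c^{rt}$ by separate proof-by-contradiction arguments in each direction (tracing a branch of the $k$-expression down to a leaf), plus a separate uniqueness argument; you instead prove a single simultaneous structural induction fixing the exact semantics of the auxiliary variables ($e_c^b$ means ``some $c$-colored vertex of $G_b$ lies in $S$'', $d_c^b$ means ``every $c$-colored vertex of $G_b$ is in $S$ or attacked by $S$ within $G_b$''), from which both directions, uniqueness, and the counting correspondence all follow at once. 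This invariant-based packaging is cleaner and makes the counting claim essentially automatic, at the price of having to verify the invariant through all four operation types, which you do correctly (including the direction of $\eta_{c',c}$ and the degenerate relabeling/union cases). Two minor wording slips, neither a gap: the right-hand sides of Equations~(\ref{stab:defeat-leaf}) and~(\ref{stab:defeat-edge}) refer to $e$-variables of the node $b$ itself, so the definitional system is acyclic (evaluate $e$- before $d$-variables at each node) rather than referring only to nodes strictly below; and since existing edges are not doubled, an attack may be covered by more than one $\eta$-node, but your conflict-freeness argument only needs that it is covered by at least one.
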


\begin{proof}[Proof (Sketch)]
We establish a bijective correspondence between stable extensions of $F$  and satisfying assignments of $\varphi_\CStab$.
In the forward direction: we construct a unique satisfying assignment $\alpha$ from a given stable extension $S$ of $F$.
This is achieved by setting the truth values for extension variables $E$ according to $S$, i.e., $\alpha(e_a)=1$ iff $a\in S$.
Moreover, we set the value of $\alpha(e_c^b)$  according to the color $c$ and operation $b$ in the $k$-expression to simulate the propagation of the evaluation along the parse-tree.
Then we repeat the same construction for defeated variables $D$.
Our construction of $\alpha$ from $S$ ensures that $\alpha\models\varphi_\CStab$.
To prove the uniqueness of the assignment: we observe that $S$ uniquely determines the evaluation of $\alpha$ for variables $e_a\in E$, whereas the value of $\alpha$ for remaining variables is simply propagated due to the nature of formulas in $\varphi_\CStab$.

In the reverse direction: we construct a unique stable extension $S$ by considering a satisfying assignment $\alpha$ for  $\varphi_\CStab$.
Once again, $S$ is obtained via the extension variables in $ E$ by letting $S$ contain an argument $a$ iff $\alpha(e_a)=1$. Then, it follows from Formulas~\ref{stab:edge} and \ref{stab:defeat-root} that $S$ is stable.
\end{proof}

Next, we establish that our encodings linearly preserve the
clique-width.  That is, reducing an AF instance to a $\BSAT$-instance
increases the clique-width only linearly.
\postrebuttal{Moreover, the size  of the obtained formula grows with the size of the input decomposition and a cw-expression for the SAT instance does not have to be computed from scratch, given a cw-expression for the AF.}

\begin{restatable}[$\star$,CW-Awareness]{theorem}{stabcwaw}\label{thm:stab:cwaw}
Let~$F$ 
be an AF and~$\ptlong$ be a $k$-expression 
of~$F$. 
%
The DDG reduction $\mathcal{R}_{\CStab\rightarrow\CBSAT}(F,\ptshort)$
constructs a SAT instance~$\psi$ that linearly preserves the
directed clique-width,~i.e., $\scw(\sinc{\psi})\in\mathcal{O}(k)$.
\end{restatable}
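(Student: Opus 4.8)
The plan is to build a $k'$-expression for $\mathcal{G}^d_i(\psi)$ directly from the given $k$-expression $\ptlong$ for $F$, by structural induction on the parse tree $T$, where $k' = \mathcal{O}(k)$. The key observation is that all variables and clauses (terms) introduced when processing a parse-tree node $b$ depend only on the colors $\colors(b)$ and $\colors(b')$ for children $b'$ of $b$, plus the single extension variable $e_a$ when $b$ is an initial node $c(a)$. So I would assign, for each color $c\in[k]$, a constant number of colors in the target expression: one for the ``current'' literal-vertices $e_c^b$ of the extension family, one for $d_c^b$, one for $e_a$ (the argument variable, which lives only at leaves but must be wired to $e_c^b$), plus a handful of scratch colors used transiently to introduce the edges of the incidence graph corresponding to one equivalence/clause block and then discarded (relabelled into an ``inert/done'' color). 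This gives $k' \le c_0\cdot k$ for a small constant $c_0$.

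The main steps, in order: (1) Fix the target color budget: $\{E_c, D_c, A_c : c\in[k]\}$ for the ``live'' vertices associated with color $c$ at the current node, a constant number of scratch colors $s_1,\dots,s_t$, and one color $\textit{done}$ into which every vertex is relabelled once all its incident edges have been introduced. (2) At an initial node $b=c(a)$ of $T$: introduce vertices $e_a$ (color $A_c$), $e_c^b$ (color $E_c$), $d_c^b$ (color $D_c$), and the few clause/term vertices realizing Formulas~(\ref{stab:leaf}) and (\ref{stab:defeat-leaf}); introduce the incident edges among these $\mathcal{O}(1)$ vertices via $\eta$ operations on the fresh scratch colors, then relabel the clause vertices and $e_a$ to $\textit{done}$, keeping only $e_c^b$ at $E_c$ and $d_c^b$ at $D_c$ as the live interface. (3) At a disjoint-union node $b$ with children $b_1,b_2$: take $\oplus$ of the two sub-expressions (colors are disjoint-union-compatible by induction since both use the same color scheme), then for each $c\in\colors(b)$ introduce the $\mathcal{O}(1)$ new clause vertices of Formulas~(\ref{stab:union}),(\ref{stab:defeat-union}), wire their $\mathcal{O}(1)$ edges using scratch colors, relabel the two incoming copies $e_c^{b_1},e_c^{b_2}$ down to $\textit{done}$, and promote the new $e_c^b$ to color $E_c$ (similarly for $D$). (4) Relabeling and edge-introduce nodes are analogous: only finitely many new clause/cube vertices per color, each with $\mathcal{O}(1)$ incidences to the current interface, handled with scratch colors then cleared. (5) At the root, add the unit clauses of Formula~(\ref{stab:defeat-root}) and relabel everything to $\textit{done}$. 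Throughout, the invariant is that the only non-$\textit{done}$ vertices after processing node $b$ are exactly $\{e_c^b, d_c^b : c\in\colors(b)\}$, sitting in colors $\{E_c,D_c\}$, so all edges of the incidence graph that touch a node's block are introducible while both endpoints are still ``live.''

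The technical care is in the book-keeping for edge directions: since $\mathcal{G}^d_i$ is the \emph{directed} incidence graph (equivalently signed), a clause vertex $c$ has an edge \emph{to} a variable $x$ it contains positively and \emph{from} one it contains negatively, so I must use the directed reading of $\eta_{c,c'}$ and order scratch colors accordingly; a biconditional $\ell\leftrightarrow\text{(disjunction)}$ expands into a fixed CNF whose clause–variable incidences and their signs are determined, so each such block contributes $\mathcal{O}(1)$ vertices and $\mathcal{O}(1)$ directed edges per color $c$ and node $b$. I expect the main obstacle to be precisely this: verifying that the constant-size ``gadget'' realizing one Formula block can always be wired using only the $\mathcal{O}(1)$ scratch colors without accidentally connecting it to unrelated live vertices of the same color $E_{c'}$ — this is handled by the invariant above (the live set at node $b$ is small and known) together with relabelling each gadget's internal vertices to $\textit{done}$ immediately after its edges are drawn, so no spurious $\eta$ later reconnects them. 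Once the invariant is set up, correctness of the $k'$-expression (it constructs exactly $\mathcal{G}^d_i(\psi)$) and the bound $\scw(\sinc{\psi}) = \mathcal{O}(k)$ follow by a routine induction, and the size claim in the remark follows because we touch each parse-tree node once and add $\mathcal{O}(k)$ vertices and edges there, so $|\psi|$ and the new expression are linear in $|\ptlong|$ and do not require recomputation from scratch.
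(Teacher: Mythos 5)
Your construction follows essentially the same route as the paper's proof: an induction over the parse tree that assembles a $k'$-expression of $\mathcal{G}^d_i(\psi)$ using constantly many colors per original color (live interface colors for the $e$- and $d$-variables), transient clause colors with directed, polarity-aware $\eta$ operations, and a retire-to-\emph{done} relabeling discipline, under the invariant that only the current node's interface remains live. The paper instantiates exactly this scheme with explicitly named extension/defeat versions, child- and expired-versions, $+/-$ duplicates for literal signs, and clause-making/clause-ready colors, arriving at $11k+2$ colors, so your $\mathcal{O}(k)$ bound and the accompanying size claim coincide with its argument.
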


\begin{proof}[Proof (Sketch)]
Given an AF~$F$ 
and a $k$-expression~$\ptlong$ 
of~$F$, we construct a $k$-expression~$\ptlong'$ 
of~$\sinc{\psi}$ as follows.
We first specify additional colors.
For each color~$c$, we need an
\emph{extension-version}~$e_c$ and a
\emph{defeat-version}~$d_c$ of $c$.
%
%
Then, we additionally need two more copies for each of these versions, called
\emph{child-versions} $ec_c$ and
$dc_c$ to 
add clauses corresponding to the child-operation
$b'$ of an operation $b$ and \emph{expired-versions}
$ex_c$ and
$dx_c$ to simulate the effect that all the edges between certain
variables and their clauses have been added. 
This is required, since otherwise, we will keep adding edges from the child-versions in future, which is undesirable.
Finally, to handle both positive and negative literals in clauses, we duplicate current and the child versions of each colors for $x\in \{e_c,ec_c,d_c,dc_c\}$, denoted as $x^+$ for positive and $x^-$ for negative literals. 
%
For clauses, we use two additional colors, called \emph{clause-making}
($cm$) and \emph{clause-ready} ($cr$) to add 
edges between clauses
and their respective literals, since we are in the setting of incidence graphs.
Intuitively, when adding 
edges between clauses and their respective
variables, we initiate a clause $C$ to be of color $cm$ and its literals $x$ to be of the appropriate color $x^+$ or $x^-$.
Then, we draw directed edges between $C$ and its literals, and later 
relabel $C$ to $cr$ to
avoid any further edges to/from $C$.
Similarly, when going from an operation~$b'$ to its parent $b$, we
change the labels of extension and defeated variables in $b'$ to
their child-versions.
We conclude by observing that the above mentioned additional colors suffice to construct a $k$-expression $\ptlong'$   of~$\sinc{\psi}$, leading to a requirement of $11k + 2$ colors.
\end{proof}

Observe that one might be tempted to consider AFs as undirected graphs (e.g., for the stable semantics).
However, this would not work as we outline below.
\postrebuttal{Two AFs could be ``{same}'' considering undirected edges (or, say symmetric edges) and can thus be constructed using the same labels.
However, one cannot insert directed edges using the labels for undirected edges.
Thus, information is lost when encoding only undirected edges in an AF, even for stable semantics.
Importantly, one can construct AFs where the directed and undirected clique-width differs from each other and such that both AFs admit different number of extensions.}
We next illustrate an example to further highlight this aspect. 


\begin{example}\label{ex:directed}
Consider two AFs $F_1, F_2$ with arguments $\{a,b,c\}$ and $ \{d,e,f\}$, respectively, together with attacks between them as depicted in Figure~\ref{fig:directedex}.
Now, considering undirected edges (or, say symmetric edges) the two AFs are the ``{same}''.
Thus, both can be constructed using the same labels. 
However, one cannot insert directed edges using the same labels.
Observe that $F_1$ has a directed clique-width of two, whereas $F_2$ (directed cycle) has three. 
Importantly, $F_1$ admits three stable extension $\{\{a\}, \{b\}, \{c\}\}$; $F_2$ has none.

Our construction relies on directed clique-width.
Hence, the resulting formulas (in particular, due to Eqs.~(\ref{stab:edge}) and~(\ref{stab:defeat-edge})) corresponding to the two AFs are also different.
%
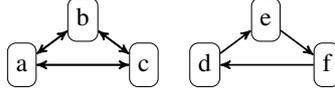
\begin{figure}[t]
\begin{center}
\resizebox{.25\textwidth}{!}{
\begin{tikzpicture}[
	x=1cm,y=0.5cm,
	baseline=(current bounding box.center),
	every node/.style={rounded corners, draw, inner ysep = 3pt},
	execute at end node={\strut}]
	\node (a) at (2,0) [] {a};
	\node (b) at (3,1.5) {b};
	\node (c) at (4,0) {c};
	\foreach \f/\t in { a/b, b/a, b/c, c/b, a/c,c/a}{
		\draw [-stealth', thick] (\f) to (\t);
	}
	
	\node (d) at (5,0) [] {d};
	\node (e) at (6,1.5) {e};
	\node (f) at (7,0) {f};
	\foreach \f/\t in { d/e, e/f, f/d}{
		\draw [-stealth', thick] (\f) to (\t);
		
	}
	
\end{tikzpicture}
}%
\end{center}
\caption{Example AFs highlighting the scenario where edge directions (hence resulting clique-width) matters.}
\label{fig:directedex}
\end{figure}
\end{example}

\paragraph{\underline{Admissible Extensions}}
For admissible extensions, we require conflict-freeness and that every
attack from outside is defeated.
Therefore, we reuse Equations~(\ref{stab:leaf})--(\ref{stab:edge}) to
model conflict-free extensions.  Furthermore, we need to maintain attacks
between colors containing extension arguments.  This is achieved via
\emph{attack variables} $A$, specified as follows.

\begin{flalign}
\label{adm:leaf} &\neg a_{c}^b	&& \hspace{-3em}\text{initial $b$, every $c\in \colors(b)$}\\
\label{adm:union}
&a_{c}^b \leftrightarrow \hspace{-1.5em}\bigvee_{\substack{b'\in\children(b): c\in \colors(b')}} \hspace{-3em}a_{c}^{b'}	&& \hspace{-3em}\text{disjoint union $b$, every $c\in \colors(b)$}\raisetag{2.5em}\\
&a_c^b \leftrightarrow \hspace{-2.25em}\bigvee_{\text{if $b$ relabeling\ }c'\mapsto c} \hspace{-2.25em} a_{c'}^{b'} \vee \hspace{.15em}\bigvee_{\text{if\ }c\in \colors(b')}\hspace{-1.5em} a_{c}^{b'}
&& \text{relabeling $b$, every $c\in \colors(b),$}\notag\\[-1.6em]
\label{adm:relabel}
&&& b'\in\children(b)\\
&a_{c}^b \leftrightarrow a_{c}^{b'} \vee \hspace{-2.75em}\bigvee_{\text{if $b$ edge introduce\ }(c,c')}\hspace{-2.5em} e_{c'}^b 	&& \hspace{-1.5em}\text{edge introduce $b$, every $c{\in} \colors(b)$,} 
\notag\\[-1.65em]
\label{adm:edge-neg}
&&& \hspace{-1.5em}b'{\in}\children(b),\text{not introd.\ } (c',c)\raisetag{1em}\\
&a_{c}^b \leftrightarrow a_{c}^{b'} \wedge 
\neg e_{c'}^b 	&& \hspace{-1.5em}\text{edge introduce $b$, every $c{\in} \colors(b)$,} 
\notag\\[-.5em]
\label{adm:edge-pos}
&&& \hspace{-1.5em}b'\in\children(b),\text{introd.\ $(c',c)$}
\end{flalign}

In the root $rt$ we ensure that no color remains attacking. 
\begin{flalign}
\neg {a}_c^{rt} 	&& \text{for root colors\ } c\in \colors(rt)\label{adm:root}
\end{flalign}

\paragraph{Intuition}
The intuition behind this encoding is to remember whether some non-extension argument attacks an extension argument.
To this aim, Formulas~(\ref{adm:leaf}) initiate that no argument attacks an extension to begin with.
Then, Formulas~(\ref{adm:union})--(\ref{adm:relabel}) guide this information along the $k$ expression.
Our disjunctive encoding via colors enforces that each argument of a particular color has been considered.
Formulas~(\ref{adm:edge-neg})--(\ref{adm:edge-pos}) update the status of attacking argument depending on whether an argument of color $c$ attacks an argument of the extension color $c'$.
Finally, Formula~(\ref{adm:root}) forces that no color remains attacking to our extension.

\begin{restatable}[$\star$,Correctness]{theorem}{admcorrect}\label{thm:adm:correct}
Let~$F$ 
be an AF and~$\ptlong$ be a $k$-expression 
of~$F$. 
Then, the DDG
reduction~$\mathcal{R}_{\CAdm\rightarrow\CBSAT}$ is correct, that
is, $\CAdm$ on $F$ coincides with
$\CBSAT$ on $\mathcal{R}_{\CAdm\rightarrow\CBSAT}(F,\ptshort)$.
\end{restatable}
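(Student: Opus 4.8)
The plan is to mirror the structure of the proof of Theorem~\ref{thm:stab:correct}: establish a bijection between admissible extensions of~$F$ and satisfying assignments of~$\varphi_{\CAdm}$. The encoding reuses Equations~(\ref{stab:leaf})--(\ref{stab:edge}) for conflict-freeness, so the work concentrates on showing that the attack variables~$A$ correctly capture ``is some extension argument attacked from outside the extension''. Concretely, I would first argue the following invariant by induction along the parse tree: for a satisfying assignment~$\alpha$, the value $\alpha(a_c^b)$ is~$1$ if and only if, in the subgraph constructed up to operation~$b$, there is an argument~$s$ of color~$c$ with $\alpha(e_s)=1$ (equivalently $s$ lies in the candidate extension~$S$) that is attacked by some argument~$s'$ (of any color present up to~$b$) with $\alpha(e_{s'})=0$.

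The induction has the expected cases. Base case: Equation~(\ref{adm:leaf}) sets $a_c^b$ to false at initial $k$-graphs, which is correct since a single-vertex subgraph has no edges. Disjoint union~(\ref{adm:union}) and relabeling~(\ref{adm:relabel}) simply propagate the disjunction of the relevant child values, matching the fact that no new attacks appear and colors may be merged; here I would note that relabeling is deterministic in its single child (as remarked after Eq.~(\ref{stab:relabel})) and the fresh-color case yields an empty disjunction, i.e.\ false, which is again correct. The interesting case is edge introduce~$(c',c)$: by Equation~(\ref{adm:edge-pos}), for the target color~$c$ we now have $a_c^b \leftrightarrow a_c^{b'} \wedge \neg e_{c'}^b$; combined with the already-established meaning of $e_{c'}^b$ (an extension member of color~$c'$ exists), this exactly says a previously-attacked extension argument of color~$c$ remains ``attacked-from-outside'' unless $c'$ itself is an extension color, in which case the attacker is inside the extension. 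Wait --- I need to be careful here: $e_{c'}^b$ being true means color~$c'$ contains \emph{some} extension argument, not that \emph{every} argument of color~$c'$ is in the extension, so $\neg e_{c'}^b$ might drop an attack coming from a non-extension argument of color~$c'$. This is precisely why conflict-freeness (Eq.~(\ref{stab:edge})) is needed in tandem: if $c'$ is an extension color and we have just drawn edges from all of~$c'$ to all of~$c$, conflict-freeness forces~$c$ to contain no extension argument, so the left side is vacuously false anyway and nothing is lost. For other colors~$c$ (those not targeted by the introduce), Equation~(\ref{adm:edge-neg}) keeps $a_c^b \leftrightarrow a_c^{b'} \vee \bigvee e_{c'}^b$ over introduces~$(c,c')$ from~$c$: but that disjunct is about edges \emph{out of}~$c$, so I'd double-check the direction convention --- this clause records that an argument of color~$c$ now attacks an extension argument of color~$c'$, i.e.\ it is the ``sink side'' bookkeeping viewed from the source; reconciling these two orientations of the same edge-introduce is the subtle point. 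Once the invariant holds, Equation~(\ref{adm:root}) forces $a_c^{rt}$ false for all root colors, which by the invariant says no extension argument is attacked from outside, i.e.\ every extension argument is defended by~$S$; together with conflict-freeness this is exactly admissibility.

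For the forward direction I would, given an admissible extension~$S$, define $\alpha(e_a)=1$ iff $a\in S$, then set all $e_c^b$ and $a_c^b$ by their (forced, propagation-style) defining biconditionals bottom-up, and verify that Equations~(\ref{adm:root}) and~(\ref{stab:edge}) are satisfied using admissibility; uniqueness of~$\alpha$ follows because $S$ determines the $e_a$ and every other variable is functionally determined by the biconditionals, just as in Theorem~\ref{thm:stab:correct}. The reverse direction reads~$S$ off the $e_a$ and invokes the invariant plus the root constraints to conclude $S$ is admissible. The main obstacle, as flagged above, is the edge-introduce case: making rigorous that the interplay of the two orientations in Equations~(\ref{adm:edge-neg})--(\ref{adm:edge-pos}) together with the conflict-freeness clauses of Equation~(\ref{stab:edge}) correctly accounts for attacks from non-extension arguments that happen to share a color with extension arguments --- this is where the proof genuinely differs from the stable case and needs the conflict-freeness constraints to be invoked at exactly the right moment.
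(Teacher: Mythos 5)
Your high-level plan---a bijection between admissible sets and models, an invariant for the attack variables proved by induction along the parse tree, and the root constraint yielding admissibility---does match the paper's proof. However, the invariant you state for $a_c^b$ is the wrong one, and the spot you yourself flag (``reconciling these two orientations of the same edge-introduce'') is exactly where it breaks. In the encoding, $a_c^b$ is indexed by the color of the \emph{attacker}, not of the attacked extension member: Formula~(\ref{adm:edge-neg}) \emph{sets} $a_c^b$ when an edge introduce $(c,c')$ \emph{out of} $c$ hits an extension color $c'$ (so color $c$ now harbors attackers of the candidate $S$), and Formula~(\ref{adm:edge-pos}) \emph{clears} $a_c^b$ when an edge introduce $(c',c)$ \emph{into} $c$ comes from an extension color (every argument of color $c$, in particular every attacker it contains, is then counter-attacked by $S$). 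So $a_c^b$ means ``color $c$ contains attackers of $S$ that have not yet been collectively defeated,'' and $\neg a_c^{rt}$ asserts that every attacker of $S$ is eventually counter-attacked, i.e., $S$ defends its members. Under your reading ($a_c^b$ = ``some extension argument of color $c$ is attacked from outside''), Formula~(\ref{adm:edge-pos}) is inexplicable---a conjunction with the child value can only turn the flag \emph{off}, so the new attack $(c',c)$ on an extension member of color $c$ would never be recorded---and the root constraint would force $S$ to be entirely unattacked, which is strictly stronger than admissibility and would destroy the claimed bijection. Your attempted patch via conflict-freeness does not repair this: the issue is not a vacuity phenomenon but a misidentification of which endpoint of the attack the variable tracks, together with the missing ``defeated/reset'' component of its semantics.

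For comparison, the paper's argument in the forward direction is: if $\alpha(a_c^{rt})=1$ were forced, trace back to an edge-introduce $b=\eta_{c,c'}$ where Formula~(\ref{adm:edge-neg}) fired because $e_{c'}^b=1$; this exhibits an argument $a$ of color $c$ attacking some $a'\in S$; admissibility supplies a defender $a''\in S$ with $(a'',a)\in R$, hence an edge-introduce into $c$ from an extension color at which Formula~(\ref{adm:edge-pos}) resets $a_c$ to $0$, a contradiction. The reverse direction runs the same trace forward: an undefended attacker of color $c$ sets $a_c$ to $1$ via~(\ref{adm:edge-neg}) and, lacking any resetting edge-introduce, propagates $1$ to the root, violating~(\ref{adm:root}). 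To salvage your proposal you would need to restate the invariant accordingly (including its temporal ``attacked the extension and not yet defeated since'' content) before the induction over edge-introduce nodes can go through.
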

%

\begin{restatable}[$\star$,CW-Awareness]{theorem}{admcwaw}\label{thm:adm:cwaw}
Let~$F$ 
be an AF and~$\ptlong$ be a $k$-expression of~$F$. 
The DDG
reduction~$\mathcal{R}_{\CAdm\rightarrow\CBSAT}(F,\ptlong)$
constructs a SAT
instance~$\psi$ that linearly preserves the width, i.e.,
$\scw({\sinc{\psi})}\in\mathcal{O}(k)$.
\end{restatable}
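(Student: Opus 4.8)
The plan is to mirror the proof of Theorem~\ref{thm:stab:cwaw}: from the given $k$-expression~$\ptlong$ of~$F$ we build, by a structural traversal of its parse tree, a $k'$-expression~$\ptlong'$ of the directed incidence graph~$\sinc{\psi}$ with $k'\in\mathcal{O}(k)$. The key observation is that $\mathcal{R}_{\CAdm\rightarrow\CBSAT}$ differs from $\mathcal{R}_{\CStab\rightarrow\CBSAT}$ only in that the ``defeated'' variables~$d_c^b$ together with Equations~(\ref{stab:defeat-leaf})--(\ref{stab:defeat-root}) are replaced by the ``attack'' variables~$a_c^b$ together with Equations~(\ref{adm:leaf})--(\ref{adm:root}), while the conflict-freeness part---Equations~(\ref{stab:leaf})--(\ref{stab:edge}) over the variables~$e_a$ and~$e_c^b$---is reused verbatim. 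Hence the color budget and the bookkeeping carry over, with~$a$ playing the role previously played by~$d$.

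Concretely, for every color~$c\in[k]$ of~$\ptlong$ we introduce an extension-version~$e_c$ and an attack-version~$a_c$; child-versions~$ec_c$ and~$ac_c$ used to address the variables created at the child operation~$b'$ while its parent~$b$ is being processed; expired-versions~$ex_c$ and~$ax_c$ used to ``freeze'' a variable once all its incidence edges have been drawn, so that no spurious edge is added later; and, since each variable occurs with both polarities in the clausal expansion of the biconditionals, a positive and a negative copy~$x^+,x^-$ of the current and child versions~$x\in\{e_c,ec_c,a_c,ac_c\}$. For the clause vertices of~$\sinc{\psi}$ we reserve two further colors, \emph{clause-making}~$cm$ and \emph{clause-ready}~$cr$. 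By the same accounting as in Theorem~\ref{thm:stab:cwaw} this is~$\mathcal{O}(k)$ colors in total, which already gives $\scw(\sinc{\psi})\in\mathcal{O}(k)$ once the construction is shown to be correct.

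We then replay~$\ptlong$ operation by operation. An initial operation~$c(a)$ becomes the creation of the variable vertices~$e_a,e_c^b,a_c^b$ (with suitable sign-specific colors) followed by the materialization of the constantly many clauses of Equations~(\ref{stab:leaf}) and~(\ref{adm:leaf}). A disjoint union stays~$\oplus$. A relabeling or edge-introduce operation~$b$ with child~$b'$ first creates the vertices~$e_c^b,a_c^b$ for~$c\in\colors(b)$ and then materializes the clauses of Equations~(\ref{stab:relabel})--(\ref{stab:edge}) and~(\ref{adm:union})--(\ref{adm:edge-pos}) one at a time: for each clause we relabel its literal vertices to the appropriate sign-version (every one of them is either $b$-local, hence carrying a current color, or $b'$-local, hence relabeled to a child color when~$b'$ was processed), add a fresh clause vertex of color~$cm$, draw the directed incidence edges by $\eta$-operations between~$cm$ and the positive/negative literal colors in the orientation prescribed by the definition of~$\sinc{\cdot}$, and finally relabel~$cm$ to~$cr$. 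Once all clauses of~$b$ have been emitted we relabel the $b'$-child colors to their expired versions and the $b$-current colors to their child versions, so the parent of~$b$ sees a correctly labeled graph; at the root we additionally emit the unit clauses of Equation~(\ref{adm:root}). As in the proof of Theorem~\ref{thm:stab:cwaw}, the binary disjoint union is handled so that the two children's local variables remain separately addressable for the clauses of Equation~(\ref{adm:union}).

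The main obstacle---and the only genuinely new point relative to the stable case---is to check that the two-case edge-introduce rule for attack variables (Equations~(\ref{adm:edge-neg}) and~(\ref{adm:edge-pos}), the latter of which, unlike anything in the stable reduction, expands into clauses such as $\{a_c^b,\neg a_c^{b'},e_{c'}^b\}$ and $\{\neg a_c^b,\neg e_{c'}^b\}$) does not force any additional color classes: each literal occurring there is over one of the variables~$a_c^b$, $a_c^{b'}$, $e_{c'}^b$, all of which are $b$-local or $b'$-local and therefore already present with a controlled color in the running $k$-graph, so a constant number of sign-versions per original color still suffices. Beyond that, correctness reduces to the same invariant as in Theorem~\ref{thm:stab:cwaw}---after processing~$b$, exactly the vertices a later operation may reference carry child-versions and all others carry expired-versions---which propagates up the parse tree by a routine induction, together with the fact that $\eta$ never doubles an existing edge, so the clause-by-clause replay produces precisely~$\sinc{\psi}$.
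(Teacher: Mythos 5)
Your proposal is correct and follows essentially the same route as the paper: the paper's own proof simply points back to Theorem~\ref{thm:stab:cwaw}, replacing the defeat variables/colors by attack versions and adding edges for the clauses from Equations~(\ref{adm:leaf})--(\ref{adm:root}), again arriving at $11k+2$ colors. Your more detailed replay of the parse tree, including the check that the clauses from Equations~(\ref{adm:edge-neg})--(\ref{adm:edge-pos}) only involve $b$- or $b'$-local variables and hence need no new color classes, is exactly the bookkeeping the paper leaves implicit (note only that the stable reduction's Formula~(\ref{stab:defeat-edge}) already mixes child variables with $e_{c'}^b$ in the same way, so this case is less novel than you suggest).
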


\paragraph{\underline{Complete Extensions}}
For complete semantics, we compute admissible extensions such that there is no argument that could have been included. That is, there is no argument that is not in the extension but defended by it.
Moreover, we additionally need the information of whether a color has been defeated or not.
To achieve this, we reuse Equations~(\ref{stab:leaf})--(\ref{stab:edge}), (\ref{adm:leaf})--(\ref{adm:edge-pos}) and, on top, we compute whether
we could have included an argument via a collection $O$ of \emph{out variables}. 
These are encoded and propagated as follows.

\begin{flalign}
\label{out:leaf}&o_{c}^b \leftrightarrow \neg e_c^b	&& \hspace{-3.5em}\text{initial $b$, every $c\in \colors(b)$}\\
\label{out:union} &o_{c}^b \leftrightarrow \hspace{-1.5em}\bigvee_{\substack{b'\in\children(b): c\in \colors(b')}} \hspace{-3em} o_{c}^{b'}	&& \hspace{-3.5em}\text{disjoint union $b$, every $c\in \colors(b)$}\\
&o_c^b \leftrightarrow \hspace{-2.25em}\bigvee_{\text{if $b$ relabeling\ }c'\mapsto c} \hspace{-2em} o_{c'}^{b'} \vee \hspace{-.1em}\bigvee_{\text{if\ }c\in \colors(b')}\hspace{-1.5em} o_{c}^{b'}
&& \text{relabeling $b$, every $c\in \colors(b),$}\notag\\[-1.6em]
&&&b'\in\children(b)\label{out:relabel}\\
\label{out:att} &o_{c}^b \leftrightarrow o_{c}^{b'} \wedge \hspace{-3em}\bigwedge_{\text{if$b$ edge introduce\ }(c,c')}\hspace{-3.25em} \neg e_{c'}^b 	&& \hspace{-1.5em}\text{edge introduce $b$, every $c{\in} \colors(b)$,} 
\notag\\[-1.65em]
&&& \hspace{-1.5em}b'{\in}\children(b),\text{not introd.\,$(c',c)$}\raisetag{1em}\\
\label{out:defend} &o_{c}^b \leftrightarrow o_{c}^{b'} \hspace{-.4em}\wedge (c{\neq} c') \wedge d_{c'}^{\geq b} 	&& \hspace{-1.5em}\text{edge introduce $b$, every $c{\in} \colors(b)$,} 
\notag\\[-.3em]
&&& \hspace{-1.5em}b'{\in}\children(b),\text{introd.\ $(c',c)$}
\end{flalign}

To compute $d_{c}^{\geq b}$, we need to propagate the information of being defeated backwards, from the root towards the leaves.

\futuresketch{\begin{flalign}
%
%
%
%
&{d}_{c}^{\geq{}b} \leftrightarrow\hspace{-2em} \bigvee_{\substack{b': b\in\children(b'),\\c\in \colors(b')}} \hspace{-1.85em} {d}_{c}^{\geq b'} \hspace{-.15em}\vee\hspace{-.5em} \bigvee_{\substack{b': b\in\children(b'),\\\text{$b'$ relabeling $c\mapsto c'$}}} \hspace{-1.85em} {d}_{c'}^{\geq b'} \hspace{-.15em}\vee \hspace{-.5em}\bigvee_{\substack{\text{if $b$ edge}\\\text{introd.\ } (c',c)}} \hspace{-1.5em}e_{c'}^b 
&& \hspace{-.5em}\text{op. $b$, every}\notag\\[-2.5em]
&&& \hspace{-.5em}c{\in}\colors(b)\label{def:union}\\[-.5em]\notag
%
\end{flalign}
}
\begin{flalign}
%
%
\label{def:rt}&{d}_{c}^{\geq{}rt} \leftrightarrow \hspace{-1.5em}\bigvee_{\substack{\text{if $rt$ edge introduce } (c',c)}} \hspace{-3em}e_{c'}^{rt}
&&\hspace{-2em}\text{for root colors $c{\in}\colors(rt)$}\\
\label{def:union}&{d}_{c}^{\geq{}b'} \leftrightarrow {d}_{c}^{\geq b} \vee \hspace{-3em}\bigvee_{\substack{\text{if $b'$ edge introduce } (c',c)}} \hspace{-3em}e_{c'}^{b'}
&& \text{non-relabeling $b$ with } b'\in\notag\\[-1.65em]
&&& \children(b), c\in \colors(b')\raisetag{1.15em}\\ 
\label{def:rel} &d_c^{\geq b'} \leftrightarrow \hspace{-2.25em}\bigvee_{\text{if $b$ relabeling\ }c\mapsto c'} \hspace{-2em} d_{c'}^{\geq b} \vee \hspace{-.35em}\bigvee_{\text{if\ }c\in \colors(b)}\hspace{-1.5em} d_{c}^{\geq{}b}
&& \text{relabeling $b$ with $b'\in$}\notag\\[-1.65em]
&&&\text{$\children(b), c\in \colors(b')$}\raisetag{1em}
\end{flalign}



In the root $rt$ no color remains attacking or out. 

\begin{flalign}
\label{out:root} \neg a_c^{rt}, \quad \neg {o}_c^{rt} 	&& \hspace{-1em}\text{for every\ } c\in \colors(rt)
\end{flalign}

\paragraph{Intuition} We encode whether some argument is incorrectly left out via a set $O$ of variables.
To achieve this, Formula~(\ref{out:leaf}) initiates  arguments not in an extension as candidates for incorrectly left out.
Formulas~(\ref{out:union})--(\ref{out:relabel}) guide this information along the $k$-expression.
Our disjunctive encoding here guarantees that we have considered each argument of any color.
Then, Formulas~(\ref{out:att})--(\ref{out:defend}) update the status of an argument depending on whether there is a valid reason to leave this out. 
Precisely, Formula~(\ref{out:att}) guarantees that if a color $c$ attacks an extension argument, it can not be left out since this must be defeated by the extension, due to the admissibility, and
Formula~(\ref{out:defend}) encodes that an argument was correctly left out, if there is some undefended attacker.
Then, Formulas~(\ref{out:root}) confirm that no argument is incorrectly left out.


\begin{restatable}[$\star$,Correctness]{theorem}{compcorrect}\label{thm:comp:correct}
Let~$F$ be an AF and~$\ptshort$ be a $k$-expression of~$F$.
%
The DDG reduction~$\mathcal{R}_{\CComp\rightarrow\CBSAT}$ is
correct, that is, $\CComp$ on $F$ coincides with $\CBSAT$ on
$\mathcal{R}_{\CComp\rightarrow\CBSAT}(F,\ptlong)$.
\end{restatable}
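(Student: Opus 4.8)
The plan is to establish a bijection between complete extensions of $F$ and satisfying assignments of $\varphi_{\CComp}$, reusing the machinery already set up for $\CStab$ and $\CAdm$. First I would recall that $\varphi_{\CComp}$ consists of three layers of constraints: the conflict-freeness/extension-color propagation (Eqs.~(\ref{stab:leaf})--(\ref{stab:edge})), the attack-tracking variables (Eqs.~(\ref{adm:leaf})--(\ref{adm:edge-pos})) together with the defeated variables $D$ and the backward-propagated $d_c^{\geq b}$ (Eqs.~(\ref{stab:defeat-leaf})--(\ref{stab:defeat-edge}), (\ref{def:rt})--(\ref{def:rel})), and finally the out-variable constraints (Eqs.~(\ref{out:leaf})--(\ref{out:defend}), (\ref{out:root})). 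By Theorem~\ref{thm:adm:correct}, the assignment restricted to the $E$- and $A$-variables, together with the root constraint $\neg a_c^{rt}$, already characterizes admissible extensions; so the remaining work is to show that the $O$-variables (and the $\neg o_c^{rt}$ constraint) precisely enforce the extra condition $\adef_F(S)=S$ on top of admissibility.

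The key steps, in order. (1) \emph{Forward direction.} Given a complete extension $S$, build $\alpha$ exactly as in the proof of Theorem~\ref{thm:adm:correct} for the $E,A,D$ variables, and additionally set $\alpha(o_c^b)=1$ iff there is an argument $a$ of color $c$ in the subgraph constructed up to $b$ with $a\notin S$ and $a$ not yet (up to $b$) known to be undefended in a way that rules out defense — more precisely, iff $a\notin S$ and every attacker of $a$ that has already been connected to $a$ is itself attacked by $S$. I would verify by induction over the parse tree that this $\alpha$ satisfies Eqs.~(\ref{out:leaf})--(\ref{out:defend}): the leaf case is immediate from $o_c^b\leftrightarrow\neg e_c^b$; disjoint union and relabeling just aggregate; the edge-introduce cases split exactly as in the formulas, where Eq.~(\ref{out:att}) kills $o_c^b$ when $c$ itself attacks an extension color (such an argument cannot be "left out", since admissibility means $S$ defeats it, hence $S$ would defend it and completeness forces $a\in S$), and Eq.~(\ref{out:defend}) keeps $o_c^b$ alive only as long as the newly introduced attacker of color $c'$ is itself defeated ($d_{c'}^{\geq b}$). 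At the root, $o_c^{rt}=1$ would mean some $a\notin S$ has all its attackers defeated by $S$, i.e.\ $a\in\adef_F(S)\setminus S$, contradicting completeness; so $\neg o_c^{rt}$ holds. (2) \emph{Uniqueness.} As in the earlier proofs, $S$ determines the $E$-variables, and all of $A,D,O$ (and the $d^{\geq}$-variables) are functionally propagated by the biconditionals, so $\alpha$ is unique. (3) \emph{Reverse direction.} Given $\alpha\models\varphi_{\CComp}$, set $S=\{a\mid\alpha(e_a)=1\}$. By Theorem~\ref{thm:adm:correct}, $S$ is admissible. Then, reading the $O$-propagation and the correctness of the $d^{\geq}$ backward pass, I would argue that $\alpha(o_c^b)$ equals the semantic predicate "some non-extension argument of color $c$ up to $b$ is defended by $S$ so far", so $\neg o_c^{rt}$ forces $\adef_F(S)\subseteq S$; combined with admissibility ($S\subseteq\adef_F(S)$) this gives $\adef_F(S)=S$, i.e.\ $S$ is complete.

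The main obstacle I expect is the correctness of the \emph{backward-propagated} defeated variables $d_c^{\geq b}$ and their interaction with the out-variables: unlike all other variables, which flow bottom-up, the $d^{\geq}$-variables are defined top-down (Eqs.~(\ref{def:rt})--(\ref{def:rel})), so one must carefully show that $\alpha(d_c^{\geq b})=1$ iff every argument of color $c$ present at $b$ is attacked by $S$ \emph{using attacks introduced at or above $b$} — and that this is exactly the information Eq.~(\ref{out:defend}) needs at the moment an edge $(c',c)$ is introduced, namely whether the freshly-added attacker (of color $c'$) will eventually be defeated. The subtlety is an ordering/coverage argument: an attacker of $a$ introduced at operation $b$ may itself only get attacked by $S$ via an edge introduced much later, so $d_{c'}^{\geq b}$ must "look ahead"; I would handle this by a clean invariant, proved by structural induction on the parse tree (from the root down for $d^{\geq}$, from the leaves up for everything else), stating precisely which attacks each variable accounts for, and then glue the two inductions together at each edge-introduce node. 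Once this invariant is pinned down, the rest is the same bookkeeping as in Theorems~\ref{thm:stab:correct} and~\ref{thm:adm:correct}.
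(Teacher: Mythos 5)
Your plan follows essentially the same route as the paper's proof: a bijective correspondence between complete extensions and models, reuse of Theorem~\ref{thm:adm:correct} for the $E$/$A$ layer, an explicit construction of the $O$- and $d^{\geq}$-values in the forward direction with the semantic work concentrated at the root constraint $\neg o_c^{rt}$, uniqueness via functional propagation, and, in the reverse direction, a contradiction obtained by propagating $o_c^b=1$ from the leaf of an allegedly defended-but-excluded argument up to the root, with the same case split at edge-introduce nodes (Formulas~(\ref{out:att}) vs.~(\ref{out:defend})); the top-down semantics of $d_c^{\geq b}$ that you flag as the main obstacle is indeed the delicate point the paper also has to track.

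One concrete slip, though: the invariant you propose for the forward direction, ``$\alpha(o_c^b)=1$ iff some $a\notin S$ of color $c$ has all of its already-connected attackers attacked by $S$'', does not match Formula~(\ref{out:att}). That equivalence also forces $o_c^b=0$ as soon as color $c$ attacks an extension color, i.e., a condition about whom $a$ attacks rather than about $a$'s attackers; with your definition the constructed assignment can set $o_c^b=1$ where (\ref{out:att}) requires $0$, so the induction step would fail as stated. The invariant must additionally record that $a$ does not (yet) attack the extension -- or, as the paper does, one defines $\alpha(o_c^b)$ recursively by mirroring the equivalences (its conditions (O1)--(O4b)), so that Formulas~(\ref{out:leaf})--(\ref{out:defend}) hold by construction and only $\neg o_c^{rt}$ needs a semantic argument. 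Relatedly, your parenthetical justification for this case is backwards: if $a$ attacks some $s'\in S$, admissibility gives that $S$ attacks $a$, but this does not mean $S$ defends $a$; the correct reason the flag may be killed is that such an $a$ can never lie in $\adef_F(S)$ (if $S$ defended $a$, it would attack the member of $S$ counterattacking $a$, contradicting conflict-freeness), so $a$ is correctly excluded. With these repairs, together with the precise ``attacked via edges introduced at or above the attachment point'' reading of $d_{c'}^{\geq b}$ that you already announce, your plan coincides with the paper's argument.
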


\begin{restatable}[$\star$,CW-Awareness]{theorem}{compcwaw}\label{thm:comp:cwaw}
Let~$F$ be an AF and~$\ptlong$ be a $k$-expression.
%
The reduction $\mathcal{R}_{\CComp\rightarrow\CBSAT}(F,\ptlong)$
constructs a SAT instance~$\psi$ that linearly preserves the width,
i.e., $\scw({\sinc{\psi})})\in\mathcal{O}(k)$.
\end{restatable}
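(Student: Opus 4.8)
The plan is to lift the construction behind Theorems~\ref{thm:stab:cwaw} and~\ref{thm:adm:cwaw} to the complete semantics: from the AF~$F$ and the $k$-expression~$\ptshort$ we traverse the parse tree of~$\ptshort$ bottom-up and assemble a $k'$-expression~$\ptshort'$ for~$\sinc{\psi}$ with $k'\in\mathcal{O}(k)$, which is exactly $\scw(\sinc{\psi})\le k'\in\mathcal{O}(k)$. Since $\mathcal{R}_{\CComp\rightarrow\CBSAT}$ reuses Equations~(\ref{stab:leaf})--(\ref{stab:edge}) and~(\ref{adm:leaf})--(\ref{adm:edge-pos}) verbatim, the colour bookkeeping for the extension variables~$E$ (and for the clause vertices) is the one from the stable proof and the bookkeeping for the attack variables~$A$ is the one from the admissible proof. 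What is genuinely new is handling the \emph{out} variables~$O$ of Formulas~(\ref{out:leaf})--(\ref{out:defend}) and the backward-defeat variables~$d_c^{\geq b}$ of Formulas~(\ref{def:rt})--(\ref{def:rel}), together with the additional root units~(\ref{out:root}); I claim these fit the same scheme with only a constant-factor blow-up in the number of colours.

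First I would fix the colour palette. For each colour~$c\in[k]$ of~$\ptshort$, on top of the $E$- and $A$-colours inherited from the two earlier proofs, introduce an \emph{out}-colour~$o_c$ and a backward-defeat colour~$d_c^{\geq}$; as for the other families, keep a \emph{current}, a \emph{child}, and an \emph{expired} version of each, split the current and child versions into a positive and a negative copy so that the orientation of the incidence edge between a variable and a clause (equivalently the sign of the corresponding literal) is recorded by the colour of the variable, and, exactly as in the proof of Theorem~\ref{thm:stab:cwaw}, refine the child versions for the two children of a disjoint-union node. Together with the two clause colours \emph{clause-making}~($cm$) and \emph{clause-ready}~($cr$), this is a constant number of colours per original colour plus two, so $k'\in\mathcal{O}(k)$.

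Next I would describe one step of the bottom-up traversal, mirroring the stable proof. To process a node~$b$ (after the $k'$-expressions of its subtrees have been built and, in the $\oplus$-case, joined by a disjoint union), proceed in four phases: (i)~relabel the current-version colours of the variables owned by the children of~$b$ to their (left/right) child-versions; (ii)~create, as initial $k'$-graphs, the fresh vertices owned by~$b$, namely $e_c^b,a_c^b,o_c^b,d_c^{\geq b}$ and the $d_c^{\geq b'}$ for $b'\in\children(b)$ demanded by Formulas~(\ref{def:union})--(\ref{def:rel}), each at its positive or negative current-version colour; (iii)~for every clause of~$\psi$ produced at~$b$---the CNF of the bi-implications obtained by instantiating Formulas~(\ref{stab:leaf})--(\ref{stab:edge}), (\ref{adm:leaf})--(\ref{adm:edge-pos}), (\ref{out:leaf})--(\ref{out:defend}), (\ref{def:rt})--(\ref{def:rel}) at~$b$, plus the root units~(\ref{adm:root}) and~(\ref{out:root}) when $b=rt$---add a clause vertex coloured~$cm$, introduce its boundedly many incidence edges by $\eta$-operations between~$cm$ and the appropriate current/child colours, and relabel the clause to~$cr$; (iv)~relabel the child-versions to their expired versions. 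The invariant to keep is that, after~$b$ is processed, only the variables owned by~$b$ carry current-version colours, everything strictly below carries expired colours, and the clause colours are empty---precisely what the parent step relies on, because every clause generated at an operation mentions only variables of that operation or of its children. This \emph{radius-one locality} is the key new point, and a glance at Formulas~(\ref{out:leaf})--(\ref{out:defend}) and~(\ref{def:rt})--(\ref{def:rel}) confirms that~$O$ and~$d^{\geq}$ couple an operation only to its children, just as~$E$ and~$A$ do.

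The hard part will be phase~(iii): since $\eta_{c,c'}$ joins \emph{all} vertices of colour~$c$ with \emph{all} of colour~$c'$, one must ensure that when the incidence edges of a clause~$C$ generated at~$b$ are inserted, the vertices wearing the colours used are exactly the intended literal-neighbours of~$C$, so that no spurious edge appears and no edge is doubled. This follows from the invariant plus the positive/negative (and left/right) colour refinements: each encoding clause mentions every variable with one fixed sign, and by the invariant each current- or child-version colour at node~$b$ hosts a single vertex, so the clauses of~$b$ can be discharged one at a time, re-parking the incident variables between their positive and negative colours in a bounded number of sub-passes without disturbing anything else. The backward-defeat variables need one extra remark: $d_c^{\geq b}$ occurs at~$b$ (to define $d_c^{\geq b'}$ for the children) and at the parent of~$b$ (in Formula~(\ref{def:union}) or~(\ref{def:rel})), i.e.\ at radius one, so it is held at a current-version colour until the parent of~$b$ is processed and then relabelled to its child- and finally expired-version, just like~$e_c^b$. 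Concatenating the per-node phases over the whole parse tree gives~$\ptshort'$ and the bound $\scw(\sinc{\psi})\le k'\in\mathcal{O}(k)$; moreover $\ptshort'$ is obtained from~$\ptshort$ by a local rewriting of linear size, hence need not be recomputed from scratch.
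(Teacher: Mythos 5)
Your proposal is correct and follows essentially the same route as the paper: it extends the colour bookkeeping of Theorems~\ref{thm:stab:cwaw} and~\ref{thm:adm:cwaw} with additional out- and defeat-colour families (current/child/expired, positive/negative versions) plus the two clause colours, builds the $k'$-expression bottom-up along the parse tree of~$\ptshort$, and concludes $\scw(\sinc{\psi})\in\mathcal{O}(k)$ (the paper makes the constant explicit as $2\cdot(11k+2)$). Your explicit invariant and the radius-one locality of Formulas~(\ref{out:leaf})--(\ref{def:rel}) is exactly the (implicit) reason the paper's per-operation clause-drawing procedure goes through, including for the backward-propagated $d_c^{\geq b}$ variables.
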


\subsection{Semantics Using Subset-Maximization}\label{sec:second}

Before we turn our attention to maximization-based semantics, we require some meta-result on DNF matrices, which will substantially simplify our constructions below.

\begin{restatable}[$\star$]{lemma}{thmDNF}\label{thm:dnf}
Let $Q$ be a QBF with inner-most $\forall$ quantifier and matrix $\varphi \wedge \psi$, where $\varphi$ is in CNF and $\psi$ is in DNF. Assuming $k{=}\scw(\mathcal{G}^d_i(\varphi) \sqcup\mathcal{G}^d_i(\psi))$, there is a model-preserving DNF matrix~$\varphi'$ with $\scw(\mathcal{G}^d_i(\varphi''))\in\mathcal{O}(k)$.
\end{restatable}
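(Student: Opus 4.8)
The plan is to avoid a naive Tseitin‑style DNF‑ization of $\varphi\wedge\psi$ and instead mimic the given $k$‑expression, in the style of the reductions of Section~\ref{sec:first}. Introducing, in the inner‑most $\forall$‑block, one fresh variable $y_i$ per clause $C_i$ of $\varphi$ and replacing the matrix by $\bigvee_i\neg(y_i\leftrightarrow C_i)\vee(\bigwedge_i y_i\wedge\psi)$ \emph{is} a model‑preserving DNF: a leading $\forall y_i\ldots$ pins each $y_i$ to the truth value of $C_i$, so the matrix collapses to $\bigwedge_iC_i\wedge\psi$. But it can blow up the width: the gadget terms $\{\neg y_i,\ell\}$, one per literal occurrence, turn a large biclique between a clause‑colour and a variable‑colour of $\mathcal{G}^d_i(\varphi)$ into a grid‑like, hence unbounded‑clique‑width, incidence structure. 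Hence we use a directed decomposition‑guided construction.

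We may assume we are given a $k$‑expression $\mathcal{X}$ witnessing $\scw(\mathcal{G}^d_i(\varphi)\sqcup\mathcal{G}^d_i(\psi))=k$. For every node $b$ of $\mathcal{X}$ and colour $c$ we add fresh variables, propagated along the parse tree by \emph{bounded‑arity} constraints: a clause tracker $u_c^b$ ($\equiv$ ``some clause currently of colour $c$ is already falsified by the literals over variables introduced up to $b$''), and two variable‑group trackers $t_c^b,\bar t_c^b$ ($\equiv$ ``all, resp.\ no, variables currently of colour $c$ introduced up to $b$ are true''). The trackers $t_c^b,\bar t_c^b$ are what let an edge‑introduction $\eta_{c,c'}$ update the clause tracker by $u_c^b\leftrightarrow u_c^{b'}\wedge\bar t_{c'}^{b}$ for positive, resp.\ $u_c^b\leftrightarrow u_c^{b'}\wedge t_{c'}^{b}$ for negative literals, \emph{without} a per‑variable gadget. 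Leaves give $u_c^b\leftrightarrow\top$ at a clause leaf (an empty clause‑so‑far is unsatisfied), and $t_c^b\leftrightarrow x$, $\bar t_c^b\leftrightarrow\neg x$ (and $u_c^b\leftrightarrow\bot$) at the leaf creating variable $x$ of colour $c$; unions and relabelings combine trackers disjunctively (for $u$) or conjunctively (for $t,\bar t$). At the root we add the single cube demanding $\neg u_c^{rt}$ for all $c\in\colors(rt)$. The terms of $\psi$ are distributed along the $\mathcal{G}^d_i(\psi)$‑part of $\mathcal{X}$ unchanged, and $\psi$ is conjoined simply by appending the literals of the root cube to every term of $\psi$. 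Now every propagation constraint has shape $z\leftrightarrow\Theta$ with $\Theta$ of constant size; adding all fresh variables to the $\forall$‑block and using $\forall z.\bigl((z\leftrightarrow\Theta)\to\Psi\bigr)\equiv\Psi[z\mapsto\Theta]$, the matrix becomes $\bigvee_{z}\neg(z\leftrightarrow\Theta_z)\vee(\text{root cube}\wedge\psi)$, and since each $\neg(z\leftrightarrow\Theta_z)$ with constant‑size $\Theta_z$ expands into $\BigO(1)$ cubes of size $\BigO(1)$, this is a DNF matrix $\varphi''$.

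For correctness, fix the free and the original inner‑most universal variables; under the leading $\forall$ over all trackers, the conjuncts $z\leftrightarrow\Theta_z$ pin every tracker to its intended value (bottom‑up induction over $\mathcal{X}$), and for that one assignment the matrix reduces to $(\text{root cube})\wedge\psi$, which by the same induction equals $\bigwedge_iC_i\wedge\psi=\varphi\wedge\psi$; any other tracker assignment falsifies some $z\leftrightarrow\Theta_z$ and hence satisfies the matrix. Thus the QBF with matrix $\varphi''$ and the enlarged $\forall$‑block has the same models as the original.

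Finally, $\mathcal{G}^d_i(\varphi'')$ is laid out along the parse tree of $\mathcal{X}$: every term is incident only to trackers of the same or a child node, to $\BigO(1)$ further trackers, or (for the $\eta$‑constraints) to all variable‑vertices of one colour of one node. So a $\BigO(k)$‑expression for it can be read off from $\mathcal{X}$ using -- exactly as in the proof of Theorem~\ref{thm:stab:cwaw} -- current/child/expired and positive/negative copies of each colour plus two colours for the clause‑making/clause‑ready phase that inserts the incidence edges, giving $\scw(\mathcal{G}^d_i(\varphi''))\in\BigO(k)$. I expect the main obstacle to be precisely this \emph{design} step rather than the read‑off: one must choose the trackers so that the $\leftrightarrow$‑elimination under $\forall$ stays sound \emph{and} so that no constraint ever mentions more than $\BigO(1)$ variables individually (the sole purpose of $t_c^b,\bar t_c^b$ is to defuse the grid blow‑up), and then discharge the lengthy but routine colour bookkeeping; a minor additional point is that a variable occurring in both $\varphi$ and $\psi$ is a single vertex of $\mathcal{G}^d_i(\varphi'')$ yet appears on both sides of $\mathcal{X}$, costing a constant number of reserved colours to bridge.
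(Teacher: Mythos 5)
Your proposal is correct and follows essentially the same route as the paper's proof: you guide clause satisfaction along the $k$-expression with per-colour trackers (your $u_c^b$, $t_c^b$, $\bar t_c^b$ are exactly the negations of the paper's $sat_c^b$, $f_c^b$, $t_c^b$, with dual propagation rules), then turn the universally quantified definitional biconditionals into constant-size cubes and distribute the root cube into $\psi$ to obtain the DNF matrix. The paper's write-up is terser on the final DNF assembly and on the $\mathcal{O}(k)$-expression read-off, but the underlying construction is the same.
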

\begin{proof}[Proof (Sketch)]
It suffices to encode the CNF $\varphi$ into a DNF formula $\varphi'$ along the $k$-expression.
The idea of our encoding is to guide the satisfiability of clauses along the $k$-expression, thereby keeping the information of whether a variable is assigned false 
or true. 
The resulting DNF matrix 
uses the formula $\varphi'$ instead.
Importantly, our encoding linearly preserves the directed incidence clique-width.
\end{proof}

\paragraph{\underline{Preferred Extensions}}
As before, we take an AF $F$, a $k$-expression $\ptlong$ of $F$,
and compute admissible extensions via the formula $\varphi_\CAdm$. 
However, we also need to keep track of subset-larger extension candidates.
So, we additionally create the formula $\varphi_\CAdm^*$, where starring refers to renaming every resulting variable $v$ by a new copy $v^*$.
%
It remains to design a structure-aware encoding of subset-larger extensions via starred variables. 
We construct the following CNF $\varphi_\pref$ (given as Equations~(\ref{pref:adm})--(\ref{pref:root})).

\begin{flalign}
\label{pref:adm}&\psi && \text{for every $\psi\in\varphi_\CAdm^*$}\\
%
&
s_{c}^b \leftrightarrow  {e^*_c}^{\hspace{-.1em}b} \wedge \neg e_c^b	&& \hspace{-2.5em}\text{initial $b$, create $a$ of color $c$}
\raisetag{2.25em} \label{pref:leaf}\\
&s_{c}^b \leftrightarrow \hspace{-2em}\bigvee_{\substack{b'\in\children(b): c\in \colors(b')}} \hspace{-3.25em}s_{c}^{b'}	&& \hspace{-4.25em}\text{disjoint union $b$, every $c\in \colors(b)$}\raisetag{2.45em}\label{pref:union}\\
&s_c^b \leftrightarrow \hspace{-2.25em}\bigvee_{\text{if $b$ relabeling\ }c'\mapsto c} \hspace{-2em} s_{c'}^{b'}\hspace{-.25em}\vee \hspace{.25em}\bigvee_{\text{if\ }c\in \colors(b')}\hspace{-1.75em} s_{c}^{b'}\label{pref:relabel}
&& \hspace{-.5em}\text{relabeling $b$, every $c\in \colors(b),$}\notag\\[-1.7em]
&&&\hspace{-.5em}b'\in\children(b)\\
&\hspace{-.1em}s_{c}^b \leftrightarrow s_{c}^{b'}\hspace{-.1em} 	&& \hspace{-7em}\text{edge introduce $b$, every $c{\in} \colors(b)$, }b'{\in}\children(b)\label{pref:edge}
\end{flalign}
We skip subset-larger extension counter candidates that are not in a superset relation to the candidate.

\begin{flalign}
\label{pref:subset}&e^b_c \rightarrow {e^*_c}^{\hspace{-.1em}b}&& \hspace{-.5em}\text{initial $b$, create $a$ of color $c$}
\end{flalign}

\noindent Further, for the root operation $rt$, we need to find an admissible extension that is subset-maximal, expressed as follows.

\begin{flalign}
\bigvee_{c\in \colors(rt)} s_{c}^{rt} && \text{for root operation }rt
\label{pref:root}
\end{flalign}

Then, the reduction $\mathcal{R}_{\CPref\rightarrow\CTQSAT}(F,\ptlong)$
constructs a QBF $\varphi_\CPref\dfn(\varphi_\CAdm \wedge \neg \big (\exists E^*, A^*, S. \varphi_\pref\big ))$,
which searches for an admissible extension (free variables) where there is no subset-larger
admissible extension ($\neg \exists$).
So, if there indeed is a larger extension than the candidate given via $e_a^b$ variables,
the QBF evaluates to false. 

If we bring this QBF into prenex normal form (shifting negation inside), we obtain an $\forall$-QBF with free variables whose matrix is of the form $\varphi_\CAdm \wedge \varphi_\pref$ with $\varphi_\CAdm$ in CNF and $\varphi_\pref$ in DNF.
This matrix can then be converted to DNF by Lemma~\ref{thm:dnf}.
We obtain the following~result.

\begin{restatable}[$\star$,Correctness]{theorem}{prefcorrect}\label{thm:pref:correct}
Let~$F$ be an AF and~$\ptlong$ be a $k$-expression of~$F$. 
%
The DDG reduction $\mathcal{R}_{\CPref\rightarrow\CTQSAT}$ is correct, that is, $\CPref$ on $F$ coincides with $\CTQSAT$ on $\mathcal{R}_{\CPref\rightarrow\CTQSAT}(F,\ptlong)$.
\end{restatable}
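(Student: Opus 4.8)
The plan is to reduce correctness of the preferred encoding to the already established correctness of the admissible encoding (Theorem~\ref{thm:adm:correct}), treating the auxiliary $s$-variables purely as bookkeeping for strict set inclusion. Recall that the free variables of the QBF $\varphi_\CPref = \varphi_\CAdm \wedge \neg(\exists E^{*},A^{*},S.\,\varphi_\pref)$ are exactly those of $\varphi_\CAdm$. By Theorem~\ref{thm:adm:correct}, an assignment $\alpha$ to these free variables satisfies $\varphi_\CAdm$ iff $S_\alpha := \{a \mid \alpha(e_a)=1\}$ is admissible, and $\alpha$ is then the \emph{unique} free-variable assignment with that restriction to $\{e_a \mid a \in A\}$ (all remaining free variables $e_c^b, a_c^b$ are functionally propagated). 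Hence it suffices to show: for every admissible $S = S_\alpha$, the inner formula $\exists E^{*},A^{*},S.\,\varphi_\pref$ evaluates to true under $\alpha$ iff there is an \emph{admissible} $S' \supsetneq S$. Correctness of the preferred encoding, and the bijection between $\pref(F)$ and the free-variable models of $\mathcal{R}_{\CPref\rightarrow\CTQSAT}(F,\ptlong)$, then follow: the map $S \mapsto \alpha$ is injective (distinct extensions disagree on some $e_a$) and surjective by the two directions below, and it carries over to the counting correspondence since each preferred $S$ has exactly one free-variable model.

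For the direction ``an admissible $S' \supsetneq S$ exists $\Rightarrow$ the inner QBF is true'', apply Theorem~\ref{thm:adm:correct} to the starred copy $\varphi_\CAdm^{*}$ to obtain the unique extension $\beta$ of $\alpha$ to $E^{*}\cup A^{*}$ with $\beta \models \varphi_\CAdm^{*}$ and $S'_\beta = S'$, and then extend $\beta$ to the $s$-variables via Equations~(\ref{pref:leaf})--(\ref{pref:edge}), which determine each $s_c^b$ functionally from the now-fixed $e$- and $e^{*}$-values. By construction $\beta$ satisfies (\ref{pref:leaf})--(\ref{pref:edge}); it satisfies (\ref{pref:subset}) because $S \subseteq S'$ forces $\alpha(e_a)=1 \Rightarrow \beta(e^{*}_a)=1$ at each leaf; and it satisfies the root disjunction~(\ref{pref:root}) because any $a \in S' \setminus S$ sets $s_c^b = 1$ at its leaf, and the disjunctive propagation carries a $1$ up to some $s_c^{rt}$. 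Thus $\beta \models \varphi_\pref$. Conversely, if $\beta \models \varphi_\pref$ extends $\alpha$: from $\beta \models \varphi_\CAdm^{*}$ and Theorem~\ref{thm:adm:correct} (starred copy), $S' := S'_\beta$ is admissible; Equation~(\ref{pref:subset}) at the leaves gives $S \subseteq S'$; and Equation~(\ref{pref:root}) with the disjunctive propagation of the $s$-variables (traced back to a leaf via (\ref{pref:leaf})--(\ref{pref:edge})) forces some argument in $S' \setminus S$, so $S' \supsetneq S$.

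Combining the two directions, $\alpha \models \varphi_\CPref$ iff $S_\alpha$ is admissible and admits no admissible strict superset, i.e.\ iff $S_\alpha$ is a preferred extension of $F$. Finally, to match the normal form required by $\CTQSAT$, push the negation inside: $\neg\exists E^{*},A^{*},S$ becomes $\forall E^{*},A^{*},S$ and $\neg\varphi_\pref$ (a negated CNF) becomes a DNF; since $\varphi_\CAdm$ is quantifier-free over these blocks, the QBF is $\forall$-prenex with free variables and matrix $\varphi_\CAdm \wedge \neg\varphi_\pref$ (CNF part $\varphi_\CAdm$, DNF part $\neg\varphi_\pref$), so Lemma~\ref{thm:dnf} applies to present it as a single DNF.

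The main obstacle is the middle step: proving that satisfiability of $\varphi_\pref$ \emph{over the quantified variables alone}, with the shared free variables $e_a,e_c^b$ held fixed, coincides exactly with ``there is an admissible strict superset of $S_\alpha$''. This hinges on three points that must be assembled carefully because $\varphi_\pref$ and $\varphi_\CAdm$ share free variables: (i) the leaf-level constraint~(\ref{pref:subset}) enforces $S \subseteq S'$ at the argument level (and this suffices, since the $e^{*}$-colors are then propagated automatically by the embedded $\varphi_\CAdm^{*}$); (ii) the $s$-variables are consistently and uniquely determined, and the root disjunction~(\ref{pref:root}) captures \emph{strictness} rather than mere inclusion; and (iii) admissibility of the witness $S'$ is both forced by the embedded $\varphi_\CAdm^{*}$ and, conversely, always extendable to a full model of $\varphi_\pref$ through Theorem~\ref{thm:adm:correct}. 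Each point is routine given the earlier results, but their interaction is where the care is needed.
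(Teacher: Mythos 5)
Your proposal is correct and follows essentially the same route as the paper's proof: it leverages the bijection from Theorem~\ref{thm:adm:correct} for both the plain and starred copies of $\varphi_\CAdm$, and shows that the inner existential $\exists E^{*},A^{*},S.\,\varphi_\pref$ holds under $\alpha$ exactly when $S_\alpha$ has an admissible strict superset, using (\ref{pref:subset}) for inclusion and the $s$-variable propagation (\ref{pref:leaf})--(\ref{pref:root}) for strictness. The paper organizes the two directions as contradiction arguments starting from a preferred extension and from a model, respectively, but the witness constructions and the role of each formula group are the same as in your argument.
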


\begin{restatable}[$\star$,CW-Awareness]{theorem}{prefcwaw}\label{thm:pref:cwaw}
Let~$F$ be an AF and~$\ptlong$ $k$-expression.
The 
reduction~$\mathcal{R}_{\CPref\rightarrow\CTQSAT}(F,\ptlong)$
constructs a QSAT instance~$\psi$ that linearly preserves the
width,~i.e., $\scw{\sinc{\matr(\psi)}}\in\mathcal{O}(k)$.
\end{restatable}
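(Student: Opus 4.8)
The plan is to reduce the claim to Theorem~\ref{thm:adm:cwaw} and the DNF-conversion Lemma~\ref{thm:dnf}, so that no genuinely new decomposition has to be built. Recall that, after prenexing $\varphi_\CPref$ and pushing the negation inside, the reduction outputs the QBF $\psi$ whose matrix $\matr(\psi)$ is the DNF $\varphi'$ that Lemma~\ref{thm:dnf} produces from the mixed matrix $\varphi_\CAdm\wedge\neg\varphi_\pref$ (a CNF conjoined with a DNF, with innermost quantifier $\forall$). That lemma yields $\scw(\sinc{\varphi'})\in\BigO(k')$ for $k'=\scw(\sinc{\varphi_\CAdm}\sqcup\sinc{\neg\varphi_\pref})$, and the directed clique-width of a disjoint union is at most the maximum of the two; hence it suffices to show $\scw(\sinc{\varphi_\CAdm})\in\BigO(k)$ and $\scw(\sinc{\varphi_\pref})\in\BigO(k)$. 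The former is exactly Theorem~\ref{thm:adm:cwaw}. For the latter I would first note that turning the CNF $\varphi_\pref$ into the DNF $\neg\varphi_\pref$ leaves all variable--clause incidences intact and only flips every sign, so $\sinc{\neg\varphi_\pref}$ is $\sinc{\varphi_\pref}$ with all edges reversed; reversing all edges does not change directed clique-width (replace every $\eta_{c,c'}$ by $\eta_{c',c}$ in a $k$-expression), whence $\scw(\sinc{\neg\varphi_\pref})=\scw(\sinc{\varphi_\pref})$ and it is enough to bound the latter.

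To bound $\scw(\sinc{\varphi_\pref})$ I would make a single pass over the given $k$-expression $\ptlong$, building three things on disjoint palettes of colors. First, the renamed admissible part $\varphi_\CAdm^*$ that $\varphi_\pref$ contains by Equation~(\ref{pref:adm}): since $\sinc{\varphi_\CAdm^*}\cong\sinc{\varphi_\CAdm}$, I reuse the $\BigO(k)$-color construction of Theorem~\ref{thm:adm:cwaw} verbatim on starred colors. Second, the $s$-variables: Equations~(\ref{pref:leaf})--(\ref{pref:edge}) governing $s_c^b$ have exactly the shape of the extension-variable propagation~(\ref{stab:leaf})--(\ref{stab:edge}), so I extend the bookkeeping of Theorem~\ref{thm:stab:cwaw} by an $s$-version $s_c$ of each color together with its child-version, its expired-version, and the $\pm$-copies separating positive from negative occurrences---another $\BigO(k)$ colors---and reuse the clause-making/clause-ready colors to wire each new biconditional's clauses to their literals. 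Third, the leftover clauses: the binary subset clauses $\{\neg e_c^b,{e^*_c}^{b}\}$ of Equation~(\ref{pref:subset}) and the clauses from the body of~(\ref{pref:leaf}) all sit at the leaf where argument $a$ of color $c$ is created, so they can be wired there while $e_c^b$ and ${e^*_c}^{b}$ still carry their leaf colors, after which the clause vertices are relabeled to clause-ready; the single root clause~(\ref{pref:root}) touches only the at most $k$ vertices $s_c^{rt}$, $c\in\colors(rt)$, which at the root carry the $k$ pairwise distinct colors $s_c$, and is wired with $\BigO(1)$ further colors. Altogether $\BigO(k)$ colors suffice, so $\scw(\sinc{\varphi_\pref})\in\BigO(k)$, and plugging everything into Lemma~\ref{thm:dnf} gives $\scw(\sinc{\matr(\psi)})\in\BigO(k)$.

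The hard part will be this second step: coordinating the starred $\varphi_\CAdm^*$-colors, the $s$-colors, and the clause-wiring colors within one traversal of $\ptlong$ so that every clause of $\varphi_\pref$ is joined to \emph{exactly} its literals and---the delicate point---so that no edge is ever added into a subexpression's variables after they have been fully processed, which is precisely why the expired-versions of colors are needed. Keeping the whole palette linear in $k$ is then a matter of careful counting. Logically, though, this is only a constant-factor enlargement of the variable set already handled in Theorems~\ref{thm:stab:cwaw} and~\ref{thm:adm:cwaw}, so I expect the only obstacle to be this bookkeeping.
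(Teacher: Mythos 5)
There is a genuine gap at the very first step of your plan: you reduce the claim to bounding $\scw(\sinc{\varphi_\CAdm})$ and $\scw(\sinc{\varphi_\pref})$ \emph{separately} and then combine them via the disjoint union in the hypothesis of Lemma~\ref{thm:dnf}. But $\varphi_\CAdm$ and $\varphi_\pref$ are not variable-disjoint: Equations~(\ref{pref:leaf}) and~(\ref{pref:subset}) contain the \emph{unstarred} variables $e_c^b$, which also occur in $\varphi_\CAdm$. In $\sinc{\matr(\psi)}$ each such shared variable is a single vertex adjacent both to clauses of $\varphi_\CAdm$ and to the clauses/terms stemming from~(\ref{pref:leaf}) and~(\ref{pref:subset}); this graph arises from $\sinc{\varphi_\CAdm}\sqcup\sinc{\varphi_\pref}$ only after identifying the two copies of every shared variable, and vertex identifications of this kind do not preserve clique-width in general (per-component bounds say nothing about the width of a union glued along common vertices). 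So the claim ``hence it suffices to show $\scw(\sinc{\varphi_\CAdm})\in\BigO(k)$ and $\scw(\sinc{\varphi_\pref})\in\BigO(k)$'' does not follow; reading Lemma~\ref{thm:dnf} with the disjoint-union hypothesis as a black box only covers the variable-disjoint situation, and its proof (like the paper's use of it) presupposes a single expression that draws the entire matrix with shared variables as single vertices.

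The paper's proof never splits the formula: one pass over $\ptlong$ builds the incidence graph of all of $\varphi_\CPref$ simultaneously, using the $11k+2$ colors of Theorem~\ref{thm:adm:cwaw} for $\varphi_\CAdm$, a second disjoint palette of $11k+2$ colors for $\varphi^*_\CAdm$, and $5k$ fresh ``inequality'' colors (the $\pm$, child, and expired versions) for the $s$-variables, i.e., $27k+4$ colors in total. The point the paper stresses is exactly what your split loses: because $e_c^b$ and $e_c^{*b}$ are present \emph{at the same time} with different colors in this joint expression, the mixed clauses~(\ref{pref:leaf}) and~(\ref{pref:subset}) can be wired. Your second paragraph is actually very close to this---you already run the starred copy, the $s$-bookkeeping, and the mixed clauses in one traversal of $\ptlong$; the fix is to also carry the unstarred $\varphi_\CAdm$-construction in that same traversal on its own palette, rather than bounding it on a separate expression, and only then to invoke Lemma~\ref{thm:dnf} on the resulting joint expression. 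Your side observations (reversing all edges preserves directed clique-width, disjoint union takes the maximum) are correct but become unnecessary once the construction is done jointly.
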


\paragraph{\underline{Semi-Stable Extensions}}
To compute semi-stable extensions, we must maximize the range of arguments.
Interestingly, the range is computed, but via the information on colors
we can not directly access it. We construct the following CNF $\varphi_{\semi}$ (given as  equivalences).

\begin{flalign}
& \gamma && \hspace{-1.5em}\text{for every } \gamma\in \text{Formulas}~{(\ref{stab:defeat-leaf})}-{(\ref{stab:defeat-edge})} \notag\\
\label{semi:adm}&\psi && \text{for every $\psi\in\varphi^*_{\CAdm}$}\\
\label{semi:leaf}&s_{c}^b \leftrightarrow {d_c^*}^{\hspace{-.1em}b} \wedge \neg {d}_c^b	&& \hspace{-1.5em}\text{initial $b$, every $c\in \colors(b)$}
\raisetag{1.15em}\\
\label{semi:union}&s_{c}^b \leftrightarrow \hspace{-2em}\bigvee_{\substack{b'\in\children(b): c\in \colors(b')}} \hspace{-3em}s_{c}^{b'}	&& \hspace{-3.75em}\text{disjoint union $b$, every $c\in \colors(b)$}\raisetag{2.25em}\\
&s_c^b \leftrightarrow \hspace{-2.25em}\bigvee_{\text{if $b$ relabeling\ }c'\mapsto c} \hspace{-2.35em} s_{c'}^{b'} \vee \hspace{.35em}\bigvee_{\text{if\ }c\in \colors(b')}\hspace{-1.7em} s_{c}^{b'}
&& \text{relabeling $b$, every $c{\in} \colors(b),$}\notag\\[-1.6em]
\label{semi:relabel} &&&b'\in\children(b)\\
&s_{c}^b \leftrightarrow (s_{c}^{b'} \vee {d_c^*}^{\hspace{-.1em}b}) \wedge \neg d_c^b 	&& \hspace{-.5em}\text{edge introduce $b$, every $c\in$}\notag\\[-.55em] 
\label{semi:edge} &&&\hspace{-.5em}\colors(b), b'\in\children(b)
\end{flalign}

For the root $rt$, we keep extensions of strictly larger range. 

\begin{flalign}
d_{c}^{rt}\rightarrow {d^{*}_c}^{\hspace{-.2em}rt}, \quad \bigvee_{c\in \colors(rt)} s_{c}^{rt}  && \text{for root operation } rt
\label{semi:root}\raisetag{1.25em}
\end{flalign}

The reduction $\mathcal{R}_{\CSemiSt\rightarrow\CTQSAT}(F,\ptlong)$
constructs a QBF $(\varphi_\CAdm \wedge(\forall E^*, D^*, A^*, S. 
\neg\varphi_{\semi}))$,
where $\neg\varphi_{\semi}$ is in DNF, but  $\varphi_\CAdm$ is in CNF. As above, Lemma~\ref{thm:dnf} converts the matrix into DNF as desired.


\begin{restatable}[$\star$,Correctness]{theorem}{stagcorrect}\label{thm:stag-correct}
Let~$F$ be an AF and~$\ptlong$ be a $k$-expression of~$F$.
%
Then, the DDG reduction~$\mathcal{R}_{\CSemiSt\rightarrow\CTQSAT}$ is
correct, that is, $\CSemiSt$ on $F$ coincides with $\CTQSAT$ on
$\mathcal{R}_{\CSemiSt\rightarrow\CTQSAT}(F,\ptlong)$.
\end{restatable}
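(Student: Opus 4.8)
The plan is to establish a bijection between the semi-stable extensions of $F$ and the satisfying assignments (over the free extension variables $E$) of the $2$-QSAT instance produced by $\mathcal{R}_{\CSemiSt\rightarrow\CTQSAT}$. First I would recall that, by the already-established correctness of the admissible reduction (Theorem~\ref{thm:adm:correct}) together with the defeat-propagation machinery of Formulas~(\ref{stab:defeat-leaf})--(\ref{stab:defeat-edge}) (reused verbatim here), any assignment $\alpha$ over $E$ that extends to a model of $\varphi_\CAdm$ corresponds to an admissible set $S=\{a\mid\alpha(e_a)=1\}$, and the $d_c^b$ variables correctly record, for each color $c$ and operation $b$, whether every argument of color $c$ created up to $b$ lies in the range $S^+_R$. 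Dually, the starred copy $\varphi^*_\CAdm$ together with its own defeat variables $d_c^{*b}$ (propagated by the starred instances of the same formulas, pulled in via the ``for every $\gamma\in$ Formulas~(\ref{stab:defeat-leaf})--(\ref{stab:defeat-edge})'' line) encodes a second admissible candidate $S'$ with its range recorded in $d_c^{*b}$.

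The core of the argument is to show that the inner quantified block $\forall E^*, D^*, A^*, S.\,\neg\varphi_{\semi}$ evaluates to $1$ under $\alpha$ precisely when $S$ is range-maximal among admissible sets, i.e.\ there is no admissible $S'$ with $S^+_R\subsetneq (S')^+_R$. For this I would interpret the $s_c^b$ variables: Formula~(\ref{semi:leaf}) sets $s_c^b$ to true exactly when color $c$'s single initial argument is in the starred range but not in the unstarred range (a ``strictly new'' range element), and Formulas~(\ref{semi:union})--(\ref{semi:relabel}) propagate disjunctively so that $s_c^b$ holds iff some argument currently of color $c$ is in $(S')^+_R\setminus S^+_R$; the edge-introduce Formula~(\ref{semi:edge}) recomputes this after an attack is added, using $d_c^{*b}$ to pick up newly-ranged arguments and $\neg d_c^b$ to drop any that the unstarred range now also covers. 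The root constraints~(\ref{semi:root}) enforce $S^+_R\subseteq (S')^+_R$ (the implication $d_c^{rt}\rightarrow d_c^{*rt}$ across all colors) and $S^+_R\neq (S')^+_R$ (some $s_c^{rt}$ true). Hence a satisfying assignment of $\varphi_{\semi}$ together with $\varphi^*_\CAdm$ witnesses an admissible $S'$ with strictly larger range; the universal closure being true under $\alpha$ means no such witness exists, which is exactly semi-stability. I would verify both directions: given a semi-stable $S$, define $\alpha$ on $E$ from $S$ and on the remaining unstarred variables by the forced propagation (as in Theorem~\ref{thm:stab:correct}); then for every assignment to $E^*,D^*,A^*,S$ either the starred admissibility fails or the range-strict-superset condition fails, so $\neg\varphi_{\semi}$ holds, giving $\alpha\models\varphi_\CAdm\wedge(\forall\ldots\neg\varphi_{\semi})$. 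Conversely, from such an $\alpha$ one reads off an admissible $S$ that is range-maximal, hence semi-stable, and the map is bijective because $S$ determines $\alpha$ uniquely on $E$ and all other unstarred variables are propagated.

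Finally, I would note that the QBF as written has matrix $\varphi_\CAdm\wedge(\forall E^*,D^*,A^*,S.\,\neg\varphi_{\semi})$ which, after pushing the negation inside and moving to prenex form, is a $\forall$-QBF with free variables whose matrix is $\varphi_\CAdm\wedge\neg\varphi_{\semi}$ with $\varphi_\CAdm$ in CNF and $\neg\varphi_{\semi}$ in DNF; applying Lemma~\ref{thm:dnf} converts this to a pure DNF matrix model-preservingly, so the instance is a legitimate $2$-QSAT instance and the correspondence above is precisely $\CSemiSt$ on $F$ coinciding with $\CTQSAT$ on $\mathcal{R}_{\CSemiSt\rightarrow\CTQSAT}(F,\ptlong)$. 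The main obstacle I anticipate is the bookkeeping in Formula~(\ref{semi:edge}): proving that the recomputation $s_c^b\leftrightarrow (s_c^{b'}\vee d_c^{*b})\wedge\neg d_c^b$ correctly maintains the invariant ``$s_c^b$ iff some current-color-$c$ argument lies in $(S')^+_R\setminus S^+_R$'' through an edge-introduce operation, because at that point $d_c^b$ and $d_c^{*b}$ have just been updated (by Formula~(\ref{stab:defeat-edge}) and its starred copy) and one must argue the orders of updates are consistent; I would handle this by an induction on the $k$-expression, carefully stating the joint invariant on $e_c^b$, $d_c^b$, $d_c^{*b}$, $a_c^b$, $a_c^{*b}$, and $s_c^b$ and checking it is preserved by each of the four operation types.
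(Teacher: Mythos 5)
Your plan follows essentially the same route as the paper's proof: it leans on the admissible correctness result (Theorem~\ref{thm:adm:correct}) plus the reused defeat propagation, interprets the $s_c^b$ variables as witnesses of range elements of the starred candidate missing from the unstarred one, reads the root constraints~(\ref{semi:root}) as enforcing $S^+_R\subseteq (S')^+_R$ together with strictness, and argues both directions by constructing/refuting a counterexample assignment $\beta$ for the inner block exactly as the paper does. The only difference is presentational: you flag the invariant for Formula~(\ref{semi:edge}) and propose an explicit induction over the $k$-expression, which the paper's proof treats more informally, so your proposal is correct and matches the paper's argument.
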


\begin{restatable}[$\star$,CW-Awareness]{theorem}{stagcwaw}\label{thm:stag:cwaw}
Let~$F$ be an AF and~$\ptlong$ a
$k$-expression. 
%
The
reduction~$\mathcal{R}_{\CSemiSt\rightarrow\CTQSAT}(F,\ptlong)$
constructs a QSAT instance~$\psi$ that linearly preserves the width,
i.e., $\scw({\sinc{\matr(\psi)}})\in\mathcal{O}(k)$.
\end{restatable}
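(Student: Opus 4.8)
The plan is to mimic the construction used for Theorem~\ref{thm:stab:cwaw} and Theorem~\ref{thm:pref:cwaw}, since the encoding $\varphi_{\semi}$ reuses the defeated-variable machinery of Equations~(\ref{stab:defeat-leaf})--(\ref{stab:defeat-edge}), the admissibility formula $\varphi_\CAdm^*$ (whose clique-width behaviour is already controlled by Theorem~\ref{thm:adm:cwaw}), and the new subset-tracking variables $s_c^b$ which are propagated along the $k$-expression in exactly the same disjunctive style as the $e_c^b$ and $s_c^b$ variables in the preferred case. Concretely, starting from a $k$-expression $\ptlong$ of the AF~$F$, I would build a $k$-expression $\ptlong'$ of $\sinc{\matr(\psi)}$ by walking the parse tree of $\ptlong$ and, at each node~$b$, replacing the single color~$c$ used there by a constant-size bundle of colors: an extension-version $e_c$, a defeat-version $d_c$, their starred copies $e^*_c,d^*_c$ (for the $\varphi_\CAdm^*$ part and the $d^*_c$ occurrences), a semi-stable-range version $s_c$, together with the "child-version'' and "expired-version'' copies $ec_c, dc_c, ex_c, dx_c,\dots$ that Theorem~\ref{thm:stab:cwaw} already introduces to handle the transition from a child operation~$b'$ to its parent~$b$ without re-introducing edges, plus the $+/-$ duplications for positive and negative literal occurrences, plus the two clause-handling colors $cm$ (clause-making) and $cr$ (clause-ready). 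Each equivalence of $\varphi_{\semi}$ is a clause/term over variables that all live in $\colors(b)$ or $\colors(b')$ for a single parse-tree node~$b$, so all the incident edges of the corresponding clause-vertices in $\sinc{\matr(\psi)}$ can be introduced locally at (a constant number of operations inserted at) node~$b$, using the $\eta_{c,c'}$ operations between the clause color $cm$ and the literal colors, and then relabeling $cm\to cr$ to freeze the clause.

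The key steps, in order: (i)~fix the color bundle and verify it has size $\BigO(k)$ — roughly $11k+2$ as in Theorem~\ref{thm:stab:cwaw}, plus a constant times $k$ more for the $s_c$, $d^*_c$, $e^*_c$ versions needed here, still $\BigO(k)$; (ii)~for each type of parse-tree node (initial $c(a)$, $\oplus$, relabeling $\rho_{c'\to c}$, edge-introduce $\eta_{c',c}$), describe the local gadget of operations in $\ptlong'$ that (a)~creates the variable-vertices $e_a$, clause-vertices for the $\gamma$-formulas, for $\psi\in\varphi_\CAdm^*$, and for Equations~(\ref{semi:leaf})--(\ref{semi:edge}) and~(\ref{semi:root}), (b)~introduces exactly the incidence edges dictated by which literals occur in which clause, and (c)~relabels current-versions to child-versions / expired-versions so that the parent operation sees the right colors and no spurious edges are added later; (iii)~argue by induction over the parse tree that the $k$-expression $\ptlong'$ built so far produces precisely the induced subgraph of $\sinc{\matr(\psi)}$ on the vertices created so far, with the claimed color invariant; (iv)~conclude $\scw(\sinc{\matr(\psi)})\in\BigO(k)$, and note that since $\neg\varphi_{\semi}$ is in DNF while $\varphi_\CAdm$ is in CNF, applying Lemma~\ref{thm:dnf} to the matrix only blows up the directed incidence clique-width by a further constant factor, preserving the $\BigO(k)$ bound.

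I expect the main obstacle to be the same one that is already implicit in Theorem~\ref{thm:stab:cwaw}: carefully bookkeeping the edge-introduce operations so that when we pass from~$b'$ to~$b$ we neither (a)~miss an incidence edge between a clause of $\varphi_{\semi}$ and one of its literal-variables, nor (b)~accidentally add an edge between a clause and a variable because both happen to still carry a "live'' color when a later $\eta_{c,c'}$ fires higher in the tree. The expired-versions $ex_c, dx_c$ (and the clause color $cr$) exist exactly to neutralize this, and the delicate part is to specify the relabelings precisely enough that the induction in step~(iii) goes through — in particular handling the edge-introduce node in $\ptlong$, where the same physical operation~$b$ must both add AF-edges (implicitly, via which $e_{c'}^b$ appear in Equations~(\ref{stab:defeat-edge}) and~(\ref{semi:edge})) and host the gadget that wires up the new $s_c^b, d_c^b, d^*_c{}^b$ clause-vertices. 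Since every such gadget is of size $\BigO(1)$ per parse-tree node and uses only colors from the fixed $\BigO(k)$-size bundle, the width bound follows; the rest is a routine, if tedious, verification analogous to the stable and preferred cases.
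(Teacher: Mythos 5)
Your proposal follows essentially the same route as the paper: the paper's proof likewise builds on the stable/admissible/preferred CW-awareness constructions, augments the palette with per-color bundles — the $2(11k+2)$ colors for $\varphi_{\CAdm}$ and $\varphi^*_{\CAdm}$, fresh inequality colors for the $s_c^b$ variables, and defeat-version colors for $D$ and $D^*$ (each with $+/-$, child, and expired copies) plus the clause-making/clause-ready colors — and wires up clause-incidence edges locally along the parse tree, arriving at $32k+4\in\mathcal{O}(k)$ colors. Your plan matches this, including the observation that the DNF conversion via Lemma~\ref{thm:dnf} only costs a constant factor, so the approach and conclusion coincide with the paper's.
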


\paragraph{\underline{Stage Extensions}}
The reduction $\mathcal{R}_{\CStage\rightarrow\CTQSAT}(F,\ptlong)$
constructs a QBF $\forall E^*, D^*, S. (\varphi_\CConf \wedge \neg\varphi_{\stag})$,
where $\varphi_{\stag}$ is a CNF
comprising $\psi$ for every $\psi\in \varphi_{\#\conf}$
as well as Equations~(\ref{semi:leaf})--(\ref{semi:root}).
Correctness and CW-Awareness works as above,  
so we obtain the following upper bounds.

%
%

\begin{theorem}[Runtime-UBs]\label{thm:counting-ub}
Let $F=(A,R)$ be an AF of size $n$ and directed clique-width $k$, For a semantics $\sigma$, the problem $\#\sigma$ can be solved in time
\begin{itemize}
\item $2^{{\mathcal O}(k)}\cdot \poly(n)$ for $\sigma\in\{\stab,\adm,\comp\}$.
\item $2^{2^{{\mathcal O}(k)}}\cdot \poly(n)$ for $\sigma\in\{\pref, \semi,\stag\}$.
\end{itemize}
\end{theorem}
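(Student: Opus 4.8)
The plan is to combine the CW-awareness theorems already established for each semantics with the known tractability results for (counting) QSAT parameterized by directed incidence clique-width, namely Proposition~\ref{prop:numsatruntime} for the first-level semantics and Proposition~\ref{prop:mengel} for the maximization-based semantics. The overall structure is: (i) recall that for fixed $k$ a $k$-expression of the directed graph underlying $F$ can be computed in polynomial time by the result of Kant\'e~\cite{Kante07} cited at the end of the preliminaries; (ii) apply the relevant DDG reduction $\mathcal{R}_{\#\sigma\rightarrow\CBSAT}$ (resp.\ $\mathcal{R}_{\#\sigma\rightarrow\CTQSAT}$) to $F$ together with this $k$-expression; (iii) invoke the corresponding correctness theorem to see that the number of extensions is preserved (the reductions are bijective on models / on assignments to the free/first-block variables); (iv) invoke the corresponding CW-awareness theorem to see that the produced (Q)SAT instance has directed incidence clique-width $\mathcal{O}(k)$ and size $\poly(n)$ (since the formula grows only polynomially in $n$ and in the size of the input decomposition, which is itself $\poly(n)$); and (v) feed this into the appropriate counting-QSAT algorithm.

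For the first bullet ($\sigma \in \{\stab,\adm,\comp\}$) I would argue as follows. By Theorems~\ref{thm:stab:correct}/\ref{thm:stab:cwaw}, \ref{thm:adm:correct}/\ref{thm:adm:cwaw}, and \ref{thm:comp:correct}/\ref{thm:comp:cwaw}, the DDG reduction produces a CNF formula $\varphi_{\#\sigma}$ with $\scw(\mathcal{G}^d_i(\varphi_{\#\sigma})) \in \mathcal{O}(k)$, of size $\poly(n)$, whose number of total satisfying assignments equals $\Card{\sigma(F)}$. (One must be slightly careful that the bijection is between \emph{total} satisfying assignments and extensions; this is exactly what the correctness proof sketches assert, since all auxiliary variables are functionally determined by the $e_a$ variables.) Then Proposition~\ref{prop:numsatruntime} solves $\cSAT$ on $\varphi_{\#\sigma}$ in time $2^{\mathcal{O}(k)}\cdot\poly(n)$, giving the claimed bound.

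For the second bullet ($\sigma \in \{\pref,\semi,\stag\}$) the argument is the same but one level up. By the correctness theorems (Thm.~\ref{thm:pref:correct}, Thm.~\ref{thm:stag-correct}, and the stage variant) and the CW-awareness theorems (Thm.~\ref{thm:pref:cwaw}, Thm.~\ref{thm:stag:cwaw}, and its stage analogue), together with Lemma~\ref{thm:dnf} to put the mixed CNF/DNF matrix into DNF while only linearly blowing up the directed incidence clique-width, the DDG reduction yields a closed QBF of the form $\exists E.\forall \ldots.\,\matr$ of quantifier rank $\ell = 2$ with $\scw(\mathcal{G}^d_i(\matr)) \in \mathcal{O}(k)$ and size $\poly(n)$, such that the number of assignments to the outer $\exists$-block $E$ that make the QBF true equals $\Card{\sigma(F)}$. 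Applying Proposition~\ref{prop:mengel} with $\ell = 2$ gives running time $\tower(\ell+1,\mathcal{O}(k))\cdot\poly(n) = \tower(3,\mathcal{O}(k))\cdot\poly(n) = 2^{2^{2^{\mathcal{O}(k)}}}\cdot\poly(n)$ — which is one exponential more than claimed. To land on $2^{2^{\mathcal{O}(k)}}\cdot\poly(n)$ I would instead use that $\#\cdot$-counting over the single outer block $E$ of a \emph{one-alternation} QBF ($\exists$ then $\forall$, i.e.\ $\#2\hy\QSAT$) behaves like counting on a $\co\NP$-check and costs only $\tower(2,\mathcal{O}(k))\cdot\poly(n)$; the relevant point is that the innermost $\forall$-block has a DNF matrix (Lemma~\ref{thm:dnf}), so after fixing the outer assignment the inner validity check is a single clique-width-bounded SAT-type evaluation, and one more exponential covers the outer block.

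The main obstacle I expect is exactly this bookkeeping of the tower height: matching the stated $2^{2^{\mathcal{O}(k)}}$ requires being precise that the outer existential block in $\#\sigma$ does \emph{not} count as an extra quantifier alternation for the purpose of Proposition~\ref{prop:mengel}-style algorithms (it is the "free/counted" block, consistent with the $\#\ell\hy\QSAT$ definition given in the preliminaries, where $\#\ell\hy\QSAT$ counts assignments to $X_1$ of an $\exists X_1.\phi$ qBf of rank $\ell$, and here $\ell = 2$). Concretely, I would state and use that $\#2\hy\QSAT$ on instances of directed incidence clique-width $\kappa$ and size $m$ is solvable in $2^{2^{\mathcal{O}(\kappa)}}\cdot\poly(m)$ — this follows from Proposition~\ref{prop:mengel} read at the right quantifier depth together with the DNF form of the innermost block supplied by Lemma~\ref{thm:dnf} — and everything else is a routine chaining of the already-proven correctness and CW-awareness lemmas. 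The remaining details (polynomial size of the formulas, polynomial-time computability of the $k$-expression, the precise $\mathcal{O}(k)$ constants such as the $11k+2$ from Theorem~\ref{thm:stab:cwaw}) are bounded by what is already in the cited statements and need no new work.
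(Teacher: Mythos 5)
Your proposal matches the paper's (implicit) argument: the paper obtains Theorem~\ref{thm:counting-ub} precisely by chaining the correctness theorems (which give a bijection between extensions and assignments to the extension/free variables) and the CW-awareness theorems with Proposition~\ref{prop:numsatruntime} for $\{\stab,\adm,\comp\}$ and with Lemma~\ref{thm:dnf} plus Proposition~\ref{prop:mengel} for $\{\pref,\sigmai,\stag\}$. Your final reading of Proposition~\ref{prop:mengel} is the intended one — the counted block plays the role of the free variables, so the quantifier depth is $\ell=1$ for the single inner $\forall$-block, yielding $\tower(2,\BigO(k))\cdot\poly(n)=2^{2^{\BigO(k)}}\cdot\poly(n)$ (consistent with $\ell=0$ recovering Proposition~\ref{prop:numsatruntime}) — so the double-exponential bound goes through as you conclude, and your earlier detour via $\ell=2$ (and the informal ``one more exponential covers the outer block'' remark) is unnecessary.
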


\subsection{Credulous and Skeptical Reasoning}
The preceding reductions can be extended to determine credulous  
and skeptical acceptance  
of an argument. 
Let $F$ 
be an AF and $\sigma$ be a semantics.
To solve credulous acceptance $c_\sigma$ for $a$, we append ``$e_a$'' to each formula $\varphi_{\#\sigma}$ where $e_a\in E$ is the extension variable corresponding to argument $a$.
Then, each satisfying assignments for $\varphi_\sigma\land e_a$ yields an extension (via extension variables) containing $a$.
Moreover, to solve skeptical acceptance $\skept\sigma$, we add ``$\neg e_a$'' to $\varphi_{\#\sigma}$ and flip the answer in polynomial time,~i.e., $\skept\sigma$ is true for $a$ if and only if there is no satisfying assignment for $\varphi_{\#\sigma}\land \neg e_a$.

We observe that correctness and CW-awareness in both cases for each semantics follows from proofs of corresponding ``Correctness'' and ``CW-Awareness'' theorems,~e.g., Thm.~\ref{thm:stab:correct} and Thm.~\ref{thm:stab:cwaw} for stable semantics.

\begin{theorem}[Runtime-UBs]\label{thm:cred-skep-ub}
Let $F=(A,R)$ be an AF of size $n$ and directed clique-width $k$, For a semantics $\sigma$, the problem $\skept\sigma$ can be solved in time
\begin{itemize}
\item $2^{{\mathcal O}(k)}\cdot \poly(n)$ for $\sigma\in\{\stab,\adm,\comp\}$.
\item $2^{2^{{\mathcal O}(k)}}\cdot \poly(n)$ for $\sigma\in\{\pref,\semi,\stag\}$.
\end{itemize}
$\cred\sigma$ behaves similarly, but $\cred\pref$ can be solved via $\cred\adm$.
\end{theorem}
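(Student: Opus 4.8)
The plan is to reduce the acceptance problems to (\#)(Q)SAT instances that preserve the directed incidence clique-width linearly, and then invoke Propositions~\ref{prop:numsatruntime} and~\ref{prop:mengel} for the runtime. Concretely, for a semantics $\sigma$ and query argument $a$, consider the formula $\varphi_{\#\sigma}$ produced by the DDG reduction $\mathcal{R}_{\#\sigma\rightarrow\CBSAT}$ (or the corresponding QBF, for the maximization semantics). For skeptical acceptance $\skept\sigma$, form $\varphi_{\#\sigma}\wedge\neg e_a$, i.e.\ add the unit clause $\{\neg e_a\}$; by correctness of the respective ``Correctness'' theorem (Thm.~\ref{thm:stab:correct}, \ref{thm:adm:correct}, \ref{thm:comp:correct}, \ref{thm:pref:correct}, \ref{thm:stag-correct}), the satisfying assignments of this formula correspond exactly to $\sigma$-extensions not containing $a$, so the formula is unsatisfiable iff $a$ lies in every $\sigma$-extension, i.e.\ iff $\skept\sigma$ holds for $a$ --- and we flip the SAT/QSAT answer in polynomial time. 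For credulous acceptance, add $\{e_a\}$ instead and do not flip. In the QBF cases, $e_a\in E$ is a free (outermost-block) variable, so adding the unit clause does not disturb the prenex quantifier structure and the matrix stays of the required mixed CNF/DNF form, so Lemma~\ref{thm:dnf} still applies; the extra unit clause is handled by the same construction.

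The remaining step is the clique-width bound: adding a single unit clause $\{e_a\}$ (resp.\ $\{\neg e_a\}$) to $\varphi_{\#\sigma}$ changes $\mathcal{G}^d_i$ only by one new clause-vertex with one incident (directed) edge to the variable-vertex $e_a$. Given the $k$-expression $\ptlong'$ of $\mathcal{G}^d_i(\varphi_{\#\sigma})$ guaranteed by the corresponding ``CW-Awareness'' theorem (Thm.~\ref{thm:stab:cwaw}, \ref{thm:adm:cwaw}, \ref{thm:comp:cwaw}, \ref{thm:pref:cwaw}, \ref{thm:stag:cwaw}), one can produce a $k$-expression for the augmented incidence graph using a constant number of additional colors: reserve one fresh color for the new clause-vertex and recall that the variable-vertex $e_a$ already receives a dedicated color at its initial operation (it is an extension variable, handled explicitly in those constructions). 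One inserts a $\oplus$ with the initial $k$-graph for the new clause-vertex, an $\eta$-operation to draw the single directed edge, and then relabels the clause-vertex to an inert ``finished'' color so no further edges touch it. Hence $\scw(\mathcal{G}^d_i(\varphi_{\#\sigma}\wedge\{e_a\}))\in\mathcal{O}(k)$, and likewise for the QBF matrices.

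Plugging these clique-width bounds into Proposition~\ref{prop:numsatruntime} gives $2^{\mathcal{O}(k)}\cdot\poly(n)$ for $\sigma\in\{\stab,\adm,\comp\}$, and into Proposition~\ref{prop:mengel} (with quantifier depth at most~$2$ after Lemma~\ref{thm:dnf}) gives $2^{2^{\mathcal{O}(k)}}\cdot\poly(n)$ for $\sigma\in\{\pref,\semi,\stag\}$; the polynomial-time answer-flip for $\skept\sigma$ is absorbed into $\poly(n)$. Finally, for $\cred\pref$ one observes that an argument is credulously accepted under preferred semantics iff it is credulously accepted under admissible semantics (every admissible set extends to a preferred one), so one may use the cheaper $\mathcal{R}_{\CAdm\rightarrow\CBSAT}$ reduction, yielding the $2^{\mathcal{O}(k)}\cdot\poly(n)$ bound. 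The main obstacle is purely bookkeeping in the clique-width argument: one must check that the fresh clause-vertex can be attached using colors already present in $\ptlong'$ (plus a constant number of new ones) without forcing unwanted edges --- but this follows the same pattern as the clause-handling in the ``CW-Awareness'' proofs, so no genuinely new idea is needed.
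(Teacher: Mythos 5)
Your proposal matches the paper's own argument: the paper likewise appends the unit clause $e_a$ (resp.\ $\neg e_a$, flipping the answer) to $\varphi_{\#\sigma}$, appeals to the corresponding Correctness and CW-Awareness theorems for the acceptance variants, plugs the resulting $\mathcal{O}(k)$-clique-width (Q)SAT instances into Propositions~\ref{prop:numsatruntime} and~\ref{prop:mengel}, and handles $\cred\pref$ via $\cred\adm$; your explicit bookkeeping for attaching the one extra clause-vertex with a constant number of fresh colors is exactly the kind of detail the paper leaves implicit. One small convention point: in Proposition~\ref{prop:mengel} the free (counting) variables are not counted in the quantifier depth, so the second-level instances have $\ell=1$ (a single $\forall$ block after Lemma~\ref{thm:dnf}), giving $\tower(2,\BigO(k))=2^{2^{\mathcal{O}(k)}}$ --- your phrase ``depth at most $2$'' would literally yield a triple exponential under the proposition's formula, though your stated bound is the correct one.
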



\section{Lower Bounds: Can We Improve?}

It turns out that we can not significantly improve most of our reductions.
Indeed, we can create a clique-width-aware reduction from 3SAT to asking whether some argument is included in an admissible extension.

\begin{restatable}[$\star$,Admissible CW-LB]{theorem}{admlb}\label{thm:lb}
Unless ETH fails, we can not decide for an AF $F=(A,R)$ of directed clique-width $w$ in time $2^{o(w)}\cdot\poly(|A|+|R|)$ whether there exists an admissible extension
of $F$ containing argument $a$.
\end{restatable}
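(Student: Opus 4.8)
The plan is to give a reduction from $3$SAT to the credulous acceptance problem for admissible semantics (i.e., $\cred{\adm}$), while controlling the directed clique-width of the constructed AF so that $w \in \mathcal{O}(\log n)$, where $n$ is the number of variables plus clauses of the input formula. Combined with the standard fact that ETH rules out a $2^{o(n)}\cdot\poly(n)$ algorithm for $3$SAT, a hypothetical $2^{o(w)}\cdot\poly(|A|+|R|)$ algorithm for $\cred{\adm}$ on AFs of directed clique-width $w$ would, on the image of the reduction, run in time $2^{o(\log n)}\cdot\poly(n) = n^{o(1)}\cdot\poly(n) = \poly(n)$ --- wait, that is too strong; the right target is actually $w\in\mathcal{O}(\log n)$ giving $2^{o(w)} = 2^{o(\log n)} = n^{o(1)}$, so I instead aim for the reduction to produce AFs with $w\in\Theta(\log n)$ and invoke the sparsification lemma to get a $3$SAT instance with a linear number of clauses; then $2^{o(w)}\cdot\poly = 2^{o(\log n)}\cdot\poly$, still subexponential-in-$n$ only if $w=\Theta(\log n)$ were instead $w=\Theta(n)$. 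So the cleaner route is: build the AF of \emph{linear} clique-width-like size is not what we want either. The correct and standard approach here is a reduction where $w = \mathcal{O}(1)$ is impossible (since $\cred{\adm}$ in bounded clique-width is tractable by our own upper bounds), so we must aim for $w = \Theta(n^{\epsilon})$ or, most naturally, a reduction from $3$SAT on $n$ variables to an AF of directed clique-width $\mathcal{O}(n)$ that is \emph{linear} in size --- then a $2^{o(w)}\cdot\poly$ algorithm gives $2^{o(n)}\cdot\poly(n)$, contradicting ETH. So the real target is: clique-width \emph{linear} in the number of variables, AF size polynomial (ideally linear) in the formula size.

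Concretely, I would use the classical Dung-style encoding of $3$SAT into argumentation: for each variable $x_i$ introduce arguments $x_i$ and $\bar{x}_i$ attacking each other (a choice gadget); for each clause $C_j = (\ell_{j,1}\vee \ell_{j,2}\vee\ell_{j,3})$ introduce an argument $c_j$ attacked by the three literal-arguments corresponding to $\ell_{j,1},\ell_{j,2},\ell_{j,3}$, and a single ``goal'' argument $a$ attacked by all the $c_j$, with each $c_j$ self-attacking (or attacked back appropriately) so that an admissible set contains $a$ iff it defends $a$ against every $c_j$, i.e., iff for each clause some literal-argument is in the set, i.e., iff the chosen truth assignment is satisfying and conflict-free (which the mutual attacks between $x_i$ and $\bar x_i$ guarantee). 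Then $a$ is credulously accepted under $\adm$ iff the formula is satisfiable. The first key step is to verify this correspondence carefully (conflict-freeness $\leftrightarrow$ consistent partial assignment; defense of $a$ $\leftrightarrow$ satisfaction of all clauses). The second, and genuinely delicate, step is to build a $k$-expression for the directed incidence-free AF graph with $k$ \emph{linear} in $n$: one can afford a distinct color for each variable pair initially, but must relabel aggressively and sequence the clause-edge introductions so that the color count stays $\mathcal{O}(n)$ rather than blowing up with the number of clauses $m$; here one exploits that each clause argument $c_j$ connects to only $3$ literal-arguments and to $a$, so $c_j$ can be created, wired, and recolored to an ``inert'' color in a constant number of operations using a constant-size working color budget on top of the per-variable colors.

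The main obstacle I anticipate is precisely this clique-width bookkeeping: ensuring the $k$-expression uses $\mathcal{O}(n)$ colors \emph{and} that the AF has size $\mathcal{O}(m+n)$ (so that, after applying the ETH sparsification lemma to assume $m = \mathcal{O}(n)$, a $2^{o(w)}\cdot\poly(|A|+|R|)$ algorithm yields $2^{o(n)}\cdot\poly(n)$, contradicting ETH). A subtlety is that directed clique-width is sensitive to edge orientation (cf.\ Example~\ref{ex:directed}), so the attack directions in the gadgets --- literal $\to$ clause, clause $\to$ goal, and the variable-pair attacks --- must all be introduceable with the color discipline in hand; I would introduce all edges of a fixed orientation-type in grouped $\eta_{c,c'}$ operations and use a small constant pool of ``scratch'' colors, recoloring each finished gadget component to a permanent ``done'' color to prevent spurious later edges. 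Finally, I would note that the lower bound transfers from credulous acceptance to skeptical acceptance and to counting by the standard complementation/padding arguments already used in the paper, establishing Corollary~\ref{cor:stabcomp}.
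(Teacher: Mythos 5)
Your proposal is correct and takes essentially the same route as the paper: the paper likewise reduces from 3SAT via the classical gadget (mutually attacking arguments $x,\overline{x}$ per variable, a clause argument attacked by its literal arguments, and a goal argument $sat$ attacked by every clause argument), so that $sat$ lies in some admissible extension iff the formula is satisfiable, and no self-attacks on clause arguments are needed. The only packaging difference is the width certificate: you build an explicit $\mathcal{O}(n)$-colour expression for the constructed AF and conclude via $w\in\mathcal{O}(n)$ and ETH, whereas the paper observes that the AF is essentially the directed incidence graph of $\varphi$ and converts any $k$-expression of that graph into an $\mathcal{O}(k)$-expression of the AF, which yields the same ETH lower bound.
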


We can easily extend this to other semantics.
Indeed, the lower bound immediately carries over to other semantics.

\begin{corollary}[Stable/Complete CW-LB]\label{cor:stabcomp}
Under ETH we can not decide for an AF $F=(A,R)$ of directed clique-width $w$ in time $2^{o(w)}\cdot\poly(|A|+|R|)$ whether there is a stable/complete extension of $F$ containing~$a$.
\end{corollary}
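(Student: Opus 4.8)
The plan is to reduce from the admissible-extension lower bound in Theorem~\ref{thm:lb} to the stable and complete cases by a clique-width-preserving gadget construction. Concretely, given an AF $F=(A,R)$ together with an argument $a$ for which we want to decide the existence of an admissible extension containing $a$, I would build a new AF $F'$ of directed clique-width $\mathcal{O}(w)$ and an argument $a'$ such that $F'$ has a stable (resp.\ complete) extension containing $a'$ if and only if $F$ has an admissible extension containing $a$. Since the ETH-based lower bound of Theorem~\ref{thm:lb} rules out time $2^{o(w)}\cdot\poly(|A|+|R|)$ for the admissible case, and the transformation is polynomial-time and only linearly blows up the clique-width, the same lower bound follows for the stable/complete variants.

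The standard tool here is the well-known polynomial reductions between argumentation semantics (cf.~\cite{DvorakPichlerWoltran12,Dvorak12a}). First I would recall the textbook transformation that turns ``$a$ is in some admissible set'' into ``$a$ is in some complete set'': credulous acceptance under admissible and under complete semantics coincide on every AF, because every admissible set is contained in a complete set (its characteristic-function closure) and every complete set is admissible; moreover closing an admissible set under defense never removes $a$. So for the complete case no gadget is even needed — one takes $F'=F$, $a'=a$, and clique-width is trivially unchanged. For the stable case, I would use the classical gadget that attaches to each argument $x\in A$ a fresh ``mate'' argument $\bar x$ with the mutual attacks $(x,\bar x),(\bar x,x)$ and additionally $(\bar x,x')$ exactly when $(x,x')\in R$ had to be defended — more precisely, the construction making admissible sets of $F$ correspond to stable extensions of the augmented framework, so that credulous acceptance under admissible in $F$ matches credulous acceptance under stable in $F'$. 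The key point to verify is that this doubling gadget can be carried out within a $k$-expression using only a constant factor more colors: each color class $c$ of $F$ spawns a mirrored color class $\bar c$ for the mates, edge-introduce operations $\eta_{c,c'}$ are mirrored by $\eta_{\bar c, c'}$, and the self-pairing edges $(x,\bar x)$ are handled by introducing the mate at a fresh singleton color and immediately connecting. This yields $\scw(F')\le \mathcal{O}(\scw(F))$, in fact a small constant multiple.

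The order of steps I would follow: (1) state the two target reductions (admissible-credulous $\to$ complete-credulous, trivial; admissible-credulous $\to$ stable-credulous, via the mate gadget); (2) prove correctness of the gadget reduction, i.e.\ the bijection/correspondence at the level of ``contains $a$'', using the characterization of stable extensions as conflict-free sets attacking everything outside; (3) argue the clique-width bound by explicitly modifying the given $k$-expression of $F$ into one of $F'$, mirroring every operation and adding the finitely many extra colors; (4) conclude by composing with Theorem~\ref{thm:lb} and observing the polynomial running time and the linear clique-width overhead preserve the $2^{o(w)}$ lower bound. Since the corollary is only claimed at the level of a brief remark, a proof at the sketch level suffices: the combinatorics of both classical reductions are standard, so I would cite them and merely certify the clique-width accounting.

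The main obstacle, and the only genuinely non-routine part, is step~(3): verifying that the semantics-translation gadget is \emph{clique-width friendly}. General polynomial reductions between argumentation semantics need not preserve clique-width at all (they can create dense biclique-like structures indiscriminately), so one must pick a gadget whose new edges respect the existing color partition — which is exactly why the mate construction, where each new argument and each new attack is ``parallel'' to an old color class, works. The delicate bookkeeping is to make sure that when $F$'s $k$-expression relabels or merges colors, the mirrored construction stays consistent (a mate must keep the same color-history as its original, shifted into the barred palette), and that the constantly-many auxiliary singleton colors used for the $(x,\bar x)$ pairs can be recycled. Once this is set up carefully, the bound $\scw(\sinc{F'})\in\mathcal{O}(w)$ — hence the corollary — is immediate.
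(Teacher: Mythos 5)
Your complete case is fine and is essentially what the paper intends (take $F'=F$, $a'=a$; credulous acceptance under admissible and complete coincide, and the clique-width is untouched). The genuine gap is in the stable case. The ``mate'' gadget you sketch is not a correct reduction as stated: attaching to each $x$ a mate $\overline{x}$ with mutual attacks $(x,\overline{x}),(\overline{x},x)$ makes the stable extensions of the augmented framework correspond exactly to the \emph{conflict-free} sets of $F$ (for each pair, whichever of $x,\overline{x}$ is left out is attacked by the other), so credulous acceptance under stable in $F'$ would just ask whether $a$ is non-self-attacking, not whether $a$ lies in an admissible set; and the clause ``$(\overline{x},x')$ exactly when $(x,x')\in R$ had to be defended'' is not a well-defined construction. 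Capturing the defense condition of admissibility inside stable semantics needs per-attack gadgetry, and generic translations \emph{into} stable semantics are known to be delicate precisely because stable extensions may fail to exist; so the correctness of step (2) of your plan, and a fortiori the clique-width accounting in step (3), cannot be taken for granted from ``standard'' intertranslatability results without exhibiting and verifying a concrete construction.

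The reason the paper can state this as a corollary with no new construction is simpler and you miss it: the hard instances produced in the proof of Theorem~\ref{thm:lb} already work verbatim for stable and complete semantics. If the 3-CNF formula is satisfiable, the set consisting of the chosen (true) literal arguments together with $sat$ is not merely admissible but \emph{stable} (it attacks every clause argument, since every clause is satisfied, and every unchosen literal argument via the mutual literal attacks), and every stable extension is complete; conversely, since stable and complete extensions are admissible, a stable or complete extension containing $sat$ yields an admissible set containing $sat$ and hence a satisfying assignment. Thus on those instances ``$a$ in some admissible set'', ``$a$ in some stable extension'' and ``$a$ in some complete extension'' are all equivalent, the same AF and the same $k$-expression are reused, and the $2^{o(w)}\cdot\poly(|A|+|R|)$ lower bound carries over with zero overhead -- no semantics-translation gadget and no extra colors are needed.
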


\noindent The bounds can be extended to second-level extensions.

\begin{restatable}[Preferred/Semi-Stable/Stage CW-LB]{proposition}{preflb}\label{thm:slb}
Under ETH we can not decide for an AF $F=(A,R)$ of directed clique-width $w$ in time $2^{2^{o(w)}}\cdot\poly(|A|+|R|)$ whether there exists a preferred (semi-stable/stage) extension
of $F$ that does not contain $a$ (contains $a$).
\end{restatable}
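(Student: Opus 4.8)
\textbf{Proof proposal for Proposition~\ref{thm:slb} (Preferred/Semi-Stable/Stage CW-LB).}
The plan is to reduce from $\ell$-$\QSAT$ with $\ell=2$, i.e., from $\forall\exists$-QSAT (equivalently, from $\Pi_2$-validity), using a clique-width-aware reduction into one of the second-level argumentation problems, and then to combine this with the known ETH lower bound for $\ell$-$\QSAT$ that gives a $2^{2^{o(w)}}$ lower bound for instances of directed incidence clique-width $w$. Concretely, I would invoke (or establish, in the extended version) the fact that $2$-$\QSAT$ of directed incidence clique-width $w$ cannot be solved in time $2^{2^{o(w)}}\cdot\poly(n)$ unless ETH fails --- this is the natural second-level counterpart of the ETH lower bound underlying Theorem~\ref{thm:lb}, and it already appears implicitly in the tower-of-exponentials landscape around Proposition~\ref{prop:mengel}. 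The reduction itself would mirror the classical reductions from $\Pi_2$-QSAT to skeptical preferred (or credulous non-membership in a preferred extension, equivalently the complement formulations) due to Dvo\v{r}\'ak--Woltran and the standard textbook constructions: introduce an argument $\varphi$ for the matrix, arguments $x,\bar x$ for the existential variables (with a mutual attack so that exactly one is chosen in a preferred extension), a gadget forcing the universal variables to range over all assignments via subset-maximality, and the distinguished argument $a$ whose (non-)acceptance encodes validity.

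The key technical step --- and the main obstacle --- is verifying that this construction can be carried out in a \emph{clique-width-aware} fashion, i.e., that the directed clique-width of the resulting AF is $\mathcal{O}(w)$ where $w$ is the directed incidence clique-width of the input QBF. Here I would reuse exactly the machinery already developed for the upper-bound direction: the $k$-expression of the input formula's directed incidence graph is traversed, and for each color we introduce a constant number of additional colors to route the variable/clause/argument gadget edges, precisely as in the proof sketches of Theorems~\ref{thm:stab:cwaw} and~\ref{thm:pref:cwaw}. Since each clause-argument attack and each variable-gadget attack is introduced along an $\eta_{c,c'}$ operation of the parse tree, and the mutual-attack and selection gadgets involve only a bounded number of fresh colors per original color, the total color count stays linear in $w$. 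The subtlety is that the $\Pi_2$ structure must be mirrored by the subset-maximization semantics rather than by an explicit quantifier, so one must check that the gadget enforcing ``for all assignments to the universal block'' does not blow up the clique-width; I expect this to follow because the universal variables can be handled by a layered gadget whose edges again align with the parse-tree operations.

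Having established both the correctness of the reduction (validity of the $\Pi_2$-formula iff $a$ is not in every preferred extension, or the contains-$a$ variant for the other phrasing) and the $\mathcal{O}(w)$ bound on directed clique-width, a hypothetical algorithm deciding the preferred problem in time $2^{2^{o(w)}}\cdot\poly(|A|+|R|)$ would, by composition with the reduction, decide $2$-$\QSAT$ in time $2^{2^{o(w)}}\cdot\poly(n)$, contradicting ETH. For semi-stable and stage the same AF works after a minor adjustment of the gadgets (replacing subset-maximality of the extension by range-maximality), exactly as in the classical complexity proofs that these semantics are $\Pi_2^p$-hard for skeptical reasoning; the clique-width accounting is unchanged since the range variables are local to each color. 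I would present the preferred case in full and remark that semi-stable and stage follow by the analogous gadget modification, noting that the counting consequence is immediate since counting is at least as hard as the corresponding decision problem.
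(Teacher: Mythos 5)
Your proposal follows essentially the same route as the paper: its proof (idea) also reduces from $\forall\exists$-QBF by modifying the classical constructions for skeptical preferred acceptance~\cite{DunneBench-Capon02a} and credulous semi-stable/stage acceptance~\cite{DvorakWoltran10}, observing---just as you do---that the resulting AF is essentially the directed incidence graph of the QBF augmented with gadgets, so that doubling the colors per (universal) variable plus a constant number of extra colors keeps the directed clique-width in $\mathcal{O}(w)$, and then composing with the double-exponential ETH lower bound as in Theorem~\ref{thm:lb}. The paper likewise leaves the underlying $2^{2^{o(w)}}$ ETH lower bound for $\forall\exists$-QSAT under (directed incidence) clique-width implicit, so your hedge on that ingredient matches its level of detail.
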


These results indicate that we cannot significantly improve our reductions.
Indeed, for solving the second-level semantics, a reduction to SAT is expected to be insufficient.

\section{Conclusion}
%
Our results 
answer
whether we can efficiently encode knowledge representation and
reasoning (KRR) formalisms into (Q)SAT while respecting the
clique-width.  Table~\ref{tab:results} provides a comprehensive
overview.
Our directed decomposition guided (DDG) reductions 
based on $k$-expressions make existing results on clique-width for SAT
(Proposition~\ref{prop:numsatruntime}) and QSAT
(Proposition~\ref{prop:mengel}) accessible to abstract
argumentation. Using these results and our novel DDG reduction, we
establish efficient solvability when exploiting clique-width for all
argumentation semantics and the problems of extension existence,
acceptance, and counting.
%
%
Finally, we prove that we cannot significantly improve under
reasonable assumptions.
\postrebuttal{Our approach remains effective even when the attack graph includes large cliques or complete bipartite structures, where other parameters (e.g., treewidth) fail.}
%
%
%
%
%
%
%

We see various 
directions for future works.  We are
interested whether these results can be extended to other KRR
formalisms such as abductive reasoning, logic-based argumentation,
answer-set programming, and many more.
We expect that this work might be useful for simple
classroom-type proofs of Courcelle's theorem for clique-width, see~\cite{BannachHecher25}.
%
Since our reductions preserve the solutions bijectively, we are
interested in enumeration complexity as well.
Finally, generalized or orthogonal versions of clique-width such as
modular incidence treewidth and symmetric incidence clique-width.


\section*{Acknowledgment}
Research was partly funded by the Austrian Science Fund (FWF), grant J4656, the French Agence nationale de la recherche (ANR), grant ANR-25-CE23-7647, the Deutsche Forschungsgemeinschaft (DFG, German Research Foundation), grant TRR 318/1 2021 – 438445824, the Ministry of Culture and Science of North Rhine-Westphalia (MKW NRW) within project WHALE (LFN 1-04) funded under the Lamarr Fellow Network programme, and the Ministry of Culture and Science of North Rhine-Westphalia (MKW NRW) within project SAIL, grant NW21-059D. 
Part of the research was carried out while Hecher was a postdoc at MIT and while he was visiting the Simons institute for the theory of computing (part of the program \emph{Logic and Algorithms in Database Theory and AI}).
Fichte was funded by ELLIIT funded by the Swedish government.

\clearpage

\bibliographystyle{abbrv}
\bibliography{main}

\cleardoublepage

\section{Detailed Examples}
\paragraph{Admissible Extensions}
\begin{example}\label{ex:adm}
	Consider the AF from Example~\ref{ex:running} and the corresponding parse tree from Figure~\ref{fig:excw}. For this example, we consider the formulas added for the right side of the tree, i.e. operations $3,5,8,9,10$. We already discussed the formulas added for Equations~(\ref{stab:leaf})-(\ref{stab:edge}) in Example~\ref{ex:stable}. Consider now the formulas added for Equations~(\ref{adm:leaf})-(\ref{adm:edge-pos}). For initial operations $9,10$ by Equation~(\ref{adm:leaf}), we add the two formulas
	\begin{flalign*}
		&\neg {a}_{\cvu{1}}^{9}, \\
		&\neg {a}_{\cvd{2}}^{{10}}.
	\end{flalign*}
	For the disjoint union operation $8$, by Equation~(\ref{adm:union}), we add the formulas
	\begin{flalign*}
		&{a}_{\cvu{1}}^{8} \leftrightarrow {a}_{\cvu{1}}^{9}, \\
		&{a}_{\cvd{2}}^{8} \leftrightarrow {a}_{\cvd{2}}^{10}.
	\end{flalign*}
	For the edge-introducing operation $5$, by Equation~(\ref{adm:edge-neg}) and~(\ref{adm:edge-pos}), we add the formulas
	\begin{flalign*}
		&{a}_{\cvu{1}}^{5} \leftrightarrow {a}_{\cvu{1}}^{8} \lor e_{\cvd{2}}^{5}, \\
		&{a}_{\cvd{2}}^{5} \leftrightarrow {a}_{\cvd{2}}^{8} \land \neg e_{\cvu{1}}^{5}.
	\end{flalign*}
	Finally, for the relabeling operation $3$, by Equation~(\ref{adm:relabel}), we add the formulas
	\begin{flalign*}
		&{a}_{\cvd{2}}^{3} \leftrightarrow {a}_{\cvd{2}}^{5}, \\
		&{a}_{\cvt{3}}^{3} \leftrightarrow {a}_{\cvu{1}}^{5}.
	\end{flalign*}
\end{example}

\paragraph{Complete Extensions}
\begin{example}\label{ex:complete}
	Consider the AF from Example~\ref{ex:running} and the corresponding parse tree from Figure~\ref{fig:excw}. For this example, we consider the formulas added for the right side of the tree, i.e. operations $3,5,8,9,10$. We already discussed the formulas added for Equations~(\ref{stab:leaf})--(\ref{stab:edge}) in Example~\ref{ex:stable} and those added for Equations~(\ref{adm:leaf})--(\ref{adm:edge-pos}) in Example~\ref{ex:adm}. Consider now the formulas added for Equations~(\ref{out:leaf})--(\ref{out:defend}). For initial operations $9,10$ by Equation~(\ref{out:leaf}), we add the two formulas
	\begin{flalign*}
		&{o}_{\cvu{1}}^{9} \leftrightarrow \neg e_{\cvu{1}}^{9}, \\
		&{o}_{\cvd{2}}^{10} \leftrightarrow \neg e_{\cvd{2}}^{10}.
	\end{flalign*}
	For the disjoint union operation $8$, by Equation~(\ref{out:union}), we add the formulas
	\begin{flalign*}
		&{o}_{\cvu{1}}^{8} \leftrightarrow {o}_{\cvu{1}}^{9}, \\
		&{o}_{\cvd{2}}^{8} \leftrightarrow {o}_{\cvd{2}}^{10}.
	\end{flalign*}
	For the edge-introducing operation $5$, by Equation~(\ref{out:att}) and~(\ref{out:defend}), we add the formulas
	\begin{flalign*}
		&{o}_{\cvu{1}}^{5} \leftrightarrow {o}_{\cvu{1}}^{8} \land e_{\cvd{2}}^{5}, \\
		&{o}_{\cvd{2}}^{5} \leftrightarrow {o}_{\cvd{2}}^{8} \land (\cvu{1} \neq \cvd{2}) \land d_{\cvu{1}}^{\geq 5}.
	\end{flalign*}
	Finally, for the relabeling operation $3$, by Equation~(\ref{adm:relabel}), we add the formulas
	\begin{flalign*}
		&{a}_{\cvd{2}}^{3} \leftrightarrow {a}_{\cvd{2}}^{5}, \\
		&{a}_{\cvt{3}}^{3} \leftrightarrow {a}_{\cvu{1}}^{5}.
	\end{flalign*}
\end{example}

\paragraph{Preferred Extensions}
\begin{example}\label{ex:pref}
	Consider the AF from Example~\ref{ex:running} and the corresponding parse tree from Figure~\ref{fig:excw}. For this example, we consider the formulas added for the right side of the tree, i.e. operations $3,5,8,9,10$. We already discussed formula $\varphi_\CAdm$ in Example~\ref{ex:adm}. Formula $\varphi_\CAdm^*$ is defined analogue by renaming every variable $v$ in $\varphi_\CAdm$ to $v^*$ and thus also does not need to be discussed in this Example. We instead discuss the formulas added for Equations~(\ref{pref:leaf})--(\ref{pref:edge}). For initial operations $9,10$, by Equation~(\ref{pref:leaf}), we add the two formulas
	\begin{flalign*}
		&{s}_{\cvu{1}}^{9} \leftrightarrow e_{\cvu{1}}^{*9} \land \neg e_{\cvu{1}}^{9}, \\
		&{s}_{\cvd{2}}^{10} \leftrightarrow e_{\cvd{2}}^{*10} \land \neg e_{\cvd{2}}^{10}.
	\end{flalign*}
	For the disjoint union operation $8$, by Equation~(\ref{pref:union}), we add the formulas
	\begin{flalign*}
		&{s}_{\cvu{1}}^{8} \leftrightarrow {s}_{\cvu{1}}^{9}, \\
		&{s}_{\cvd{2}}^{8} \leftrightarrow {s}_{\cvd{2}}^{10}.
	\end{flalign*}
	For the edge-introducing operation $5$, by Equation~(\ref{pref:edge}), we add the formulas
	\begin{flalign*}
		&{s}_{\cvu{1}}^{5} \leftrightarrow {s}_{\cvu{1}}^{8}, \\
		&{s}_{\cvd{2}}^{5} \leftrightarrow {s}_{\cvd{2}}^{8}.
	\end{flalign*}
	Finally, for the relabeling operation $3$, by Equation~(\ref{pref:relabel}), we add the formulas
	\begin{flalign*}
		&{s}_{\cvd{2}}^{3} \leftrightarrow {s}_{\cvd{2}}^{5}, \\
		&{s}_{\cvt{3}}^{3} \leftrightarrow {s}_{\cvu{1}}^{5}.
	\end{flalign*}
\end{example}

\section{Detailed Proofs}

\paragraph{Stable Extensions}

\stabcorrect*
\begin{proof}
	``$\Longrightarrow$'': Let $S\in
	\stab(F)$. 
	We prove that $\varphi_\CStab$ is satisfiable and the satisfying
	assignments coincide.
	To this aim, we construct a unique satisfying assignment $\alpha$
	over variables $E$ and $D$ in a bottom-up fashion, as follows.

	First, we set $\alpha(e_a)=1$, and only if, $a\in S$. 
	Then, we assign the value of $\alpha$ for remaining variables according to the following rules. We let $\alpha(e_c^b)=1$ for each color $c$ and
	operation~$b$ in the $k$-expression, 
	if and only if,
	\begin{itemize}
		\item[(C1)] $\alpha(e_a)=1$\\
		\mbox{~}\hfill if $a\in S$, such that $b=c(a)$ is an initial $k$-graph\\
		\mbox{~}\hfill for argument $a$ colored with~$c$;
		\item[(C2)] $\alpha(e_c^{b'})=1$ for some $b'\in\child(b)$ and
		$c\in\colors(b')$,\\
		\mbox{~}\hfill if $b=\oplus$ and $c\in \colors(b)$;
		\item[(C3)] either $\alpha(e_{c'}^{b'})=1$ or
		$\alpha(e_{c}^{b'})=1$,\\
		\mbox{~}\hfill if $b= \rho_{c'\mapsto c}$ with $c\in \colors(b)$ and
		$c'\in \colors(b')$;
		\item[(C4)] $\alpha(e_c^{b'})=1$,\\
		\mbox{~}\hfill if $b= \eta_{c',c}$.
	\end{itemize}
	
	\noindent Furthermore, we set $\alpha(d_c^b)=1$ for every color $c$
	and operation~$b$ of the $k$-expression, 
	if and only if,
	\begin{itemize}
		\item[(C5)] $\alpha(e_c^b)=1$,\\
		\mbox{~}\hfill if $b=c(a)$ is an initial $k$-graph with $c\in\colors(b)$;
		\item[(C6)] $\alpha(d_c^{b'})=1$ for each~$b'\in\child(b)$
		s.t. $c\in\colors(b')$,\\
		\mbox{~}\hfill if $b= \oplus$;
		\item[(C7)] $\alpha(d_{c'}^{b'})=1$ and $\alpha(d_c^{b'})=1$,\\
		\mbox{~}\hfill if $b= \rho_{c'\mapsto c}$ with $c\in\colors(b)\cap \colors(b')$;
		\item[(C8)] either $\alpha(d_c^{b'})=1$ or $\alpha(e_{c'}^{b})=1$,\\
		\mbox{~}\hfill if $b= \eta_{c',c}$ with $c\in \colors(b)$.
	\end{itemize}
	
	\noindent Next, we  prove that $\alpha$ satisfies $\varphi_{\#\Stab}$.\\
	It is easy to observe by construction that
	Formulas~(\ref{stab:leaf})--(\ref{stab:edge}a) are satisfied
	by~$\alpha$.
	Regarding Formulas~(\ref{stab:edge}b), observe that $S$ is
	conflict-free.  Suppose to that contrary that there is some operation
	$b= \eta_{c',c}$ such that $\alpha(e_c^b)=1$ and
	$\alpha(e_{c'}^b)=1$.
	Due to construction (C4), this is the case iff there is some child,
	or descendant, $b'$ of $b$ such that $\alpha(e_c^{b'})=1$ and
	$\alpha(e_{c'}^{b'})=1$ (possibly $b'\neq b$).
	However, the value for $\alpha(e_c^{b'})$ for each color~$c$ and
	operation $b'$ is simply propagated from some argument~$a\in S$ of
	color~$c$ in the initial $k$-graph $b=c(a)$.
	But this is a contradiction to the conflict-freeness of $S$, since
	in some child (descendant) operation of $b$, there are arguments~$a$
	and $a'$ of color~$c$ and $c'$ with $(a,a')\in R$.
	Therefore, either $\alpha(e_c^b)=0$ or $\alpha(e_{c'}^b)=0$ for
	every $b=\eta_{c',c}$ with $c\in\colors(b)$.

	It can be similarly observed that $\alpha$ satisfies
	Formulas~(\ref{stab:defeat-leaf})--(\ref{stab:defeat-edge}) by
	construction.
	It remain to prove that $\alpha$ satisfies
	Formula~(\ref{stab:defeat-root}).
	Suppose to the contrary that there is some color $c$ such that
	$\alpha(d_c^{rt})=0$.  Due to
	Formulas~(\ref{stab:defeat-union})--(\ref{stab:defeat-relabel}),
	this implies that $\alpha(d_c^{b'})=0$ for some operation
	$b'\in\children(b)$ with $b=\oplus$ or $b=\rho_{c',c}$ and
	$c\in\colors(b)$.
	We follow such a \emph{branch} in the $k$-expression 
	via child operations~$b'\in\children(b)$ where $\alpha(d_c^{b'})=0$.
	This leads us to some initial $k$-graph $b_0$ using the color $c$,
	such that $\alpha(e_c^{b_0})=0$ due to
	Formulas~(\ref{stab:defeat-leaf})
	(we prove our claim for the color $c$, whereas the case when $c$ appears in a relabeling $c^*\mapsto c$ on this branch follows analogous reasoning for $c^*$ additionally).
	In particular, either this branch does not contain an operation $b$
	and color $c'$ with $c,c'\in\colors(b)$ such that $b=\eta_{c',c}$,
	or we
	have 
	$\alpha(e_{c'}^b)=0$ for such an operation $b=\eta_{c',c}$ due to
	Formulas~(\ref{stab:defeat-edge}).
	This implies that, there is no argument~$a$ of color $c$ in $S$ due
	to Formulas~(\ref{stab:leaf}) and~(\ref{stab:defeat-leaf}) and no
	argument of color~$c$ is attacked by any argument in~$S$ due to
	Formulas~(\ref{stab:defeat-edge}).
	But this leads to a contradiction, since there is at least one
	argument in~$F$ of color~$c$, and every argument is either in~$S$ or
	attacked by~$S$ due to the stability of $S$.
	This completes our claim in this direction and hence
	$\alpha$ satisfies $\varphi_{\#\Stab}$.

	It remains to prove that $\alpha$ is unique for the set~$S$.
	To this aim, let $\beta\neq\alpha$ be another assignment over
	$E\cup D$ such that $\beta$ satisfies $\varphi_\CStab$ and
	$\{a\mid a\in S\} = \{a\mid \beta(e_a)=1\}$.
	Observe that, by definition $\beta(d_c^{rt})=1$ for a color $c$ if
	and only if either 
	(i) there is an edge introduction operation~$b= \eta_{c'c}$ with $\beta(e_{c'}^{b})=1$, 
	or (ii) $\beta(e_c^b)=1$ in the initial $k$-graph~$b$,
	or (iii) there is a relabeling operation $\rho_{c^*\rightarrow c}$ and (i) and (ii) are true for $c^*$.
	Moreover, $\beta(e_c^b)=1$ if and only if
	$\beta(e_a)=1$ for the argument $a$ such that $b= c(a)$ is the initial operation.
	Since $\beta$ assigns variable in~$E$ exactly as $\alpha$, the same
	applies to variables in~$D$.
	However, this leads to a contradiction and we obtain
	$\beta = \alpha$.  Therefore, for every stable extension~$S$ in~$F$,
	there is exactly one satisfying assignment $\alpha$ to variables
	$E\cup D$.
	
	``$\Longleftarrow$'': Suppose there is an assignment~$\theta$ over
	variables in~$E$ and $D$ such that $\theta$ satisfies
	$\varphi_\CStab$.
	Then, we construct a stable extension as $S=\{a\mid \theta(e_a)=1\}$
	for $F$.
	%
	Due to Formulas~(\ref{stab:defeat-root}), it follows that
	$\theta(d_c^{rt})=1$ for each $c\in\colors(rt)$.
	Since $\theta$ satisfies Formulas~(\ref{stab:edge}) for each edge
	introduction operation $b$ and color $c$, the set~$S$ is conflict
	free.
	Suppose to the contrary, there are arguments $a,a'\in S$ with
	$(a,a')\in R$.
	Then, let $b$ be the edge insertion operation for colors $c,c'$ with
	$col(a)=c$ and $col(a')=c'$ in $b$.
	Since, $\theta(e_a)= 1= \theta(e_{a'})$, we have that
	$\theta(e_c^{b'})=1=\theta(e_c^b)$ for the
	operation~$b'\in\children(b)$.
	Then, due to Formulas~(\ref{stab:edge}a), we have that
	$\theta(e_c^{b})=1=\theta(e_{c'}^b)$.
	But this leads to a contradiction since $\theta$ can not satisfy
	Formulas~(\ref{stab:edge}a) and Formulas~(\ref{stab:edge}b) at the
	same time.  Consequently, $S$ is conflict-free.
	
	Next, we prove that every argument~$a\in A\setminus S$ is attacked
	by some argument~$a'\in S$.
	Again, suppose to the contrary there is some argument $a\in A$ such
	that neither $a\in S$, nor is there any $a'\in S$ with
	$(a',a)\in R$.
	Let $col(a)=c$ in the root operation, i.e., the rightmost operation
	in the $k$-expression.
	Clearly, $\theta(d_c^{rt})=1$ since $\theta$ satisfies
	Formulas~(\ref{stab:defeat-root}).
	From
	Formulas~(\ref{stab:defeat-union})--(\ref{stab:defeat-relabel}), we
	have that $\theta(d_c^{b})=1$ for every descendant $b$ of $rt$ such
	that $c\in \colors(b)$.
	Assume wlog that $col(a)=c$ in the initial operation b. The
	alternative case follows analogous reasoning if $col(a)=c^*$ via the
	relabeling~$c^*\mapsto c$.
	This implies that we have one of the following two cases.

	In the first case, we have $\theta(e^b_{c})=1$ in the $b=c(a)$ operation for creating
	argument~$a$ of color~$c$ due to
	Formula~(\ref{stab:defeat-leaf}).
	But this is a contradiction to Formula~(\ref{stab:leaf}), since
	$a\not\in S$ and hence $\theta(e_a)=0$.
	In the second case, there is some edge introduction operation~$b$
	with the edge~$(c',c)$ and $\theta(e^b_{c'})=1$ due to
	Formula~(\ref{stab:defeat-edge}).
	Then, applying the same reasoning as in the only-if direction
	to~$c'$, we have that there is some argument~$a'\in S$ due to
	$\theta(e_{a'})=1$ according to Formula~(\ref{stab:leaf}), such that $a'$ has color $c'$ in $b$ and $(a',a)\in R$, since $b=\eta_{c',c}$ introduces an edge.
	But this again leads to a contradiction since there is no
	argument~$a'\in S$ attacking~$a$.
	Consequently, for every satisfying assignment~$\theta$ to variables
	$E\cup D$ of $\varphi_\CStab$, there is exactly one stable extension
	$S$ in $F$.
	This concludes correctness and establishes the claim.
\end{proof}

\stabcwaw*
\begin{proof}
	Given an AF~$F$ 
	and a $k$-expression~$\ptlong$ 
	of~$F$, we construct a $k'$-expression~$\ptlong'$ 
	of~$\sinc{\psi}$ as follows.
	We first specify additional colors needed to construct a $k'$-expression of $\psi$.
	For each color~$c$, we need an
	\emph{extension-version}~$e_c$ and a
	\emph{defeat-version}~$d_c$ of $c$.
	Moreover, for each extension~($e_c$) and defeat
	color~($d_c$), we additionally need two more copies, called
	\emph{child-versions} $ec_c$ and
	$dc_c$ to 
	add clauses corresponding to the child-operation
	$b'$ of a operation $b\in\ptlong$ and \emph{expired-versions}
	$ex_c$ and
	$dx_c$ to simulate the effect that all the edges between certain
	variables and their clauses have been added. 
	This is required, since otherwise, we we will keep adding edges
	from the child-versions in future, which is undesirable.
	Finally, to handle both positive and negative literals in clauses, we replicate current and the child versions of each colors each $x\in \{e_c,ec_c,d_c,dc_c\}$ twice, denoted as $x^+$ for positive and $x^-$ for negative literals. 
	%
	For clauses, we use two additional colors, called \emph{clause-making}
	($cm$) and \emph{clause-ready} ($cr$) to add 
	edges between clauses
	and their respective literals, since we are in the setting of
	incidence graphs.
	Intuitively, when adding 
	edges between clauses and their respective
	variables, we initiate a clause $C$ to be of color $cm$ and its literals $x$ to be of the appropriate color $x^+$ or $x^-$.
	Then, we draw directed edges between the clause $C$ and its positive and negative literals.
	Once the edges for $C$ have been added, 
	we relabel it to $cr$ to
	avoid any further edges to or from $C$.
	Similarly, when going from an operation~$b'$ to its parent $b$, we
	change the labels of extension and defeated variables in $b'$ to
	their child-versions.

	
	We first need to switch $\varphi_\CStab$ from Formulas~(\ref{stab:leaf})--(\ref{stab:defeat-root}) into a collection of clauses.
	This can be easily achieved via iteratively replacing each formula in $\varphi_\CStab$ using the following equivalences: (1) $\alpha\leftrightarrow\beta\equiv (\neg\alpha\lor\beta)\land (\neg\beta\lor\alpha)$, (2) $\neg (x_1\lor\dots\lor x_n) \lor C \equiv \bigwedge_{i\leq n}(\neg x_i\lor C)$, and (3) $\neg (x_1\land\dots\land x_n) \lor C \equiv (\bigvee_{i\leq n}\neg x_i\lor C)$.
	As a result, Formulas~(\ref{stab:leaf})--(\ref{stab:defeat-root}) (and hence $\varphi_\CStab$) can be converted into a CNF.
	Observe that, for each operation $b$: Formula~(\ref{stab:leaf}) and (\ref{stab:edge}a) each yields two clauses and Formula~(\ref{stab:edge}b) is already a clause.
	Likewise, for each operation $b$, the number for clauses corresponding to Formula~(\ref{stab:union}) is bounded by the children of $b$ via $|\children(b)|+1$, and those for Formula~(\ref{stab:relabel}) are bounded by three (since each disjunct is just a Boolean condition).
	The similar argument applies to Formulas~(\ref{stab:defeat-leaf})--(\ref{stab:defeat-edge}), thus giving a bounded many clauses in each case.
	This has the effect that our final expression $\ptlong'$ constructed below has its size polynomial in the size of $\ptlong$.
	
	In the following, we present our construction of the expression for only one representative clause from each formula whereas analogous construction applies to the remaining cases.
	For each operation in the $k$-expression $\ptlong$ of $F$ and the corresponding clause $C\in \varphi_\CStab$: we explain how to use additional colors described above to return a sub-expression that adds all edges between $C$ and its literals.
	Intuitively, the edges for each clause from $\varphi_\CStab$ and its corresponding literals can be added in a turn by turn fashion using auxiliary colors.
	Recall that we add directed edges between clauses and literal as: $(C,x)$ if $x\in C$ and $(x,C)$ if $\neg x\in C$ (see definition of directed incidence graph for CNF).
	
	
	Each operation $b\in\ptlong$ is translated into a sub-expression in $\ptlong'$. 
	For brevity, we only outline how sub-expressions corresponding to each operation are added.
	Precisely, we construct~$\ptshort'$ 
	as follows:
	\begin{itemize}
		\item \textbf{For initial}, $b=c(a)$: we create the following sub-expression: 
		\begin{enumerate}
			\item consider the clause $C_{b,c,a}\dfn(\neg e_c^b\lor e_a)$ corresponding to Formula~(\ref{stab:leaf}) for the given operation $b$, color $c$, and argument $a$. Initiate $c^+(e_a)$, $e^-_c(e^b_c)$ and $cm(C_{b,c,a})$.
			\item take disjoint  union ($\oplus$) of the operations from 1 above.
			\item add 
			edges for the considered clause of Formula~(\ref{stab:leaf}): $\eta_{cm, c^+}$ and $\eta_{e^-_c,cm}$.
			\item relabel $C_{b,c,a}$ as completed: $\rho_{cm\mapsto cr}$.
			\item consider the clause $C_{b,c}=(d_c^b\lor \neg  e_c^b)$ corresponding to  Formula~(\ref{stab:defeat-leaf}) for the given operation $b$ and color $c$.
			Initiate $d^+_c(d^b_c)$ and $cm(C_{b,c})$. 
			\item take disjoint union ($\oplus$) of the operations from 4 and 5 above.
			\item add 
			edges for the considered clause $C_{b,c}$ of Formula~(\ref{stab:defeat-leaf}): $\eta_{cm, d^+_c}$ and $\eta_{e^-_c,cm}$.
			\item relabel $C_{b,c}$ as completed:  $\rho_{cm\mapsto cr}$.
			\item Re-iterate for any remaining clauses corresponding to Formula~(\ref{stab:leaf}) or (\ref{stab:defeat-leaf}). Once all clauses have been considered, relabel extension and defeat colors to their child-version: $\rho_{e^o_c\mapsto ec^o_c}, \rho_{d^o_c\mapsto dc^o_c}$ for $o\in\{+,-\}$.
		\end{enumerate}
		\item \textbf{For disjoint union}  $b=\oplus_{b'\in\child(b)} b'$, we create the following sub-expression:
		\begin{enumerate}
			\item 
			consider a clause $C_{b,c}$  corresponding to Formula~(\ref{stab:union}) 
			for the given operation $b$ and color $c\in\colors(b)$. Initiate $e^o_c(e_c^b), cm(C_{b,c})$ where $o= +/-$ depending on whether $C_{b,c}$ contains $e_c^b$ or $\neg e_c^b$, respectively. 
			Moreover, relabel (if not already) the color of $e_c^{b'}$ from the child operation $b'$ so that we have: $ec^p_c(e_c^{b'})$ where $p=+$ if $C_{b,c}$ contains $e_c^{b'}$, and $p=-$ if it contains $\neg e_c^{b'}$.
			\item take disjoint union ($\oplus$) of the operations used in 1 above. 
			\item add 
			edges for the considered clause $C_{b,c}$ of Formula~(\ref{stab:union}): $\eta_{cm, e^o_c}$ (resp.,  $\eta_{e^o_c, cm}$) and $\eta_{ec^{p}_c, cm}$ ($\eta_{cm, ec^{p}_c}$) accordingly, depending on the current colors $o$ and $p$ for literals in 1 above.
			\item relabel $C_{b,c}$ as completed: $\rho_{cm\mapsto cr}$
			\item consider a clause $C_{b,c}$ corresponding to Formula~(\ref{stab:defeat-union}) for given $b$ and $c$. Initiate $d^o_c(d_c^b)$ and $cm(C_{b,c})$ where $o= +/-$ depending on whether $C_{b,c}$ contains $d_c^b$ or $\neg d_c^b$, respectively.
			Moreover, relabel (if not already) the color of $d_c^{b'}$ from the child operation $b'$ so that we have: $dc^p_c(d_c^{b'})$ where $p=+/-$ depending on whether $C_{b,c}$ contains $d_c^{b'}$ or $\neg d_c^{b'}$, respectively.
			\item take disjoint union ($\oplus$) of operations form 4 and 5 above.
			\item add 
			edges for the considered clause $C_{b,c}$ of Formula~(\ref{stab:defeat-union}): $\eta_{d^o_c,cm}$ (resp.,  $\eta_{cm, d^o_c}$) and $\eta_{cm, dc^p_c}$ ($\eta_{dc^p_c, cm}$) accordingly, depending on the current colors $o$ and $p$ of literals in 6 above.
			\item relabel $C_{b,c}$ as completed: $\rho_{cm\mapsto cr}$.
			\item Re-iterate for any remaining clauses corresponding to Formulas~(\ref{stab:union}) or (\ref{stab:defeat-union}). Once all clauses have been considered, relabel child-versions of extension and defeat colors to their expired-versions: $\rho_{ec^o_c\mapsto ex_c}, \rho_{dc^o_c\mapsto dx_c}$ for any $o\in\{+,-\}$.
			Relabel (current) extension and defeat colors to their child-version: $\rho_{e^o_c\mapsto ec^o_c}, \rho_{d^o_c\mapsto dc^o_c}$ for $o\in\{+,-\}$.
		\end{enumerate}
		\item \textbf{for relabeling} $b=\rho_{c'\mapsto c}$, we similarly repeat steps $1$--$9$ as before for corresponding clauses.
		%
		\begin{enumerate}
			\item consider a clause $C_{b,c}$  corresponding to Formula~(\ref{stab:relabel}) for the given operation $b$ and color $c\in\colors(b)$. Initiate $e^o_c(e_c^b), cm(C_{b,c})$ where $o=+/-$ depending on whether $C_{b,c}$ contains $e_c^b$ or $\neg e_c^b$, respectively. 
			Moreover, relabel (if not already) the color of $e_c^{b'}$ from the child operation $b'$ so that we have: $ec^p_c(e_c^{b'})$ where $p=+$ if $C_{b,c}$ contains $e_c^b$, and $p=-$ if it contains $\neg e_c^b$.
			\item take disjoint union ($\oplus$) of operation used in 1 above. 
			\item add 
			edges for the considered clause $C_{b,c}$ of Formula~(\ref{stab:relabel}): $\eta_{e^o_c,cm}$ (resp., $\eta_{cm, e^o_c}$) and $\eta_{cm, ec^{p}_c}$ ($\eta_{ec^{p}_c,cm}$) accordingly, depending on the current colors $o$ and $p$ of literals in 1 above.
			\item relabel $C_{b,c}$ as completed: $\rho_{cm\mapsto cr}$
			\item consider a clause $C_{b,c}$ corresponding to Formula~(\ref{stab:defeat-relabel}) for given $b$ and $c$.
			Initiate $d^o_c(d_c^b)$ and $cm(C_{b,c})$ where $o=+/-$ depending on whether $C_{b,c}$ contains $d_c^b$ or $\neg d_c^b$, respectively. 
			Moreover, relabel (if not already) the color of $d_c^{b'}$ from the child operation $b'$ so that we have: $dc^p_c(d_c^{b'})$ where $p=+/-$ depending on whether $C_{b,c}$ contains $d_c^{b'}$ or $\neg d_c^{b'}$, respectively.
			\item take disjoint union ($\oplus$) of operations form 4 and 5 above.
			\item add 
			edges for the considered clause $C_{b,c}$ of Formula~(\ref{stab:defeat-relabel}): $\eta_{d^o_c,cm}$  (resp., $\eta_{cm, d^o_c}$) and $\eta_{cm, dc^p_c}$ ( $\eta_{dc^p_c, cm}$) accordingly, depending on the current colors $o$ and $p$ of literals in 6 above.
			\item relabel $C_{b,c}$ as completed: $\rho_{cm\mapsto cr}$.
			\item Re-iterate for any remaining clauses corresponding to Formulas~(\ref{stab:relabel}) or (\ref{stab:defeat-relabel}). Once all clauses have been considered, relabel child-versions of extension and defeat colors to their expired-versions: $\rho_{ec^o_c\mapsto ex_c}, \rho_{dc^o_c\mapsto dx_c}$ for any $o\in\{+,-\}$.
			Relabel (current) extension and defeat colors to their child-version: $\rho_{e^o_c\mapsto ec^o_c}, \rho_{d^o_c\mapsto dc^o_c}$ for $o\in\{+,-\}$.
		\end{enumerate}

		\item \textbf{for edge introduction} $b= \eta_{c',c}$: we again repeat steps. 
		\begin{enumerate}
			\item consider a clause $C_{b,c}$  corresponding to Formula~(\ref{stab:edge}) for the given operation $b$ and color $c\in\colors(b)$. Initiate $e^o_c(e_c^b), cm(C_{b,c})$ where $o=+/-$ depending on whether $C_{b,c}$ contains $e_c^b$ or $\neg e_c^b$, respectively. 
			Moreover, relabel (if not already) the color of $e_c^{b'}$ from the child operation $b'$ so that we have: $ec^p_c(e_c^{b'})$ where $p=+$ if $C_{b,c}$ contains $e_c^b$, and $p=-$ if it contains $\neg e_c^b$.
			\item take disjoint union ($\oplus$) of operation used in 1 above. 
			\item add 
			edges for the considered clause $C_{b,c}$ of Formula~(\ref{stab:edge}): $\eta_{e^o_c,cm}$ (resp., $\eta_{cm, e^o_c}$) and $\eta_{cm, ec^{p}_c}$ ($\eta_{ec^{p}_c,cm}$) accordingly, depending on the current colors $o$ and $p$ of literals in 1 above.
			\item relabel $C_{b,c}$ as completed: $\rho_{cm\mapsto cr}$
			\item consider a clause $C_{b,c}$ corresponding to Formula~(\ref{stab:defeat-edge}) for given $b$ and $c$.
			Initiate $d^o_c(d_c^b)$ and $cm(C_{b,c})$ where $o=+/-$ depending on whether $C_{b,c}$ contains $d_c^b$ or $\neg d_c^b$, respectively. 
			Moreover, relabel (if not already) the color of $d_c^{b'}$ from the child operation $b'$ so that we have: $dc^p_c(d_c^{b'})$ where $p=+/-$ depending on whether $C_{b,c}$ contains $d_c^{b'}$ or $\neg d_c^{b'}$, respectively.
			In this case, we also need to consider variables $e_{c'}^b$.
			Thus, we relabel (if not already) the color of $e_{c'}^{b}$ so that we have: $e^p_{c'}(e_{c'}^b)$ where $p=+/-$ depending on whether $C_{b,c}$ contains $e_{c'}^b$ or $\neg e_{c'}^b$, respectively
			\item take disjoint union ($\oplus$) of operations form 4 and 5 above.
			\item add 
			edges for the considered clause $C_{b,c}$ of Formula~(\ref{stab:defeat-edge}): $\eta_{d^o_c,cm}$  (resp., $\eta_{cm, d^o_c}$) and $\eta_{cm, dc^p_c}$ ( $\eta_{dc^p_c, cm}$) accordingly, depending on the current colors $o$ and $p$ of literals in 6 above.
			\item relabel $C_{b,c}$ as completed: $\rho_{cm\mapsto cr}$.
			\item Re-iterate for any remaining clauses corresponding to Formulas~(\ref{stab:edge}) or (\ref{stab:defeat-edge}). Once all clauses have been considered, relabel child-versions of extension and defeat colors to their expired-versions: $\rho_{ec^o_c\mapsto ex_c}, \rho_{dc^o_c\mapsto dx_c}$ for any $o\in\{+,-\}$.
			Relabel (current) extension and defeat colors to their child-version: $\rho_{e^o_c\mapsto ec^o_c}, \rho_{d^o_c\mapsto dc^o_c}$ for $o\in\{+,-\}$.
		\end{enumerate}
		
		\item In the root operation, we only need to initiate clauses $C_c$ due to Formulas~(\ref{stab:defeat-root}) and add 
		the corresponding single edges from these clauses to $d_c$.
	\end{itemize}
	The correctness of the $k'$-expression follows from the fact that (I.) each clause from $\varphi_\CStab$ is eventually added, 
	(II.) only edges due to the clauses in $\varphi_\CStab$ are added,
	and (III.) the direction of each edge correctly depicts the membership of a literal in the clause. 
	Observe that (III.) holds since we add an edge $C{\rightarrow }x$ 
	if $x\in C$ and $C{\leftarrow }x$
	if $\neg x\in C$.
	Corresponding to each color $c$, we use five copies used as extension versions, given as $\{e_c^+, e_c^-\}\cup\{ec_c^+, ec_c^-\}\cup\{ex_c\}$.
	Likewise, we use five copies of defeat versions of each color.
	Moreover, two additional colors are required for adding 
	edges between clauses and literals.
	We conclude by observing that, one requires $5k+5k$ additional colors for drawing the incidence graph of the formula.
	This results in a requirement of $k'=11k+2$ colors to add 
	the expression $\ptlong'$, hence an increase of clique-width which is still linear in $k$.
\end{proof}

\paragraph{Admissible Extensions}

\admcorrect*
\begin{proof}
	``$\Longrightarrow$'': Let $S\in \adm(F)$. 
	As before, we prove that $\varphi_\CAdm$ is satisfiable by constructing an assignment $\alpha$ over variables $E$ and $A$ in a bottom-up fashion.
	The evaluation for variables in $E$ remains as in the proof of Theorem~\ref{thm:stab:correct} before.

	Regarding the attacking variables, we construct $\alpha$ as follows.
	For every color $c$ and operation $b$ of the $k$-expression, we set $\alpha(a_c^b)=0$ iff either
	(A1) $b$ is an initial operation with $c\in\colors(b)$; or
	(A2) $\alpha(a_c^{b'})=0$ for each $b'\in\child(b)$ s.t. $c\in\colors(b')$, if 
	$b$ is disjoint union operation; 
	or 
	(A3) $\alpha(a_{c'}^{b'})=0$ and $\alpha(a_c^{b'})=0$, if $b= \rho_{c'\mapsto c}$ with $c\in\colors(b)\cap \colors(b')$; or
	(A4a) $\alpha(a_c^{b'})=0$ and $\alpha(e_{c'}^{b})=0$, if $b= \eta_{c,c'}$ but not $\eta_{c',c}$ with $c\in \colors(b)$; 
	(A4b) either $\alpha(a_c^{b'})=0$ or $\alpha(e_{c'}^{b})=0$, if $b= \eta_{c',c}$ with $c\in \colors(b)$.

	We argue that $\alpha$ satisfies $\varphi_{\CAdm}$.
	Observe that  Formulas~(\ref{stab:leaf})--(\ref{stab:edge}) are satisfied by $\alpha$ due to the proof of Theorem~\ref{thm:stab:correct} and the fact that $S$ is conflict-free.
	Similarly, $\alpha$ satisfies Formulas~(\ref{adm:leaf})--(\ref{adm:edge-pos}) by construction.
	It remains to prove that $\alpha$ satisfies Formulas~(\ref{adm:root}) for each color $c$.
	Suppose to the contrary that there is some color $c$ such that $\alpha(a_c^{rt})=1$.
	Observe that $\alpha(a_c^b)=0$ for each initial operation $b$ initiating the color $c$ (due to Formulas~(\ref{adm:leaf})).
	Moreover, 
	Formulas~(\ref{adm:union})--(\ref{adm:relabel}) then merely propagate the values from child operation to their parents.
	However, $\alpha(a_c^{rt})=1$ implies that the value for $\alpha(a_c^b)$ changes at some edge introduction operation $b$ due to Formulas~(\ref{adm:edge-neg}).
	In particular, $\alpha(a_c^b)=1$ for at least one such operation $b$.
	Due to Formulas~(\ref{adm:edge-neg}), either $\alpha(a_c^{b'})=1$ for $b'\in\children(b)$, or $\alpha(e_{c'}^b)=1$ if $b= \eta_{c,c'}$.
	We consider the latter case, as the former case leads to repeating the same argument for $a_c^{b'}$.
	Since $\alpha(e_{c'}^b)=1$, this implies that $c'$ is an extension color in~$b$, hence there is some argument $a'$ of color $c'$ such that $a'\in S$.
	Moreover, due to the edge introduction $(c,c')$, there is an argument~$a$ of color~$c$ such that $(a,a')\in R$.
	Since $S$ is admissible, $a'$ must be defended against $a$, i.e., there is some argument $a''\in S$ with $(a'',a)\in R$.
	Suppose $a''$ be of color $c''$. 
	Due to $(a'',a)\in R$, there is an edge introduction operation $b''$ such that Formula~(\ref{adm:edge-pos}) applies to $c$ for introducing the edge $(c'',c)$.
	This implies that $\alpha(a_c^{b''})=0$ since $\alpha(e_{c''}^{b''})=1$ due to Formulas~(\ref{adm:edge-pos}) for $b''$ and $c$.
	Since $b$ was an arbitrary operation with $\alpha(a_c^b)=1$, this implies that {for each such $b$, we can find such an edge introduction operation due to the admissibility of $S$}.
	%
	Hence, we obtain a contradiction to $\alpha(a_c^b)=1$ and hence $\alpha(a_c^b)=0$ for any operation~$b$, in particular for~$b=rt$. 
	This completes our claim in this direction. Hence, $\alpha$ satisfies $\varphi_\CAdm$.
	
	To prove that $\alpha$ is unique for the set $S$, we 
	again consider another assignment $\beta\neq\alpha$ over $E\cup A$ that agrees with $\alpha$ over extension variables.
	However, as before, since $\beta$ assigns variable in $E$ exactly as $\alpha$, the same applies to variables in $A$ since $\beta$ has to satisfy Formulas~(\ref{out:leaf}) and (\ref{out:root}).
	This leads to a contradiction. Hence, for every admissible extension~$S$ in~$F$, there is exactly one satisfying assignment~$\alpha$ to variables~$E\cup A$.

	``$\Longleftarrow$'':
	Suppose there is an assignment $\theta$ over variables $E$ and $A$ such that $\theta\models \varphi_\CAdm$.
	Then, we construct an admissible extension $S=\{a\mid \theta(e_a)=1\}$ for $F$.
	$S$ is conflict-free due to Theorem~\ref{thm:stab:correct} since $\theta$ satisfies Formulas~(\ref{stab:leaf})--(\ref{stab:edge}).
	To prove admissibility, let $a'\in S$ be an argument.
	Suppose to the contrary, there is an argument $a\in A\setminus S$ such that $(a,a')\in R$, but there is no $a''\in S$ with $(a'',a)\in R$.
	Due to Formulas~(\ref{adm:root}), it follows that $\theta(a_c^{rt})=0$ for each $c\in\colors(rt)$.
	In particular, $\theta(a_{c}^{rt})=0$ for the color $c$ of argument $a$ in the root operation of the $k$-expression.
	We prove that this is not possible since the argument $a$ with $col(a)=c$ still attacks our extension $S$.
	Since $(a,a')\in R$, there is an edge introduction operation $b=\eta_{c,c'}$, wlog, since the similar argument applies if this happens before one of the colors were relabeled to $c$ or $c'$.
	However, then we have $\theta(e_{c'}^b)=1$ since $a'\in S$.
	But then, this leads to $\theta(a_c^b)=1$ due to Formulas~(\ref{adm:edge-neg}) since the edge $\eta_{c,c'}$ is being introduced instead of $\eta_{c',c}$.
	Moreover, there is no $a''\in S$ attacking $a$ therefore Formula~(\ref{adm:edge-pos}) does not apply to the color $c$ of $a$ in any operation. 
	Consequently, $\theta(a_c^{b})=1$ is propagated until the root operation and hence $\theta(a_c^{rt})=1$.
	However, this leads to a contradiction since $\theta$ satisfies Formulas~(\ref{adm:root}) for each $c$.
	
	\noindent	This concludes the claim and completes the correctness.
\end{proof}

\admcwaw*
\begin{proof}
	The proof follows analogous reasoning to the proof of Theorem~\ref{thm:stab:cwaw}.
	This is established by (1) replacing variables in $D$ by those in $A$, (2) renaming \emph{defeat} versions of each color ($d_c$) to be attack color ($a_c$) to fit our intuition, and (3) adding 
	edges corresponding to clauses arising from Formulas~(\ref{adm:leaf})--(\ref{adm:root}) instead of Formulas~(\ref{stab:defeat-leaf})--(\ref{stab:defeat-root}).
	It is easy to observe that one still needs $11k+2$ colors.
	As a result, the number of colors still increases only linearly.
\end{proof}

\paragraph{Complete Extensions}
\compcorrect*
%
\begin{proof}
	``$\Longrightarrow$'': Let $S$ be a complete extension in F.
	We prove that $\varphi_\CComp$ is satisfiable by constructing an assignment $\alpha$ over variables $E,A,O$ and $D$ in a bottom-up fashion.
	The evaluation for variables in $E$ and $A$ remains the same as in the proof of Theorem~\ref{thm:adm:correct}.

	Regarding the out $O$ and defeated $D$ variables, we construct $\alpha$ as follows.
	First, we set $\alpha(d_c^{\geq b'})=1$ iff
	(D1) either $\alpha(d_c^{\geq b})=1$ or $\alpha(e_{c'}^b)=1$ for any $b\neq \rho$ with $b'\in\children(b)$ and $c\in \colors(b')$; or
	(D2) $\alpha(d_{c'}^{\geq b})=1$ if $b=\rho_{c\mapsto c'}$ for $b'\in\children(b)$ and $c\in \colors(b)$,
	or if $\alpha(d_{c}^{\geq b})=1$ for the non-relabeling $c$.
	Then, for every color $c$ and operation $b$ of the $k$-expression, we set $\alpha(o_c^b)=1$ iff either
	(O1) $\alpha(e_c^b)=0$ for $b$ an initial operation with $c\in\colors(b)$; or
	(O2) $\alpha(o_c^{b'})=1$ for some $b'\in\child(b)$ s.t. $c\in\colors(b')$, if $b= \oplus$; or
	(O3) either $\alpha(o_c^{b'})=1$ if $c\in\colors(b')$ or $\alpha(o_{c'}^{b'})=1$, for $b= \rho_{c'\mapsto c}$ with $c\in\colors(b)$; or
	(O4a) $\alpha(o_c^{b'})=1$ and $\alpha(o_{c'}^{b})=0$, if $b= \eta_{c,c'}$ but not $\eta_{c',c}$ with $c\in \colors(b)$;
	(O4b) $\alpha(o_c^{b'})=1$ with $c\neq c'$ and additionally $\alpha(d_{c'}^{\geq b})=1$, if $b= \eta_{c',c}$ with $c\in \colors(b)$.
	
	We argue that $\alpha\models\varphi_{\Comp}$.
	Observe that  Formulas~(\ref{stab:leaf})--(\ref{stab:edge}) are satisfied by $\alpha$ due to the proof of Theorem~\ref{thm:stab:correct} and the fact that $S$ is  conflict-free whereas Formulas~(\ref{adm:leaf})--(\ref{adm:edge-pos}) due to the proof of Theorem~\ref{thm:adm:correct} and admissibility of $S$.
	Moreover, $\alpha$ satisfies Formulas~(\ref{out:leaf})--(\ref{def:rel}) by construction.
	It remain to prove that $\alpha$ satisfies Formulas~(\ref{out:root}) for each color $c$.
	The case for $\neg a_c^{rt}$ follows due to the proof of Theorem~\ref{thm:adm:correct} since $S$ is admissible.
	Therefore, we consider the only remaining case of $\neg o_c^{rt}$.
	To this aim, suppose to the contrary that there is some color $c$ such that $\alpha(o_c^{rt})=1$.
	
	Observe that this is only possible if $\alpha(e_c^b)=0$ for the initial operation $b$ initiating some argument $a$ of color $c$ (due to Formulas~(\ref{out:leaf})).
	This is without loss of generality since the case of a different color $c^*\neq c$ follows analogous reasoning due to the relabeling in Formulas~(\ref{out:relabel}).
	Since $\alpha(o_c^{rt})=1$, this implies that the value for $\alpha(o_c^b)$ must not change at any edge introduction operation $b$ due to Formulas~(\ref{out:att})--(\ref{out:defend}).
	Now, we look at the acceptance status of the argument $a$ with respect to $S$.
	Since $\alpha(e_c^b)=0$, we have that $\alpha(e_a)=0$ due to Formula~(\ref{stab:leaf}).
	Intuitively, $a$ is not in the extension but has been incorrectly left out.
	To obtain a contradiction, we prove that $a$ is defended by $S$.
	If $a$ is not attacked by any argument in $F$, then we get a contradiction straightforwardly since $a\in S$ must be true as $S$ is complete.
	Now, let $a'\in A$ be an arbitrary argument such that $(a',a)\in R$.
	Moreover, let $col(a')=c'$ in the operation $b$ when the attack $(a',a)$ is added.
	We consider the following two cases.
	
	(I). $a'\in S$. Then, $a$ can not be defended by $S$ since this would mean that there exists $s'\in S$ with $(s',a')\in R$, being a contradiction to $S$ being conflict-free.
	But this implies that $\alpha(d_{c'}^b)=0$ due to Formula~(\ref{def:union}) for the color $c'$ of $a'$ in $b$.
	However, this leads to a contradiction due to Formula~(\ref{out:defend}), since $\alpha(o_c^{b'})=0$ for every ancestor operation $b'$ of $b$, and in particular $\alpha(o_c^{rt})=0$.
	
	(II). $a'\not\in S$. Then, we again consider two sub-cases depending on whether $a'$ is attacked or not by $S$.
	The scenario when $a'$ is not attacked by $S$ follows the same reasoning as in case (I), since $c'$ is not defeated and hence $\alpha(o_c^{rt})=0$ leading to a contradiction.
	In the second scenario, if $a'$ is attacked by $S$, then $\alpha(d_{c'}^{\geq b})=1$ (due to Formula~(\ref{def:union}) since there is some argument $s\in S$ of color $c''$ and an edge introduction operation $(c'',c)$) and hence $\alpha(o_c^b)=1$ still remains true due to Formula~(\ref{out:defend}).
	However, since $a'$ was arbitrary, this is true for any $a'\in A$.
	As a result, the argument $a$ is actually defended by $S$, and hence $a\in S$ must be true since $S$ is a complete extension.
	But this leads to a contradiction to the fact that $\alpha(e_a)=0$ due to Formula~(\ref{stab:leaf}) and (\ref{out:leaf}).
	This completes our claim in this direction and hence $\alpha\models \varphi_\Comp$.
	
	The argument that $\alpha$ is unique for the set $S$ follows similar reasoning as in the proof of Theorem~\ref{thm:adm:correct}. Consider another assignment $\beta\neq\alpha$ over $E\cup A\cup O$ that agrees with $\alpha$ over extension variables.
	As before, since $\beta$ sets variable in $E$ exactly as $\alpha$, the same applies to variables in $A$ (due to satisfaction of Formulas~(\ref{out:leaf}) and (\ref{out:root})) and $O$ (due to  Formulas~(\ref{out:leaf}) and (\ref{out:root})).
	This leads to a contradiction and hence, for every complete extension $S$ in $F$, there is exactly one satisfying assignment $\alpha$ to variables $E\cup A$.
	
	``$\Longleftarrow$'':
	Suppose there is an assignment $\theta$ over variables $E,A,O$ and $D$ such that $\theta\models \varphi_\CComp$.
	Then, we construct a complete extension $S=\{a\mid \theta(e_a)=1\}$ for $F$.
	$S$ is clearly conflict free due to Theorem~\ref{thm:stab:correct} since $\theta$ satisfies Formulas~(\ref{stab:leaf})--(\ref{stab:edge}).
	Moreover, $S$ is admissible due to the proof of Theorem~\ref{thm:adm:correct} and Formulas~(\ref{adm:leaf})--(\ref{adm:root}).
	To prove that $S$ is in fact compete, suppose to the contrary, there is an argument $a\in A\setminus S$ such that for every $(a',a)\in R$, there is some $s\in S$ with $(s,a')\in R$.
	Due to Formulas~(\ref{out:root}), it follows that $\theta(o_c^{rt})=0$ for each $c\in\colors(rt)$, in particular, for the color $c$ of argument $a$. 
	We prove that this is not possible since the argument $a$ with $col(a)=c$ is incorrectly left out from $S$.
	
	Since $a\not\in S$, we have $\theta(e_a)=0$ as well as $\theta(e^b_c)= 0$ due to Formula~(\ref{stab:leaf}) for the initial operation $b$ for argument $a$ and color $c$.
	Hence, $\theta(o_c^b)=1$ due to Formula~(\ref{out:leaf}) for the same operation $b$ and color $c$.
	Since there is an argument $a'$ with $(a',a)\in R$, we look at the satisfaction of Formulas~(\ref{out:defend}) for the edge introduction operation $b=\eta_{c',c}$ where $a'$ has color $c'$ in b.
	Since, $(s,a')\in R$ for each attacker $a'$ of $a$, we have that $\theta(e^{b_s}_{c_s})=1$ for some edge introduction operation $b_s$ with $\colors(s)=c_s$ in $b_s$.
	As a result,  we have $\theta(d_{c'}^{\geq b})=1$ due to Formula~(\ref{def:union})--(\ref{def:rel}). 
	But this implies that, $\theta(o_c^{b})=1$ is true for this edge introduction operation $b$, and also propagated to every ancestor operation of $b$.
	We next consider the following two cases when this value can change, and prove that both lead to a contradiction.
	
	(I). In some edge introduction operation $b$, Formula~(\ref{out:att}) applies.
	Then, due to the edge introduction $b=\eta_{c,c'}$, we have $\theta(o_c^b)=0$ iff $\theta(e_{c'}^b)=1$.
	That is, $a$ attacks an argument $s'$ of color $c'$ and $s'\in S$. But this leads to a contradiction as $S$ can not defend $a$, being a conflict-free set.
	
	(II). In some edge introduction operation $b$, Formula~(\ref{out:defend}) applies.
	Then, due to the edge introduction $b=\eta_{c',c}$, we have $\theta(o_c^b)=0$ iff $\theta(d_{c'}^{\geq b})=0$. In other words, $a$ is attacked by an argument $a'$ of color $c'$ but not defended against it.
	This  again leads to a contradiction since $S$ defends $a$ as per our assumption.

	\noindent As a result, we must have $\theta(o_c^{rt})=1$.
	However, this contradicts to the fact that $\theta\models \varphi_\CComp$ (in particular, due to Formula~(\ref{out:root})).
	
	
	This concludes correctness and establishes the claim.
\end{proof}

\compcwaw*
\begin{proof}
	Following analogous reasoning to the proof of Theorem~\ref{thm:adm:cwaw}, we highlight the changes due to Formulas~(\ref{out:leaf})--(\ref{def:rt}).
	This is established by adding (1) a copy of variables in $A$ renamed to serve `out' variables $O$, (2) having \emph{out} versions of each colors ($o_c$) corresponding to each version of the attack color ($a_c$), and (3) adding 
	edges corresponding to clauses arising from Formulas~(\ref{out:leaf})--(\ref{out:root}) by simply adapting the same procedure as for Formulas~(\ref{adm:leaf})--(\ref{adm:root}).
	Finally, for each operation we need to add 
	clauses corresponding to Formulas~(\ref{def:rt})--(\ref{def:union}), which can be achieved by copying their clauses accordingly and using additional colors to model defeat variables.
	Observe that the additional colors are needed to add 
	clauses for out and defeat variables.
	It can be noticed that the number of colors required now doubles, compared to the case of admissible semantics.
	The additional $11k+2$ colors are needed due to the positive/negative and current/child/expired versions of attack and defeat colors used in the expression construction of $\varphi_\CComp$.
	As a result, the total number of colors being $2\cdot(11k+2)$ still increases only linearly.
\end{proof}

\paragraph{Meta Result on DNF Matrices}
\thmDNF*
\begin{proof}
	It suffices to encode the CNF $\varphi$ into a DNF formula along the k-expression. 
	To this end, we require (universally quantified) auxiliary variables of the form $sat_c^b$,
	indicating that $\varphi$ is satisfied for color $c$ up to $b$ as well as $t_c^b$ or $f_c^b$,
	which indicates that $c$ in $b$ has a variable that is true or false, respectively.
	We construct DNF $\neg\varphi'$ with $\varphi'$ given next:
	
	\begin{flalign}
		%
		%
		&
		sat_c^b, \;\, t_{c}^b \leftrightarrow \hspace{-.15em}v, \;\, f_{c}^b \leftrightarrow \hspace{-.15em}\neg v && \hspace{-2.5em}\text{initial $b$, variable $v$ of color $c$}\raisetag{3.15em}\\
		& \neg{}sat_c^b, \quad \neg t_{c}^b,\quad \neg f_c^b && \hspace{-3em}\text{initial $b$, clause $f$ of color $c$}\raisetag{1.05em}\\
		&sat_{c}^b \leftrightarrow \hspace{-3.5em}\bigwedge_{\substack{b'\in\children(b): c\in \colors(b')}} \hspace{-3.25em}sat_{c}^{b'},	&& \hspace{-3.5em}\text{disjoint union $b$, every $c\in \colors(b)$}\raisetag{2.45em}\notag\\[-.45em]
		&t_{c}^b \leftrightarrow \hspace{-2em}\bigvee_{\substack{b'\in\children(b): c\in \colors(b')}} \hspace{-3.25em}t_{c}^{b'}, \qquad f_{c}^b \leftrightarrow \hspace{-2em}\bigvee_{\substack{b'\in\children(b): c\in \colors(b')}} \hspace{-3.25em}f_{c}^{b'}\hspace{-20em}\\	
		&sat_c^b \leftrightarrow \hspace{-2.5em}\bigwedge_{\text{if $b$ relabeling\ }c'\mapsto c} \hspace{-2.5em} sat_{c'}^{b'}\hspace{-.25em}\wedge \hspace{-.1em}\bigwedge_{\text{if\ }c{\in} \colors(b')}\hspace{-1.75em} sat_{c}^{b'},
		&& \hspace{1em}\text{relabeling $b$, every $c\in$}\notag\\[-1.7em]
		&&&\hspace{1em} \colors(b), b'\in\children(b)\notag\\[-.35em]
		&t_c^b \leftrightarrow \hspace{-2.5em}\bigvee_{\text{if $b$ relabeling\ }c'\mapsto c} \hspace{-2.5em} t_{c'}^{b'}\hspace{-.25em}\vee \hspace{.85em}\bigvee_{\text{if\ }c{\in} \colors(b')}\hspace{-1.75em} t_{c}^{b'},
		\quad f_c^b \leftrightarrow \hspace{-2.5em}\bigvee_{\text{if $b$ relabeling\ }c'\mapsto c} \hspace{-2.5em} f_{c'}^{b'}\hspace{-.25em}\vee \hspace{.65em}\bigvee_{\text{if\ }c{\in} \colors(b')}\hspace{-1.75em} f_{c}^{b'}\hspace{-100em}\\
		&\hspace{-.1em}sat_{c}^b \leftrightarrow sat_{c}^{b'} \hspace{-.2em}\vee\hspace{-2em} \bigvee_{\substack{\text{if $b$ $+$edge}\\\text{introduce $(c',c)$}}}\hspace{-2em}t_{c'}^b \vee \bigvee_{\substack{\text{if $b$ $-$edge}\\\text{introduce $(c',c)$}}}\hspace{-2em}f_{c'}^b,\hspace{-10em}	&& \hspace{1em}\text{edge introduce $b$, every}
	\notag\\[-2.5em]
	&&&  \hspace{1.5em}c{\in} \colors(b), b'{\in}\children(b)\notag\\[.5em]
	&\hspace{-.1em}t_{c}^b \leftrightarrow t_{c}^{b'}, \quad f_{c}^b \leftrightarrow f_{c}^{b'} \hspace{-10em}	&& 
\end{flalign}
In the end, we require satisfiability for every color:
\begin{flalign}
\label{dnf:sat}{sat}_c^{rt} 	&& \text{for root colors\ } c\in \colors(rt)
\end{flalign}
The idea of this approach is to guide satisfiability of clauses ($sat_c^b$) along,
thereby keeping the information of whether a variable is assigned false ($f_c^b$)
or true ($t_c^b$).
Then, the resulting DNF matrix will be $\varphi'' = \neg\varphi' \wedge \psi$.
It is easy to see that the directed incidence clique-width is linearly preserved.
\end{proof}

\paragraph{Preferred Extensions}
\prefcorrect*
\begin{proof}
Observe that, following Theorem~\ref{thm:adm:correct}, there exists a one-one correspondence between admissible extension $S$ in $F$ and satisfying assignments $\alpha$ for $\varphi_\CAdm$.
Therefore, it suffices to prove that the Formulas~(\ref{pref:leaf})--(\ref{pref:root}) only allow subset-maximal admissible (hence preferred) extensions.

``$\Longrightarrow$''
Let $S$ be a preferred extension in $F$. Then, we construct a unique satisfying
assignment $\alpha$ for $\varphi_\CPref$.
We set, $\alpha(e_a)=1$ for every $a\in S$ and $\alpha(e_a)=0$ otherwise.
Furthermore, for remaining variables in $E$ and $A$, we map them exactly as in the proof of Theorem~\ref{thm:adm:correct}.
Then, $\alpha\models\varphi_\CAdm$ since $S$ is admissible.

Now, we prove that $\varphi_\CPref[\alpha]\equiv 1$.
Assume towards a contradiction that there exists an assignment $\beta$ over variables in $E^*,A^*,S$, such that $\varphi_\CPref[\alpha][\beta]\equiv 1$.
In particular, this implies that $\beta\models\varphi^*_\CAdm$ due to Formulas~(\ref{pref:adm}).
Moreover, there is some color $c$ such that $\beta(s_c^{rt})=1$. due to Formulas~\ref{pref:root}.
However, this implies that there is some initial operation $b$ such that $\beta(s_{c'}^b)=1$ for some color (possibly) $c'=c$ by Formulas~(\ref{pref:union})--(\ref{pref:edge}).
But this implies that $\beta(e_c^{* b})=1$ and  $\alpha(e_c^b)=0$ must be true.
We construct an admissible set $S^*$ via the assignment $\beta$ to variables in $E^*$. That is, we let $S^*=\{a\mid a\in F, \beta(e_a^*)=1\}$.
Then $S$ is indeed admissible since $\beta$ satisfies $\varphi_\CAdm^*$ due to Formulas~(\ref{pref:adm}).
Moreover, since $\varphi_\CPref[\alpha][\beta]\equiv 1$, the two assignments together satisfy Formulas~(\ref{pref:subset}) and hence $S\subseteq S^*\subseteq A$ by definition of $S^*$ and $\alpha$.
Furthermore, we have $S\subsetneq S^*$ due to Formula~(\ref{pref:leaf}).
However, this leads to a contradiction to $S$ being preferred since $S^*$ is also admissible.
Consequently, there is no such assignment $\beta$ satisfying Formulas~(\ref{pref:leaf})--(\ref{pref:root}).
Finally, observe that there is no assignment $\alpha'$ over variables $E\cup A$ such that $\alpha'\neq\alpha$ and $\alpha'$ is a satisfying assignment for $\varphi_\CPref$ if $\{a\mid a\in S\}=\{a\mid \alpha'(e_a)=1\}$. This holds due to the same claim for $\varphi_\CAdm$, since there is bijection between admissible extensions and satisfying assignments for $\varphi_{\CAdm}$.

``$\Longleftarrow$''.
Let $\alpha$ be a satisfying assignment over $E\cup A$ such that $\varphi_\CPref[\alpha]\equiv 1$.
Then, we construct an extension $S=\{a\mid a\in A,\alpha(e_a)=1\}$ and prove that $S$ is a preferred in $F$.
$S$ is indeed admissible since $\alpha\models \varphi_\CAdm$.
Now, assume towards a contradiction and let $S'$ be another admissible set in $F$ such that $S'\supset S$.
Then, we construct an assignment $\beta$ using $S'$ such that $\varphi_\CPref[\alpha][\beta]\equiv 1$.
For $a\in A$, we define $\beta(s_c^b)=1$ if there is an argument $a\in A$ in the leaf operation of our $k$-expression with color $c$ such that $a\in S'\setminus S$, and we set $\beta(s_c^b)=0$, otherwise.
Furthermore, we for each non-leaf $b$, we set $\beta(s_c^b)=1$ iff either (1) $\beta(s_c^{b'})=1$ for $b'\in \child(b)$, if $b$ is disjoint union, or (2) either $\beta(s_c^{b'})=1$ or $\beta(s_{c'}^{b'})=1$ for $b'\in \child(b)$, if $b$ is relabeling operation with for $c'\mapsto c$, or (3) $\beta(s_c^{b'})=1$ for $b'\in \child(b)$, if $b$ edge introduction operation.
Moreover, we define the evaluation of $\beta$ over remaining variables in $E^*\cup A^*$ in the same manner as in the poof of Theorem~\ref{thm:adm:correct} via mapping the given admissible set to a satisfying assignment $\alpha$ for $\varphi_{\CAdm}$.
Then, clearly $\beta$ satisfies Formulas~(\ref{pref:leaf})--(\ref{pref:edge}) by construction.
Finally, $\beta$ satisfies Formula~\ref{pref:root} since $\beta(s_c^b)=1$ in at initial leaf operation $b$ due to the existence of an argument $a\in S'\setminus S$.
But this leads to a contradiction to $\varphi_\CPref[\alpha]\equiv 1$ since there can be no such assignment $\beta$.
Consequently, for every satisfying assignment $\alpha$ to variables $E\cup A$ of $\varphi_\CPref$, $F$ admits exactly one preferred extension $S$.
\end{proof}

\prefcwaw*
\begin{proof}
Building on top of Theorem~\ref{thm:adm:cwaw}, we highlight the changes due to Formulas~(\ref{pref:adm})--(\ref{pref:root}).

First of all, we need $k'= 11k+2$ colors to construct a $k'$-expression for $\varphi_\CAdm$.
Then, the $k'$-expression for $\varphi^*_\CAdm$ requires the same number of colors, so that we can draw the incidence graph for $\varphi_\CAdm$ and $\varphi^*_\CAdm$ at the same time. 
Finally, to cover Formulas~(\ref{pref:leaf})--(\ref{pref:root}), we require a fresh color with all five versions as before, called \emph{inequality} colors, denoted as $\{i^o_c, ic^o_c\}\cup \{ix_c\}$ for $o\in \{+,-\}$.
Allowing different colors corresponding to variables in $\varphi_{\CAdm}$ and $\varphi^*_{\CAdm}$ had the advantage that clauses corresponding to Formulas~(\ref{pref:leaf}) and (\ref{pref:subset}) are also covered.
This is the case since literals $e_c^b$ and $e_c^{*b}$ both can be colored differently, and hence the edges for corresponding clauses can be added.

Consequently, $2(11k+2)$ colors are needed due to $\varphi_{\CAdm}$ and $\varphi^*_{\CAdm}$, and additional $5k$ colors are required to cover Formulas~(\ref{pref:leaf})--(\ref{pref:root}).
As a result, the total number of colors being $k^p=27k+4$ still increases only linearly in the construction of the $k^p$-expression $\ptlong^p$ for $\varphi_\CPref$.
\end{proof}

\paragraph{Semi-Stable Extensions}

\stagcorrect*
\begin{proof}
Once again, we begin by observing a one-one correspondence between admissible extension $S$ in $F$ and satisfying assignments $\alpha$ for $\varphi_\CAdm$ due to  Theorem~\ref{thm:adm:correct}.
Therefore, it suffices to prove that the Formulas~(\ref{semi:leaf})--(\ref{semi:root}) together with Formulas~(\ref{stab:defeat-leaf})--(\ref{stab:defeat-edge}) allow range-maximal admissible (hence semi-stable) extensions.

``$\Longrightarrow$''
Let $S$ be a semi-stable extension in $F$. Then, we construct a unique satisfying
assignment $\alpha$ for $\varphi_\CSemiSt$.
We set, $\alpha(e_a)=1$ for every $a\in S$ and $\alpha(e_a)=0$ otherwise.
Furthermore, for remaining variables in $E$ and $A$, we map them exactly as in the proof of Theorem~\ref{thm:adm:correct}.
Then, $\alpha\models\varphi_\CAdm$ since $S$ is admissible.
Finally, we map $\alpha$ to variables in $D$ as in the proof of Theorem~\ref{thm:stab:correct}. That is, we set $\alpha(d_c^b)=1$ for every color $c$
and operation~$b$ of the $k$-expression, if and only if,
(1) $\alpha(e_c^b)=1$, if $b=c(a)$ is an initial $k$-graph with $c\in\colors(b)$;
(2) $\alpha(d_c^{b'})=1$ for each~$b'\in\child(b)$
s.t. $c\in\colors(b')$, if $b= \oplus$;
(3) $\alpha(d_{c'}^{b'})=1$ and $\alpha(d_c^{b'})=1$, if $b= \rho_{c'\mapsto c}$ with $c\in\colors(b)\cap \colors(b')$; or
(4) either $\alpha(d_c^{b'})=1$ or $\alpha(e_{c'}^{b})=1$, if $b= \eta_{c',c}$ with $c\in \colors(b)$.
Then, $\alpha$ satisfies Formulas~(\ref{stab:defeat-leaf})--(\ref{stab:defeat-edge}) by construction.
Now, we prove that $\varphi_\CSemiSt[\alpha]\equiv 1$.
Assume towards a contradiction that there exists an assignment $\beta$ over variables in $E^*,A^*, D^*S$, such that $\varphi_\CSemiSt[\alpha][\beta]\equiv 1$.
In particular, this implies that $\beta\models\varphi^*_\CAdm$ due to Formulas~(\ref{semi:adm}).
Moreover, there is some color $c$ such that $\beta(s_c^{rt})=1$. due to Formulas~(\ref{semi:root}).
However, this implies that either (1) there is some initial operation $b$ such that $\beta(s_{c'}^b)=1$ by Formulas~(\ref{semi:union})--(\ref{semi:relabel}), or (2) there is some edge introducing operation $b$ such that $\beta(d_{c'}^{*b})=1$ but $\alpha(d_{c'}^{b})=0$ by Formulas~(\ref{semi:edge}), for some color (possibly) $c'=c$.
But both cases implies that $\beta(d_c^{* b})=1$ and  $\alpha(d_c^b)=0$ must be true for at least one color $c$.
We construct an admissible set $S^*$ via the assignment $\beta$ to variables in $E^*$ by letting $S^*=\{a\mid a\in F, \beta(e_a^*)=1\}$.
Then $S$ is indeed admissible since $\beta$ satisfies $\varphi_\CAdm^*$ due to Formulas~(\ref{semi:adm}).
Moreover, since $\varphi_\CSemiSt[\alpha][\beta]\equiv 1$, the two assignments together satisfy Formulas~(\ref{semi:root}) and hence $S^+_R\subseteq S^{*+}_R\subseteq A$ by definition of $S^*$ and $\alpha$.
Furthermore, we have $S^+_R\subsetneq S^{*+}_R$ due to Formula~(\ref{semi:leaf}).
However, this leads to a contradiction to $S$ being semi-stable since $S^*$ is also admissible and $S^{*+}_R$ has more arguments than $S^{+}_R$.
Consequently, there is no such assignment $\beta$ satisfying Formulas~(\ref{semi:leaf})--(\ref{semi:root}).
Finally, observe that there is no assignment $\alpha'$ over variables $E\cup A$ such that $\alpha'\neq\alpha$ and $\alpha'$ is a satisfying assignment for $\varphi_\CSemiSt$ if $\{a\mid a\in S\}=\{a\mid \alpha'(e_a)=1\}$. This holds due to the same claim for $\varphi_\CAdm$, since there is bijection between admissible extensions and satisfying assignments for $\varphi_{\CAdm}$.

``$\Longleftarrow$''.
Let $\alpha$ be a satisfying assignment over $E\cup A$ such that $\varphi_\CSemiSt[\alpha]\equiv 1$.
Then, we construct an extension $S=\{a\mid a\in A,\alpha(e_a)=1\}$ and prove that $S$ is a semi-stable in $F$.
$S$ is indeed admissible since $\alpha\models \varphi_\CAdm$.
Now, assume towards a contradiction and let $S'$ be another admissible set in $F$ such that $S^{'+}_R\supset S^+$.
Then, we construct an assignment $\beta$ using $S'$ such that $\varphi_\CSemiSt[\alpha][\beta]\equiv 1$.
For $a\in A$, we define $\beta(s_c^b)=1$ if there is an argument $a\in A$ in the leaf operation of our $k$-expression with color $c$ such that $a\in S^{'+}_R\setminus S^+_R$, and we set $\beta(s_c^b)=0$ otherwise.
Furthermore, we for each non-leaf $b$, we set $\beta(s_c^b)=1$ iff either (1) $\beta(s_c^{b'})=1$ for $b'\in \child(b)$, if $b$ is disjoint union, or (2) either $\beta(s_c^{b'})=1$ or $\beta(s_{c'}^{b'})=1$ for $b'\in \child(b)$, if $b$ is relabeling operation with for $c'\mapsto c$, or (3) either $\beta(s_c^{b'})=1$, or $\beta(d_c^{*b})=1$ and $\alpha(d_c^b)=0$ for $b'\in \child(b)$, if $b$ edge introduction operation.
Moreover, we define the evaluation of $\beta$ over remaining variables in $E^*\cup A^*$ (resp., $D^*$) in the same manner as in the poof of Theorem~\ref{thm:adm:correct} (Theorem~\ref{thm:stab:correct}). 
Then, clearly $\beta$ satisfies (\ref{stab:defeat-leaf})--(\ref{stab:defeat-edge}) and  Formulas~(\ref{semi:leaf})--(\ref{semi:edge}) by construction.
Finally, $\beta$ satisfies Formula~\ref{semi:root} since $\beta(s_c^b)=1$ in either some initial leaf operation or in some edge introduction operation $b$, due to the existence of an argument $a\in S{'^+}_R\setminus S^+_R$.
But this leads to a contradiction to $\varphi_\CSemiSt[\alpha]\equiv 1$ since there is no such assignment $\beta$.
Consequently, for every satisfying assignment $\alpha$ to variables $E\cup A$ of $\varphi_\CSemiSt$, $F$ admits exactly one preferred extension $S$.
\end{proof}

\stagcwaw*
\begin{proof}
The proof is analogous to the proof of Theorem~\ref{thm:pref:cwaw} and builds on top of Theorem~\ref{thm:adm:cwaw}.
We again highlight the changes due to Formulas~(\ref{stab:defeat-leaf})--(\ref{stab:defeat-edge}) and  Formulas~(\ref{semi:adm})--(\ref{semi:root}).
As before, we need $k'= 2(11k+2)$ colors to construct a $k'$-expression for $\varphi_\CAdm$ and $\varphi^*_\CAdm$.
To cover Formulas~(\ref{semi:leaf})--(\ref{semi:root}), we require fresh \emph{inequality} colors $\{i^o_c, ic^o_c\}\cup \{ix_c\}$ for $o\in \{+,-\}$.
However, now we need additional colors to cover \emph{defeated} variables in Formulas~(\ref{stab:defeat-leaf})--(\ref{stab:defeat-edge}), similar to the proof of Theorem~\ref{thm:stab:correct}.
Consequently, we additionally require colors $\{d^o_c, dc^o_c\}\cup \{dx_c\}$ for $o\in \{+,-\}$ as in the case of stable semantics.
Since we use different colors corresponding to variables in $\varphi_{\CAdm}$ and $\varphi^*_{\CAdm}$, as well as for variables in $D$ and $D^*$, the Formulas~(\ref{semi:leaf}) and (\ref{semi:root}) are all covered.

Consequently, $2(11k+2)$ colors are needed due to $\varphi_{\CAdm}$ and $\varphi^*_{\CAdm}$, and additional $2(5k)$ are required to cover Formulas~(\ref{semi:leaf})--(\ref{semi:root}).
As a result, the total number of colors being $k^s=32k+4$ still increases only linearly in the construction of the $k^s$-expression $\ptlong^s$ for $\varphi_\CSemiSt$.
\end{proof}

\paragraph{Lower Bounds with Cliquewidth}

\admlb*
\begin{proof}
We reduce from 3SAT by adapting a known reduction~\cite{DvorakWoltran10}.
For every variable in $\varphi$ we create arguments $x$ and $\overline{x}$
and attacks $x\rightarrowtail \overline{x}$, $x\leftarrowtail \overline{x}$.
Also, we create an argument $sat$ and an argument $c$
for every clause $c$, as well as attacks $c\rightarrowtail sat$.
For every literal $l$ in $c$, we add an attack $l\rightarrowtail{}c$.
The construction ensures that if there is an admissible extension containing
$sat$, all clauses need to be attacked, which works if we choose
arguments that are consistent, satisfying $\varphi$.

Observe that the construction is such that $F$ is indeed almost the directed incidence graph of $\varphi$.
Indeed, for any given k-expression, we can double all colors,
where the copy is for arguments $\overline{x}$ representing negated variables.
Then, the attack / directed edge from $l\rightarrowtail{}c$ can be easily created
whenever the corresponding directed edge is created.
Also, $x\rightarrowtail \overline{x}$, $x\leftarrowtail \overline{x}$ can be easily covered in the leaves.
Finally, $c\rightarrowtail sat$ can be covered in a leaf operation, where $c$ is introduced,
as we can easily just add an additional color for $sat$ in every operation of the k-expression.
Consequently, since the clique-width of $F$ only linearly increases from $w$, the
result holds.
\end{proof}

\preflb*
\begin{proof}[Proof (Idea)]
For preferred extensions, we modify a construction for skeptical acceptance of an argument~\cite{DunneBench-Capon02a},
reducing from $\forall\exists$-QBF.
For
semi-stable and stage extensions, we modify a construction for credulous acceptance of an argument~\cite{DvorakWoltran10}, which reduces from $\forall\exists$-QBF (and inverts).
Both constructions are quite similar and slight modifications of the construction used in the proof of Theorem~\ref{thm:lb}, but those do not significantly change the structure.
Indeed, we can double the number of used colors (for auxiliary arguments per universal argument variable)
and add a constant number of additional colors.
The remainder works as in Theorem~\ref{thm:lb}.
\end{proof}

\end{document}